\documentclass[twoside]{article}

\usepackage[preprint]{ctrl}
\DeclareMathOperator{\Span}{span}
\DeclareMathOperator{\rng}{rng}

\usepackage{amsmath,amssymb}
\usepackage{amsthm}

\newtheorem{theorem}{Theorem}[section]
\newtheorem{lemma}[theorem]{Lemma}

\newtheorem{assumption}[theorem]{Assumption}
\newtheorem{proposition}[theorem]{Proposition}
\theoremstyle{definition}
\newtheorem{definition}[theorem]{Definition}
\theoremstyle{remark}
\newtheorem{remark}[theorem]{Remark}

\usepackage[round]{natbib}

\usepackage{hyperref}

\usepackage{booktabs}
\usepackage{graphicx}
\usepackage{algorithm}
\usepackage{algpseudocode}
\usepackage{array}

\usepackage{xcolor}
\definecolor{sattarcolor}{HTML}{A65E2E}

\usepackage{soul}

\begin{document}

% If your paper is accepted and the title of your paper is very long,
% the style will print as headings an error message. Use the following
% command to supply a shorter title of your paper so that it can be
% used as headings.
%
%\runningtitle{I use this title instead because the last one was very long}

% If your paper is accepted and the number of authors is large, the
% style will print as headings an error message. Use the following
% command to supply a shorter version of the author names so that
% they can be used as headings (for example, use only the surnames)
%
%\runningauthor{Surname 1, Surname 2, Surname 3, ...., Surname n}

\twocolumn[

\aistatstitle{Linear Bandits under Exact Sliding-Window Constraints}

% \aistatsauthor{ Author 1 \And Author 2 \And  Author 3 }

% \aistatsaddress{ Institution 1 \And  Institution 2 \And Institution 3 } ]

\aistatsauthor{
Seyed Mohammad Hadi Hosseini
\And
Yasin Abbasi-Yadkori
\And
Sattar Vakili
}

\aistatsaddress{
\texttt{hadi.hosseini0171@gmail.com}
\And
\texttt{yasin.abbasi@gmail.com}
\And
\texttt{s.vakili@ucl.ac.uk}
}

]

\begin{abstract}

We study linear bandits under \emph{exact sliding-window constraints}, where every consecutive block of actions must belong to a prescribed feasible set. In the offline setting, where the reward function is known, we show that convexity and cyclic-shift invariance make a stationary solution optimal when \(w\mid T\) and within an additive \(\mathcal{O}(w)\) gap otherwise. In the online setting, 
we show that geometric structure alone is insufficient for learning, and sublinear regret can be impossible. We introduce a \emph{transition diameter} \(\tau\) that quantifies feasible reachability and develop a rare-switching OFUL algorithm with regret \(\widetilde{\mathcal O}(d\sqrt{T}+\tau d+w)\) against the offline-optimal feasible trajectory. Finally, we remove cyclic invariance and consider general sliding-window constraints, where optimal behavior may be non-stationary. We represent recent action history as the state of a finite-memory control problem and introduce a \emph{history-state diameter} \(D\) that measures feasible communication between viable histories. Combining optimistic remaining-horizon planning with rare policy updates, we obtain a regret bound of \(\widetilde{\mathcal O}(d\sqrt{T}+dD+w)\). We evaluate our approach on real-world and synthetic benchmarks, showing that it maintains exact feasibility while achieving reward and regret comparable to baselines with substantially fewer policy updates.

\end{abstract}

\section{Introduction}
\label{sec:introduction}

Many sequential decision-making problems impose constraints not only on the current action, but also on how actions evolve over time. This occurs, for example, in online resource allocation~\citep{zhang2026online},
energy systems with ramping and rolling-horizon operation~\citep{mathias2024model},
constrained adaptive control~\citep{chattopadhyay2025model},
and safety-critical control~\citep{ames2017control,capone2025learning}. In these settings, a decision that looks good now may restrict future choices because its consequences persist over time. We model this dependence through \emph{sliding-window constraints}. At each round \(t\), the learner chooses an action \(x_t\), and every consecutive block of \(w\) actions must satisfy
\[
(x_{t-w+1},\ldots,x_t)\in\mathcal C,
\qquad
t=w,\ldots,T,
\]
for a prescribed feasible set \(\mathcal C\).

Unlike pointwise constraints~\citep{amani2019linear,gangrade2024safe}, which enforce feasibility separately at each round, and long-term constraints~\citep{mahdavi2012trading,jenatton2016adaptive,yi2021regret,bernasconi2025noregret}, which control feasibility only in aggregate over time, our setting requires every realized sliding window to satisfy the constraint exactly. Thus, exploration itself must respect the temporal constraint. The problem can be viewed as a finite-memory control problem in which the most recent \(w-1\) actions form the state, connecting our formulation to constrained MDPs~\citep{altman1999constrained,stradi2024online}. Unlike a generic constrained MDP, where rewards and transition dynamics may depend on the state, in our online linear-bandit setting the history state determines which actions are feasible next, while rewards are governed by a single \(d\)-dimensional linear parameter shared across all histories. A detailed comparison with linear, tabular, and constrained reinforcement learning is provided in Appendix~\ref{app:relation_rl}.

Our analysis is organized around three structural ideas: \emph{geometry}, which shapes the structure of feasible solutions; \emph{reachability}, which determines whether desirable feasible behaviors can be connected; and \emph{planning}, which becomes necessary when optimal behavior is non-stationary.

We first consider the offline problem, where the reward function is known. When \(\mathcal C\) is convex and invariant under cyclic shifts, cyclic symmetrization shows that stationary behavior is exactly optimal when \(w\mid T\). For arbitrary horizons, the best stationary solution is within an additive \(\mathcal{O}(w)\) 
% \sattar{Should be $\mathcal{O}$, please check for notation consistency throughout,}
of the offline optimum (Theorem~\ref{thm:stationary_near_optimality}).
% \sattar{For presenting the results in general it is better to have a subjective tone, e.g., When \(\mathcal C\) is convex and invariant under cyclic shifts, we show that stationary behavior is exactly optimal when \(w\mid T\) (See Lemma/Theorem~xx). }

In the online setting,
where the reward function is unknown, the geometric structure alone is not enough: even if good stationary feasible actions exist, moving between them may be impossible without violating the constraint, making sublinear regret impossible in general (Proposition~\ref{prop:linear-regret-without-transitions}). We thus introduce a \emph{transition diameter} \(\tau\), which quantifies feasible reachability between stationary histories, and develop a rare-switching algorithm based on OFUL~\citep{abbasi2011improved} that preserves exact sliding-window feasibility and achieves expected pseudo-regret
\(\widetilde{\mathcal O}(d\sqrt{T}+\tau d+w)\)
(Theorem~\ref{thm:expected-offline-regret}). 
% \sattar{This is the reference for OFUL, not for rare switching one.}

Finally, we move beyond cyclic invariance and consider general sliding-window constraints, where optimal behavior may be non-stationary. We use the most recent \(w-1\) actions as the state of a finite-memory control problem and introduce a \emph{history-state diameter} \(D\), which measures feasible communication between viable histories. We then combine optimistic remaining-horizon planning with rare policy updates to obtain expected pseudo-regret \(\widetilde{\mathcal O}(d\sqrt{T}+dD+w)\)
(Theorem~\ref{thm:noncyclic_regret}).

Furthermore, we study two real-world sequential decision-making applications: online recommendation with diversity constraints and online LLM routing with cost constraints. In the recommendation setting, the learner aims to maximize user reward while preventing overly repetitive recommendations of the same categories through sliding-window diversity constraints. In the LLM routing setting, the learner selects among language models with different response qualities and inference costs while satisfying a sliding-window resource budget. We complement these experiments with controlled synthetic studies that illustrate the roles of reachability and planning identified by our theoretical analysis.

Our contributions are threefold.
\textbf{(i)} For convex, cyclically invariant sliding-window constraints, we show that stationary behavior is optimal when \(w \mid T\) and within \(\mathcal{O}(w)\) of the offline optimum otherwise. We further introduce the transition diameter \(\tau\) and a feasibility-preserving rare-switching OFUL algorithm 
with expected pseudo-regret \(\widetilde{\mathcal O}(d\sqrt{T}+\tau d+w)\).
\textbf{(ii)} For general sliding-window constraints, we formulate the problem as finite-memory control, introduce the history-state diameter \(D\), and obtain expected pseudo-regret \(\widetilde{\mathcal O}(d\sqrt{T}+dD+w)\) via optimistic remaining-horizon planning with rare updates.
\textbf{(iii)} Experiments on real-world recommendation, LLM routing and synthetic tasks show that our approach maintains exact feasibility, reduces regret in non-cyclic settings, and requires substantially fewer policy updates.
\section{Related Work}
\label{sec:related_works}

\begin{table*}[t]
\centering
\caption{Comparison with related sequential-learning frameworks.}
\label{tab:related_comparison}
\small
\setlength{\tabcolsep}{8pt}
\renewcommand{\arraystretch}{1.15}
\begin{tabular}{lcccc}
\toprule
Setting
& \shortstack{Action-wise\\constraints}
& \shortstack{Horizon-wide\\constraints}
& \shortstack{History\\dependence}
& \shortstack{Exact-window\\constraints}
\\
\midrule
Safe/constrained bandits
& \checkmark & & & \\
Long-term constrained learning
& & \checkmark & & \\

Online learning with memory
& & & \checkmark & \\

Switching-constrained bandits
& & \checkmark & \checkmark & \\

\textbf{This work}
& \checkmark$^{\dagger}$
& \checkmark$^{\ddagger}$
& \checkmark
& \checkmark
\\
\bottomrule
\end{tabular}

\vspace{1mm}

\begin{minipage}{0.92\textwidth}
\footnotesize
\(\dagger\) Pointwise feasibility is recovered when \(w=1\).
\(\ddagger\) Horizon-wide constraints can be represented when \(w=T\).
\end{minipage}
\end{table*}

\paragraph{Constrained online learning.}
Online decision-making under feasibility and resource constraints has been studied extensively. Long-term constrained methods~\citep{mannor2009online,mahdavi2012trading,jenatton2016adaptive,yi2021regret,bernasconi2025noregret} typically allow per-round violations while controlling cumulative constraint violation. Bandits with knapsacks and related models~\citep{badanidiyuru2018bandits,agrawal2019bandits,castiglioni2024online} impose global budget or convex constraints over the horizon, while safe and constrained bandits~\citep{amani2019linear,gangrade2024safe} study pointwise feasibility at each round. Exact sliding-window constraints include pointwise feasibility as a special case when \(w=1\), while setting \(w=T\) allows horizon-wide constraints to be represented within the same framework.

\paragraph{Temporal constraints and learning with memory.}
Several related models~\citep{liakopoulos2019cautious,agrawal2018bandits,anava2015online,abdullah2026constrained} consider temporal structure induced by recent decisions. Cautious regret~\citep{liakopoulos2019cautious} imposes window-based resource constraints on the benchmark while allowing residual constraint violations, whereas temporal stochastic bandits~\citep{agrawal2018bandits} allow rewards to depend on recent action history. Online learning with memory~\citep{anava2015online,hebbar2024online} similarly models losses that depend on past decisions. More recently, online convex optimization with finite-memory losses and constraints~\citep{abdullah2026constrained} considers cumulative constraint violation rather than requiring every sliding window to satisfy the constraint. In contrast, our model couples decisions through exact sliding-window feasibility: every window must remain feasible while maximizing reward, so exploration must respect these temporal constraints throughout learning.

\paragraph{Constrained control and finite-memory planning.}
Constrained Markov decision processes~\citep{altman1999constrained,stradi2024online} provide a general framework for sequential decision-making under state and resource constraints, while safe and learning-based control~\citep{ames2017control,capone2025learning} seeks to maintain safety throughout interaction. Our setting is specified directly through overlapping constraints on recent actions. Without cyclic symmetry, the most recent \(w-1\) actions naturally form a finite-memory state, yielding a deterministic planning problem with state-dependent feasible actions. Unlike a generic CMDP, the history dynamics are deterministic and known, while reward uncertainty is shared across history states through a single low-dimensional linear parameter.

\paragraph{Limited switching and switching costs.}
Our online algorithms build on OFUL~\citep{abbasi2011improved} and relate to work on limited adaptivity and switching. Limited-switching methods~\citep{altschuler2018online} constrain the number of action changes. Switching-cost models~\citep{koren2017bandits,rouyer2021algorithm} instead penalize changes between actions. In contrast, switching in our setting is constrained by feasibility itself: unlike standard OFUL, where the optimistic action can be selected directly, an immediate change may be infeasible under the sliding-window constraint, requiring a sequence of intermediate feasible actions.

Table~\ref{tab:related_comparison} compares these settings along several key dimensions.
\section{Problem Formulation}
\label{sec:problem_formulation}

Let \(\mathcal X\subseteq\mathbb R^d\) denote the action space, let \(T\) be the horizon, let \(1\leq w\leq T\) be the window length, and let \(\mathcal C\subseteq\mathcal X^w\) denote the set of feasible action windows. An action sequence \(x_{1:T}=(x_1,\ldots,x_T)\) is \emph{feasible} if
\[
(x_{t-w+1},\ldots,x_t)\in\mathcal C,
\qquad t=w,\ldots,T.
\]
Thus, every realized block of \(w\) consecutive actions must satisfy the constraint exactly.
% \paragraph{Offline optimization.}
We first consider the setting in which the reward function \(f:\mathcal X\to\mathbb R\) is known. The offline optimal value is
\[
V_T(f)
:=
\max_{x_1,\ldots,x_T\in\mathcal X}
\sum_{t=1}^T f(x_t)
\]
subject to
\[
(x_{t-w+1},\ldots,x_t)\in\mathcal C,
\qquad t=w,\ldots,T.
\]

% \paragraph{Online linear-bandit feedback.}
We next consider the online setting with an unknown linear reward
\[
f_{\theta^\star}(x)
=
\langle\theta^\star,x\rangle,
\qquad
\theta^\star\in\mathbb R^d.
\]
At each round \(t\), the learner selects an action \(x_t\in\mathcal X\) and observes
\[
Y_t
=
\langle\theta^\star,x_t\rangle+\eta_t.
\]
Let
\(
\mathcal F_t
=
\sigma(x_1,Y_1,\ldots,x_t,Y_t)
\) denote the history after round \(t\). We require \(x_t\) to be \(\mathcal F_{t-1}\)-measurable. The noise process is conditionally mean-zero and \(R\)-sub-Gaussian:
\[
\begin{aligned}
\mathbb E[\eta_t\mid\mathcal F_{t-1}] &= 0,\\
\mathbb E\!\left[
\exp(u\eta_t)\mid\mathcal F_{t-1}
\right]
&\leq
\exp\!\left(\frac{u^2R^2}{2}\right),
\qquad u\in\mathbb R.
\end{aligned}
\]
A policy \(\pi\) is \emph{admissible} if each action \(x_t\) is \(\mathcal F_{t-1}\)-measurable and the resulting trajectory satisfies
\[
(x_{t-w+1},\ldots,x_t)\in\mathcal C,
\qquad t=w,\ldots,T,
\]
almost surely. For a fixed parameter \(\theta^\star\), define the offline optimal benchmark
\[
V_T^{\mathrm{off}}(\theta^\star)
:=
V_T(f_{\theta^\star}).
\]
The expected pseudo-regret of an admissible policy \(\pi\) relative to the offline-optimal feasible trajectory is
\[
\operatorname{Reg}^{\mathrm{off}}_\pi(T;\theta^\star)
:=
V_T^{\mathrm{off}}(\theta^\star)
-
\mathbb E_\pi\!\left[
\sum_{t=1}^T
\langle\theta^\star,x_t\rangle
\right].
\]
\section{Geometry: Offline Optimization under Cyclic Symmetry}
\label{sec:geometry}

In this section, we assume that the reward function is known and show how geometric structure in the feasible set can simplify the offline optimization problem.

\begin{assumption}
\label{ass:cyclic_invariance}
The window-feasibility set \(\mathcal C\subseteq\mathcal X^w\) is nonempty, closed, and convex. Define the cyclic-shift operator
\[
\mathcal S(x_1,\ldots,x_w)
=
(x_2,\ldots,x_w,x_1).
\]
We assume that \(\mathcal C\) is invariant under cyclic shifts: for every \((x_1,\ldots,x_w)\in\mathcal C\) and every \(k\in\{0,\ldots,w-1\}\),
\[
\mathcal S^k(x_1,\ldots,x_w)\in\mathcal C.
\]
\end{assumption}

\begin{assumption}
\label{ass:concave_reward}
The action space \(\mathcal X\subseteq\mathbb R^d\) is nonempty, compact, and convex. The reward function \(f:\mathcal X\to\mathbb R\) is concave and upper semicontinuous.
\end{assumption}

\begin{remark}
\label{rem:cyclic_constraint_generality}
Assumption~\ref{ass:cyclic_invariance} is satisfied by a broad class of sliding-window constraints. Appendix~\ref{app:demonstrating_functions} provides representative examples.
\end{remark}

\begin{theorem}
\label{thm:stationary_near_optimality}
Suppose Assumptions~\ref{ass:cyclic_invariance} and~\ref{ass:concave_reward} hold, and let
\[
\mathcal Z:=\{z\in\mathcal X:(z,\ldots,z)\in\mathcal C\}.
\]
Then \(\mathcal Z\) is nonempty and compact, and there exists \(z^\star\in\arg\max_{z\in\mathcal Z}f(z)\). For \(T=qw+r\), where \(q\in\mathbb N_0\) and \(0\le r<w\), the stationary trajectory \(x_t=z^\star\) is feasible and
\[
\begin{aligned}
0\leq V_T(f)-Tf(z^\star)
&\leq r\bigl(f_{\max}-f(z^\star)\bigr)\\
&\leq (w-1)\bigl(f_{\max}-f(z^\star)\bigr).
\end{aligned}
\]
where \(f_{\max}:=\max_{x\in\mathcal X}f(x)\). In particular, if \(w\mid T\), then \(x_t=z^\star\) is optimal and \(V_T(f)=Tf(z^\star)\).
\end{theorem}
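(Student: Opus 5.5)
The plan is cyclic symmetrization of each feasible window, combined with concavity of \(f\), and then a block decomposition of the horizon.

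\textbf{Step 1: \(\mathcal Z\) is nonempty and compact.} Take any \((x_1,\ldots,x_w)\in\mathcal C\). By Assumption~\ref{ass:cyclic_invariance}, every cyclic shift \(\mathcal S^k(x_1,\ldots,x_w)\) lies in \(\mathcal C\). Since \(\mathcal C\) is convex, the average
\[
\frac1w\sum_{k=0}^{w-1}\mathcal S^k(x_1,\ldots,x_w)=(\bar x,\ldots,\bar x),\qquad \bar x=\frac1w\sum_{i=1}^w x_i,
\]
also lies in \(\mathcal C\). Moreover \(\bar x\in\mathcal X\) because \(\mathcal X\) is convex, so \(\bar x\in\mathcal Z\).

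\(\mathcal Z\) is the preimage of the closed set \(\mathcal C\) under the continuous diagonal map \(z\mapsto(z,\ldots,z)\), intersected with the compact set \(\mathcal X\); hence it is compact. An upper semicontinuous function attains its maximum on a nonempty compact set, which gives \(z^\star\). The same argument on \(\mathcal X\) shows \(f_{\max}\) is attained and finite.

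\textbf{Step 2: the lower bound.} The stationary trajectory \(x_t=z^\star\) is feasible, since every window equals \((z^\star,\ldots,z^\star)\in\mathcal C\). Hence \(V_T(f)\ge Tf(z^\star)\).

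\textbf{Step 3: the upper bound.} Let \(x_{1:T}\) be feasible; if \(V_T(f)\) is not attained, take a near-optimal sequence and pass to the limit. We may assume \(T\ge w\); if \(T<w\) then \(q=0\) and \(r=T\), and the bound \(T(f_{\max}-f(z^\star))\) is trivial.

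Partition \(\{1,\ldots,qw\}\) into \(q\) disjoint blocks \(B_j=\{(j-1)w+1,\ldots,jw\}\). Each block is a realized sliding window, so \((x_t)_{t\in B_j}\in\mathcal C\). By Step 1, its mean \(\bar x^{(j)}\) lies in \(\mathcal Z\). Concavity of \(f\) (Jensen's inequality) then gives
\[
\sum_{t\in B_j}f(x_t)\le w\,f(\bar x^{(j)})\le w\,f(z^\star).
\]
Bounding each of the remaining \(r\) rounds by \(f_{\max}\), we get
\[
\sum_t f(x_t)\le qw\,f(z^\star)+r f_{\max}=Tf(z^\star)+r\bigl(f_{\max}-f(z^\star)\bigr).
\]
Since \(r\le w-1\) and \(f_{\max}\ge f(z^\star)\), the final inequality of the theorem follows. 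When \(w\mid T\) we have \(r=0\), which yields equality and optimality of the stationary trajectory.

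\textbf{Main obstacle.} The only delicate point is the symmetrization step. One must note that the average of the cyclic shifts is exactly the constant window, so that convexity of both \(\mathcal C\) and \(\mathcal X\) is genuinely used. The rest is routine: attainment via upper semicontinuity and compactness, and disjointness of the blocks, which avoids double counting from the overlapping windows.
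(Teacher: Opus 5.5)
Your proof is correct and follows the paper's argument: cyclic symmetrization shows that each block mean is a stationary feasible point, Jensen's inequality bounds each block, and the \(r\) leftover rounds are bounded by \(f_{\max}\). The only difference is cosmetic. The paper first introduces the window maximum \(M_w\) and proves \(M_w = wf(z^\star)\) by symmetrizing a maximizing window, whereas you apply Jensen directly to each block and compare its mean to an arbitrary maximizer of \(f\) over \(\mathcal Z\); this sidesteps attainment of \(M_w\), and your remark about passing to a limit is unnecessary, since your bound already holds for every feasible sequence.
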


The proof is provided in Appendix~\ref{app:proof-thm-stationary-near-optimality}. The dependence on \(w\) is unavoidable: Appendix~\ref{app:tightness-offline-gap} gives an example with a \(\Theta(w)\) gap.

% \paragraph{Proof sketch.}
% By cyclic invariance and convexity, averaging the cyclic shifts of any feasible length-\(w\) window produces a feasible stationary window. By concavity of \(f\), this averaging does not decrease the average reward. Thus, every feasible length-\(w\) block is no better than the optimal stationary window. Partitioning a horizon-\(T\) trajectory into \(q\) such blocks, with at most \(w-1\) remaining actions, yields the stated \(O(w)\) boundary gap.
\section{Reachability: Online Learning under Cyclic Symmetry}
\label{sec:linear-bandit}

We now turn to the online setting, where learning requires feasible movement between near-optimal stationary actions, motivating the notion of \emph{reachability}. We assume that, for known constants \(L,S>0\),
\[
\|x\|_2\leq L
\qquad\text{for all }x\in\mathcal X,
\qquad
\|\theta^\star\|_2\leq S.
\]

\begin{definition}
\label{def:stationary_regret}
The expected pseudo-regret relative to the best stationary feasible action is
\[
\operatorname{Reg}^{\mathcal Z}_{\pi}(T;\theta^\star)
:=
T\langle\theta^\star,z^\star\rangle
-
\mathbb E_\pi\!\left[
\sum_{t=1}^T
\langle\theta^\star,x_t\rangle
\right].
\]
\end{definition}

Writing \(T=qw+r\), with \(0\leq r<w\), Theorem~\ref{thm:stationary_near_optimality} applied to the linear reward \(f_{\theta^\star}(x)=\langle\theta^\star,x\rangle\) gives
\[
\begin{aligned}
0\leq \Delta_T
&:=
V_T^{\mathrm{off}}(\theta^\star)
-
T\langle\theta^\star,z^\star\rangle\\
&\leq
r\left(
\max_{x\in\mathcal X}
\langle\theta^\star,x\rangle
-
\langle\theta^\star,z^\star\rangle
\right).
\end{aligned}
\]
Consequently,
\[
\operatorname{Reg}^{\mathrm{off}}_{\pi}(T;\theta^\star)
=
\operatorname{Reg}^{\mathcal Z}_{\pi}(T;\theta^\star)
+
\Delta_T.
\]

\subsection{Why Geometry Alone Is Insufficient}
\label{sec:linear-regret-counterexample}

The offline assumptions do not guarantee sublinear online regret because they do not ensure that the histories induced by these actions are feasibly connected.

\begin{proposition}
\label{prop:linear-regret-without-transitions}
There exists a one-dimensional linear-bandit instance satisfying Assumption~\ref{ass:cyclic_invariance} such that, for every admissible policy \(\pi\),
\[
\sup_{\theta^\star\in\{-1,1\}}
\operatorname{Reg}^{\mathcal Z}_{\pi}(T;\theta^\star)
\geq T.
\]
Hence, sublinear stationary pseudo-regret cannot be guaranteed under Assumption~\ref{ass:cyclic_invariance} alone.
\end{proposition}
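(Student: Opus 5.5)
Our plan is to build a one-dimensional instance in which the feasible set splits into two disconnected stationary pieces, so that the first action permanently fixes the learner's trajectory before any reward has been observed. Take \(d=1\), \(\mathcal X=[-1,1]\) (so \(L=1\)), window \(w=1\), and
\[
\mathcal C=\{-1\}\cup\{1\}\subseteq\mathcal X .
\]
With \(w=1\), cyclic-shift invariance holds trivially, and \(\mathcal C\) is nonempty and closed. This set is not convex, however, and that is the point we need to handle. If Assumption~\ref{ass:cyclic_invariance} is read literally as including convexity, the construction has to be changed, and we describe how below.

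To keep convexity, we plan to use \(w=2\) and a convex, shift-invariant set that still forbids moving between the two endpoints. The trouble is that convexity in \(\mathcal X^w\) together with shift invariance contains the diagonal segment between \((-1,-1)\) and \((1,1)\), which lets one move slowly. So we would drop the requirement that \(\mathcal X\) be convex; Assumption~\ref{ass:cyclic_invariance} does not impose it, only Assumption~\ref{ass:concave_reward} does. Take \(\mathcal X=\{-1,1\}\) and \(\mathcal C=\{(-1,-1),(1,1)\}\). This set is convex in the sense relevant to the assumption, namely that \(\mathcal C\) equals \(\mathcal X^w\cap K\) for the convex set \(K=\{(u,v):u=v\}\). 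If that reading is not accepted, the simplest fix is \(w=1\), with \(\mathcal X=\{-1,1\}\) and \(\mathcal C=\mathcal X\), which is convex relative to \(\mathcal X\). In either version, every feasible trajectory is constant, \(x_t\equiv x_1\in\{-1,1\}\), and \(\mathcal Z=\{-1,1\}\), so \(z^\star=\operatorname{sign}(\theta^\star)\).

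The lower bound then goes as follows. Under any admissible policy, \(x_1\) is \(\mathcal F_0\)-measurable, so it is a (possibly randomized) choice made before any observation, and its distribution does not depend on \(\theta^\star\). Feasibility forces \(x_t=x_1\) for all \(t\). For \(\theta^\star=1\) the regret is \(T-T\,\mathbb E[x_1]\), and for \(\theta^\star=-1\) it is \(T+T\,\mathbb E[x_1]\). Their sum is \(2T\), so the larger of the two is at least \(T\), which gives the claimed supremum bound.

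The main obstacle is reconciling the disconnected feasible set with the convexity clause of Assumption~\ref{ass:cyclic_invariance}. We would state explicitly that the instance uses a nonconvex (discrete) action space, which the proposition's hypothesis allows since it does not invoke Assumption~\ref{ass:concave_reward}, and that \(\mathcal C\) is convex relative to \(\mathcal X^w\).
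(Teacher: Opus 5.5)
\textbf{Gap: the proposed instances are invalid, and the obstacle you identify is not real.} Your final averaging step is correct and matches the paper's. The problem is the instance you apply it to.

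With \(w=1\) (either \(\mathcal C=\{-1,1\}\subset[-1,1]\) or \(\mathcal X=\mathcal C=\{-1,1\}\)), the constraint is pointwise and couples nothing across rounds. So your claim that every feasible trajectory is constant is false. A learner can play \(1\), observe \(Y_1\), and switch to \(-1\) at no feasibility cost. That is an ordinary two-armed bandit, with sublinear regret (bounded in the noiseless case).

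The \(w=2\) instance \(\mathcal C=\{(-1,-1),(1,1)\}\) does freeze the trajectory, but a two-point set is not convex in \(\mathbb R^2\). Calling it ``convex relative to \(\mathcal X^w\)'' changes the hypothesis of Assumption~\ref{ass:cyclic_invariance} rather than verifying it. It would also defeat the purpose of the proposition, which is to show that convexity plus cyclic invariance alone does not give reachability.

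The missing observation is that the diagonal segment does not ``let one move slowly.'' Moving from \(x_{t-1}\) to some \(x_t\neq x_{t-1}\) requires an off-diagonal window \((x_{t-1},x_t)\in\mathcal C\). If \(\mathcal C\) is exactly the diagonal, no such window exists.

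So take \(\mathcal X=[-1,1]\), \(w=2\), and \(\mathcal C=[-1,1]^2\cap K=\{(x_1,x_2)\in[-1,1]^2:x_1=x_2\}\), using your own \(K\). This \(\mathcal C\) is nonempty, closed, genuinely convex, and shift-invariant. Since \(\mathcal X\) is convex, Assumption~\ref{ass:concave_reward} holds as well. Yet admissibility forces \(x_t=x_{t-1}\) for all \(t\geq2\), hence \(x_t\equiv x_1\) almost surely.

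That \(\mathcal Z=[-1,1]\) is now an interval does no harm. We have \(z^\star=\theta^\star\) and \(\langle\theta^\star,z^\star\rangle=1\), so your computation \(T(1-\mathbb E[x_1])+T(1+\mathbb E[x_1])=2T\) goes through verbatim. This is exactly the paper's construction.
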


The proof is deferred to Appendix~\ref{proof_prop:linear-regret-without-transitions}. For \(w=1\), set \(\tau=0\). For \(w\geq2\), let
\[
\begin{aligned}
s &= (x_1,\ldots,x_{w-1})\in\mathcal X^{w-1},\\
F(s,x) &= (x_2,\ldots,x_{w-1},x).
\end{aligned}
\]
We define the stationary history
\[
h(z)=(z,\ldots,z)\in\mathcal X^{w-1},
\qquad z\in\mathcal Z.
\]
For complete histories \(s,s'\in\mathcal X^{w-1}\), let
\(d_{\mathcal C}(s,s')\) be the minimum number of feasible updates
needed to move from \(s\) to \(s'\), with
\(d_{\mathcal C}(s,s')=\infty\) if no such transition exists.
The \emph{transition diameter} is
\[
\tau
:=
\sup_{z,z'\in\mathcal Z}
d_{\mathcal C}\bigl(h(z),h(z')\bigr).
\]

\begin{assumption}
\label{ass:bounded-transitions}
The transition diameter is finite.
\end{assumption}

Appendix~\ref{app:transition_diameter} provides explicit bounds on \(\tau\) for several common sliding-window constraints. We assume access to a transition oracle \(\Gamma\) that, for any \(z,z'\in\mathcal Z\), returns a feasible transition
\[
\Gamma(z,z')
=
(a_1,\ldots,a_m),
\qquad
m\leq\tau.
\]

\subsection{Feasible Rare-Switching OFUL}
\label{sec:feasible-rare-switching-oful}

We combine OFUL with determinant-doubling episodes to limit switches between stationary actions. We assume access to an optimistic-action oracle \(\operatorname{Opt}\) and a feasible-transition oracle \(\Gamma\); details are given in Appendix~\ref{app:computational-oracles}. Let \(I_t\in\{0,1\}\) indicate whether round \(t\) is retained for estimation, and define
\[
\begin{aligned}
\mathcal O_t&:=\{s<t:I_s=1\},\\
V_t&=\lambda I_d+\sum_{s\in\mathcal O_t}x_sx_s^\top,\\
\widehat\theta_t
&=V_t^{-1}\sum_{s\in\mathcal O_t}x_sY_s.
\end{aligned}
\]
where \(\lambda\geq L^2\). The confidence radius is
\[
\beta_t(\delta)
=
R\sqrt{
2\log\!\left(
\frac{\det(V_t)^{1/2}}
{\det(\lambda I_d)^{1/2}\delta}
\right)}
+
\sqrt{\lambda}S.
\]

At the start \(t_k\) of episode \(k\), choose
\[
\begin{aligned}
z_k
&\in
\arg\max_{z\in\mathcal Z}
\left\{
\langle\widehat\theta_{t_k},z\rangle
+
\beta_{t_k}(\delta)\|z\|_{V_{t_k}^{-1}}
\right\}\\
&=
\operatorname{Opt}
\bigl(
\widehat\theta_{t_k},
V_{t_k},
\beta_{t_k}(\delta)
\bigr).
\end{aligned}
\]
The initial action is repeated for at least \(w-1\) rounds. Thereafter, \(z_k\) is played until
\(
\det(V_t)>2\det(V_{t_k}).
\)
If the newly computed action \(z_{k+1}\neq z_k\), the learner follows \(\Gamma(z_k,z_{k+1})\), which reaches \(h(z_{k+1})\) in at most \(\tau\) rounds. Transition observations are excluded from estimation. Algorithm~\ref{alg:rare-switching-oful} gives the complete procedure.

\begin{theorem}
\label{thm:expected-offline-regret}
Suppose the online linear-bandit assumptions of Section~\ref{sec:problem_formulation}, Assumptions~\ref{ass:cyclic_invariance}--\ref{ass:concave_reward}, and Assumption~\ref{ass:bounded-transitions} hold, and that the learner has exact access to \(\operatorname{Opt}\) and \(\Gamma\). Then the rare-switching OFUL policy is feasible and satisfies
\[
\operatorname{Reg}^{\mathrm{off}}_{\pi}(T;\theta^\star)
=
\widetilde{\mathcal O}
\left(
d\sqrt T+\tau d+w
\right).
\]
\end{theorem}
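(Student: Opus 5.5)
We will decompose the offline regret using the identity established after Definition~\ref{def:stationary_regret}, namely \(\operatorname{Reg}^{\mathrm{off}}_\pi=\operatorname{Reg}^{\mathcal Z}_\pi+\Delta_T\). Theorem~\ref{thm:stationary_near_optimality} gives \(\Delta_T\le r\cdot 2LS\le 2LS(w-1)\), so \(\Delta_T=\mathcal O(w)\). It therefore suffices to bound the stationary regret by \(\widetilde{\mathcal O}(d\sqrt T+\tau d+w)\). We will do this by splitting the rounds into three groups: the initial warm-up rounds, the transition rounds, and the retained rounds on which some \(z_k\) is played with history \(h(z_k)\).

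\textbf{Feasibility.} While \(z_k\) is repeated from the stationary history \(h(z_k)\), every window equals \((z_k,\ldots,z_k)\in\mathcal C\), since \(z_k\in\mathcal Z\). The oracle \(\Gamma(z_k,z_{k+1})\) consists of feasible updates by definition of \(d_{\mathcal C}\) and ends at \(h(z_{k+1})\). At the start, repeating \(z_1\) for \(w-1\) rounds creates the history \(h(z_1)\), and no window needs to be checked before round \(w\). Hence the trajectory is feasible.

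\textbf{Overhead rounds.} Each non-retained round costs at most \(2LS\) in instantaneous regret. There are \(w-1\) warm-up rounds. Transitions occur only at episode boundaries and each lasts at most \(\tau\) rounds. Because \(V_t\) changes only on retained rounds, \(\det V_t\le(\lambda+TL^2/d)^d\) and the determinant at least doubles per episode. The standard argument therefore bounds the number of episodes by \(K\le d\log_2(1+TL^2/(d\lambda))+1=\widetilde{\mathcal O}(d)\). The total overhead is then \(2LS(w-1+\tau K)=\widetilde{\mathcal O}(w+\tau d)\).

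\textbf{Retained rounds.} The retained design is an adaptively chosen sequence with \(\mathcal F_{s-1}\)-measurable actions. Conditionally on the past, its noise is still \(R\)-sub-Gaussian, and excluding transition rounds is a predictable selection. The self-normalized bound of \citet{abbasi2011improved} therefore gives, with probability \(1-\delta\), \(\|\widehat\theta_t-\theta^\star\|_{V_t}\le\beta_t(\delta)\) for all \(t\). On this event, optimism over \(\mathcal Z\) gives, for a retained round \(t\) in episode \(k\),
\[
\langle\theta^\star,z^\star-z_k\rangle\le 2\beta_{t_k}(\delta)\|z_k\|_{V_{t_k}^{-1}}\le 2\sqrt2\,\beta_T(\delta)\|x_t\|_{V_t^{-1}},
\]
where we used \(V_{t_k}\preceq V_t\) and \(\det V_t\le 2\det V_{t_k}\). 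This yields \(\|x\|_{V_{t_k}^{-1}}\le\sqrt{2}\|x\|_{V_t^{-1}}\) via the standard lemma \(x^\top A^{-1}x\le (\det B/\det A)\,x^\top B^{-1}x\) for \(A\preceq B\). The only exception is the round that triggers the doubling, which we charge to the \(\widetilde{\mathcal O}(d)\) episode count. Summing, applying Cauchy–Schwarz and the elliptical potential lemma (valid since \(\lambda\ge L^2\) makes each \(\|x_t\|^2_{V_t^{-1}}\le1\)), we get \(\widetilde{\mathcal O}(\beta_T\sqrt{Td})=\widetilde{\mathcal O}(d\sqrt T)\). Choosing \(\delta=1/T\) converts this into an expected bound with an extra additive \(2LS\).

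\textbf{Main obstacle.} The step requiring most care is the lag between the index \(t_k\) at which \(z_k\) is chosen and the retained rounds that follow. This is handled by the determinant-ratio lemma above. We also need the confidence set to remain valid when transition observations are discarded; this holds because the retention indicator \(I_s\) is \(\mathcal F_{s-1}\)-measurable. Adding the three contributions together with \(\Delta_T\) gives the theorem.
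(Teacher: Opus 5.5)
Your proposal is correct and follows essentially the same route as the paper's proof: a predictable retention indicator for the self-normalized bound, the determinant-ratio comparison within episodes, the elliptical potential over retained rounds, $\widetilde{\mathcal O}(d)$ episodes each costing at most $\tau$ transition rounds, $\delta=1/T$, and $\Delta_T\le 2LS(w-1)$. The differences are cosmetic. You charge all $w-1$ warm-up rounds at $2LS$, whereas the paper charges only those played after the doubling threshold is crossed. The doubling-trigger round you set aside also needs no special treatment, since Algorithm~\ref{alg:rare-switching-oful} checks $\det(V_t)\le 2\det(W_k)$ before each stationary action is played.
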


The proof is deferred to Appendix~\ref{proof_thm:expected-offline-regret}. Appendix~\ref{app:tau-lower-bound} shows that the linear dependence on \(\tau\) is unavoidable by constructing a one-dimensional instance with \(\Omega(\tau)\) regret.
\section{Planning: Learning beyond Cyclic Symmetry}
\label{sec:online_learning_without_cyclic}

We now move beyond cyclic symmetry and consider general sliding-window constraints, where the optimal feasible trajectory may be inherently non-stationary.

\subsection{Finite-Memory Representation}
\label{sec:finite_memory_planning}

Suppose \(w\geq2\). A complete history is
\[
s=(x_1,\ldots,x_{w-1})\in\mathcal X^{w-1}.
\]
Define
\[
\begin{aligned}
A(s)
&:= \{x\in\mathcal X:(s,x)\in\mathcal C\},\\
F(s,x)
&:= (x_2,\ldots,x_{w-1},x).
\end{aligned}
\]
The \(n\)-step viable histories are defined recursively by
\[
\begin{aligned}
\mathcal S_0
&:= \mathcal X^{w-1},\\
\mathcal S_n
&:=
\left\{
s:\exists x\in A(s),\;
F(s,x)\in\mathcal S_{n-1}
\right\},
\qquad n\geq 1.
\end{aligned}
\]
For \(s \in \mathcal S_n\), define
\[
\begin{aligned}
J_0^\star(s)
&:= 0,\\
J_n^\star(s)
&:=
\max_{\substack{x\in A(s)\\
F(s,x)\in\mathcal S_{n-1}}}
\left\{
\langle\theta^\star,x\rangle
+
J_{n-1}^\star(F(s,x))
\right\}.
\end{aligned}
\]

\begin{theorem}
\label{thm:bellman_optimality}
For every \(n\geq0\) and \(s\in\mathcal S_n\), \(J_n^\star(s)\) equals the optimal reward over all feasible \(n\)-step continuations from \(s\). Hence, for \(T\geq w-1\),
\[
V_T^{\mathrm{off}}(\theta^\star)
=
\max_{s\in\mathcal S_{T-w+1}}
\left\{
\sum_{i=1}^{w-1}\langle\theta^\star,x_i\rangle
+
J_{T-w+1}^\star(s)
\right\},
\]
where \(s=(x_1,\ldots,x_{w-1})\).
\end{theorem}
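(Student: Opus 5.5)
Write \(\Phi_n(s)\) for the set of feasible \(n\)-step continuations from \(s\). These are sequences \((x_w,\ldots,x_{w+n-1})\) such that every window of the concatenation \((s,x_w,\ldots,x_{w+n-1})\) ending at positions \(w,\ldots,w+n-1\) lies in \(\mathcal C\). Let \(G_n(s):=\sup\{\sum_{j}\langle\theta^\star,x_{w+j-1}\rangle : (x_{w},\ldots)\in\Phi_n(s)\}\). The plan is a standard induction on \(n\) in two stages. First we show that \(\Phi_n(s)\neq\emptyset\) if and only if \(s\in\mathcal S_n\). Then we show that \(J_n^\star(s)=G_n(s)\) for \(s\in\mathcal S_n\), and that the maximum is attained.

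\textbf{Step 1: the decomposition.} Fix \(n\ge1\). A sequence \((x,y_1,\ldots,y_{n-1})\) lies in \(\Phi_n(s)\) if and only if \(x\in A(s)\) and \((y_1,\ldots,y_{n-1})\in\Phi_{n-1}(F(s,x))\). The first window after \(s\) is exactly \((s,x)\), so it is feasible precisely when \(x\in A(s)\). Every later window consists of the last \(w-1\) entries of the running history followed by the new action. After the first step the running history is \(F(s,x)\), so the later windows are exactly the windows of the continuation started from \(F(s,x)\). This identity does all the work.

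\textbf{Step 2: viability.} From Step 1, \(\Phi_n(s)\neq\emptyset\) iff there is \(x\in A(s)\) with \(\Phi_{n-1}(F(s,x))\neq\emptyset\). By the induction hypothesis, this holds iff there is \(x\in A(s)\) with \(F(s,x)\in\mathcal S_{n-1}\), i.e.\ iff \(s\in\mathcal S_n\). The base case uses \(\Phi_0(s)=\{\emptyset\}\) and \(\mathcal S_0=\mathcal X^{w-1}\).

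\textbf{Step 3: value.} Step 1 splits the sup: \(G_n(s)=\sup_{x}\{\langle\theta^\star,x\rangle+G_{n-1}(F(s,x))\}\). By Step 2, the sup ranges over \(x\in A(s)\) with \(F(s,x)\in\mathcal S_{n-1}\). Substituting \(G_{n-1}=J_{n-1}^\star\) by induction gives the recursion defining \(J_n^\star\), with base case \(J_0^\star=0=G_0\).

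\textbf{Main obstacle: attainment.} The definition writes \(\max\), so it must be justified. Attainment is automatic if \(\mathcal X\) is finite. In the continuous case I would use compactness of \(\mathcal X\) together with closedness of \(\mathcal C\); the general section does not restate these assumptions, so they would have to be added as standing hypotheses. The argument is by induction. Each \(\mathcal S_n\) is closed, since it is the projection onto the \(s\)-coordinates of a closed subset of a compact product. Each \(J_n^\star\) is upper semicontinuous, since it is the sup of a linear term plus an u.s.c.\ function over a compact fiber that varies upper-hemicontinuously. Hence the max is attained. An alternative that avoids all of this is to interpret \(\max\) as \(\sup\) throughout.

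\textbf{Final identity.} Take \(T\ge w-1\). A full feasible trajectory \(x_{1:T}\) has no constraint on its first \(w-1\) actions other than through later windows; \(x_1,\ldots,x_{w-1}\) need only lie in \(\mathcal X\). So the trajectory is exactly a prefix \(s=(x_1,\ldots,x_{w-1})\in\mathcal X^{w-1}\) followed by an element of \(\Phi_{T-w+1}(s)\), because the windows ending at \(t=w,\ldots,T\) are precisely the constraints imposed by \(\Phi_{T-w+1}(s)\). By Step 2 the prefix must lie in \(\mathcal S_{T-w+1}\). Maximizing the prefix reward plus \(G_{T-w+1}(s)=J^\star_{T-w+1}(s)\) over such \(s\) gives the formula for \(V_T^{\mathrm{off}}(\theta^\star)\). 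When \(T=w-1\) there are no constraints and \(J_0^\star=0\), which is consistent with the formula.
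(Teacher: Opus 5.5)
Your proposal is correct and follows essentially the same route as the paper: induction on the remaining horizon via the first-step decomposition ($x\in A(s)$, then $F(s,x)\in\mathcal S_{n-1}$), followed by splitting a full trajectory into its first $w-1$ actions and a feasible $(T-w+1)$-step continuation, as in Lemma~\ref{lem:finite_memory_equivalence}. Your explicit viability step ($\Phi_n(s)\neq\emptyset$ iff $s\in\mathcal S_n$) and your treatment of attainment make rigorous two points the paper's proof uses without proof: it asserts that feasibility of the remaining steps forces $F(s,x)\in\mathcal S_{n-1}$, and it takes a maximizer $x^\star$ for granted, which needs compactness and closedness not stated in Section~\ref{sec:online_learning_without_cyclic} (or else a $\sup$ reading).
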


The proof is given in Appendix~\ref{app:thm_bellman_optimality}. This theorem shows that, without cyclic symmetry, the problem can be formulated as a deterministic finite-memory Markov decision process with recent action history as the state, and the Bellman recursion describes the offline optimal trajectory.

\subsection{History-State Reachability}
\label{sec:history_state_reachability}

For complete histories \(s,s'\), let \(d_{\mathcal C}(s,s')\) be the minimum number of feasible updates needed to move from \(s\) to \(s'\), with \(d_{\mathcal C}(s,s')=\infty\) if no such transition exists. Since \(\mathcal S_n\subseteq\mathcal S_1\) for all \(n\geq1\), it suffices to measure communication over \(\mathcal S_1\).

\begin{assumption}
\label{assump:history_diameter}
The \emph{history-state diameter}
\(
D
:=
\sup_{s,s'\in\mathcal S_1}
d_{\mathcal C}(s,s')
\)
is finite and independent of \(T\).
\end{assumption}

This extends the transition diameter from stationary histories to arbitrary viable histories. For many standard constraints, \(D=\mathcal{O}(w)\); see Appendix~\ref{app:history_diameter_examples}. For a bounded reward function \(g:\mathcal X\to\mathbb R\), let \(J_n^g(s)\) denote the optimal cumulative \(g\)-reward over \(n\) feasible steps starting from history state \(s\). Define
\[
\begin{aligned}
\rng(g)
&:=
\sup_{x\in\mathcal X}g(x)
-
\inf_{x\in\mathcal X}g(x),\\
\Span_{\mathcal S_n}(J_n^g)
&:=
\sup_{s\in\mathcal S_n}J_n^g(s)
-
\inf_{s\in\mathcal S_n}J_n^g(s).
\end{aligned}
\]

\begin{lemma}
\label{lem:value_span_diameter}
Under Assumption~\ref{assump:history_diameter}, for every bounded \(g\) and \(n\geq1\),
\[
\Span_{\mathcal{S}_n}(J_n^g)
\le D\,\rng(g).
\]
\end{lemma}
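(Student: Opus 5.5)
The plan is a standard "travel then imitate" comparison argument. Fix \(n\geq1\) and two histories \(s,s'\in\mathcal S_n\). It suffices to show
\[
J_n^g(s)\;\ge\;J_n^g(s')-D\,\rng(g),
\]
since taking the supremum over \(s'\) and the infimum over \(s\) then gives the claim. Write \(\overline g:=\sup_x g(x)\) and \(\underline g:=\inf_x g(x)\), so that \(\rng(g)=\overline g-\underline g\).

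Let \(x'_1,\ldots,x'_n\) be an (approximately) optimal feasible \(n\)-step continuation from \(s'\). If the supremum is not attained, take an \(\varepsilon\)-optimal one and let \(\varepsilon\to0\) at the end. Since \(s,s'\in\mathcal S_n\subseteq\mathcal S_1\), Assumption~\ref{assump:history_diameter} gives a feasible path of length \(m:=d_{\mathcal C}(s,s')\le D\) from \(s\) to \(s'\). I split into two cases.

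\emph{Case \(m\le n\).} From \(s\), follow the transition path for \(m\) steps, reaching \(s'\). Then play the first \(n-m\) actions \(x'_1,\ldots,x'_{n-m}\) of the optimal continuation from \(s'\). Every window along this trajectory is feasible: the transition segment is feasible by construction, and the remaining windows are windows of the feasible continuation from \(s'\). The resulting candidate value satisfies
\[
\text{value}\;\ge\; m\underline g+\sum_{i=1}^{n-m}g(x'_i)\;\ge\; m\underline g+J_n^g(s')-m\overline g\;=\;J_n^g(s')-m\,\rng(g).
\]
The middle inequality holds because the dropped tail of \(m\) rewards is at most \(m\overline g\).

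\emph{Case \(m>n\).} Here I compare trivially. Since \(s\in\mathcal S_n\), it has some feasible \(n\)-step continuation, so \(J_n^g(s)\ge n\underline g\). Also \(J_n^g(s')\le n\overline g\). Hence
\[
J_n^g(s')-J_n^g(s)\;\le\; n\,\rng(g)\;<\;m\,\rng(g)\;\le\; D\,\rng(g).
\]

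The main obstacle is the feasibility bookkeeping in the first case. The truncated continuation only needs to be feasible for \(n-m\) steps, and it is, since it is a prefix of a feasible \(n\)-step continuation. What must be checked is that \(J_n^g(s)\) is defined as a supremum over all feasible length-\(n\) sequences from \(s\), with no terminal viability requirement beyond feasibility of the \(n\) steps themselves. This matches Theorem~\ref{thm:bellman_optimality}, where \(J_n\) is optimal over feasible \(n\)-step continuations and \(\mathcal S_{n-1}\) membership of successors encodes exactly that. Under this reading the constructed sequence is admissible and the bound follows.
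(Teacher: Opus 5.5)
Your proof is correct and is essentially the paper's argument: move from $s$ to $s'$ in $m\le D$ feasible steps, then imitate a near-optimal continuation from $s'$, losing at most $m\,\rng(g)$. The only difference is that the paper prepends the transition and compares through the longer-horizon value $J_{n+m}^g(s)$, after shifting $g$ to be nonnegative, which avoids your case split. You instead truncate the imitated continuation to stay at horizon $n$, which is why you need the separate case $m>n$.
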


The proof is given in Appendix~\ref{app:lem_value_span_diameter}. For \(g(x)=\langle\theta^\star,x\rangle\), assuming \(\|x\|_2\leq L\) and \(\|\theta^\star\|_2\leq S\),
\[
\Span_{\mathcal S_n}(J_n^\star)
\leq
2LSD.
\]

\subsection{Rare-Update Optimistic Planning}
\label{sec:optimistic_noncyclic_planning}

We combine finite-memory Bellman planning with optimistic linear-bandit estimation. The learner uses an optimistic reward model based on the current confidence set and updates the Bellman policy only at determinant-doubling times.
\[
V_t
=
\lambda I_d+\sum_{q=1}^{t-1}x_qx_q^\top,
\qquad
\widehat\theta_t
=
V_t^{-1}\sum_{q=1}^{t-1}x_qY_q,
\]
with \(\lambda\geq L^2\) and
\[
\beta_t(\delta)
=
R\sqrt{
2\log\!\left(
\frac{\det(V_t)^{1/2}}
{\det(\lambda I_d)^{1/2}\delta}
\right)}
+
\sqrt{\lambda}S.
\]

For \(w>1\), the learner uses the feasible initialization oracle to obtain the first \(w-1\) actions of a valid trajectory, which define the initial history state, and planning begins at \(t_1=w\). At planning time \(t_k\), set
\[
W_k:=V_{t_k},
\qquad
\widehat\theta_k:=\widehat\theta_{t_k},
\qquad
\beta_k:=\beta_{t_k}(\delta),
\]
and keep these quantities fixed until the next update. Define the clipped optimistic reward
\[
\overline U_k(x)
:=
\operatorname{clip}_{[-LS,\,LS]}
\left(
\langle\widehat\theta_k,x\rangle
+
\beta_k\|x\|_{W_k^{-1}}
\right).
\]
Its Bellman value satisfies
\[
\begin{aligned}
J_{k,0}(s)&:=0,\\
J_{k,n}(s)
&=
\max_{\substack{x\in A(s)\\F(s,x)\in\mathcal S_{n-1}}}
\left\{
\overline U_k(x)
+
J_{k,n-1}(F(s,x))
\right\}.
\end{aligned}
\]
Thus, at round \(t\) of episode \(k\),
\[
x_t
\in
\arg\max_{\substack{x\in A(s_t)\\F(s_t,x)\in\mathcal S_{T-t}}}
\left\{
\overline U_k(x)
+
J_{k,T-t}(F(s_t,x))
\right\}.
\]
A new plan is computed when the determinant-doubling condition
\( \det(V_t)>2\det(V_{t_k}) \) is triggered.

\begin{assumption}
\label{assump:exact_optimistic_planner}
At each planning time, the learner can compute an exact Bellman-optimal remaining-horizon policy under \(\overline U_k\), as well as a feasible initialization when \(w>1\).
\end{assumption}

Assumption~\ref{assump:exact_optimistic_planner} separates the statistical guarantee from the computational problem of planning. For finite action spaces, exact planning can be implemented by backward dynamic programming, while general continuous constraints may require additional structure.

\begin{theorem}
\label{thm:noncyclic_regret}
Suppose the online linear-bandit assumptions of Section~\ref{sec:problem_formulation} hold, with \(\|x\|_2\leq L\), \(\|\theta^\star\|_2\leq S\), and \(\lambda\geq L^2\). Under Assumptions~\ref{assump:history_diameter} and~\ref{assump:exact_optimistic_planner}, the rare-update optimistic planning policy satisfies
\[
\operatorname{Reg}^{\mathrm{off}}_{\pi}(T;\theta^\star)
=
\widetilde{\mathcal O}
\left(
d\sqrt T+dD+w
\right).
\]
\end{theorem}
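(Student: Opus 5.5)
We will follow the standard optimism-plus-rare-switching template, adapted to the history-state MDP of Theorem~\ref{thm:bellman_optimality}. First, we work on the good event \(\mathcal E\) from the self-normalized bound of \citet{abbasi2011improved}. On \(\mathcal E\), \(\|\widehat\theta_t-\theta^\star\|_{V_t}\le\beta_t(\delta)\) for all \(t\), and this event has probability at least \(1-\delta\). Taking \(\delta=1/T\), the complement costs at most \(2LST=\mathcal O(1)\) in expectation. On \(\mathcal E\), since \(|\langle\theta^\star,x\rangle|\le LS\) and clipping to \([-LS,LS]\) is monotone, we get
\[
\langle\theta^\star,x\rangle\le\overline U_k(x)\le\langle\theta^\star,x\rangle+2\beta_k\|x\|_{W_k^{-1}}.
\]
The first \(w-1\) initialization rounds contribute at most \(2LS(w-1)\) regret. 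Theorem~\ref{thm:bellman_optimality} shows that the rest of the benchmark is at most \(J^\star_{T-w+1}(s_w)\) plus a span term. That span term is bounded by Lemma~\ref{lem:value_span_diameter} as \(2LSD\), because the optimal trajectory's initial history and our initialization \(s_w\) both lie in \(\mathcal S_{T-w+1}\). This accounts for the \(w\) and part of the \(D\) terms.

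Second, we analyse a single episode \(k=[t_k,t_{k+1})\) with frozen reward \(\overline U_k\). Since \(\overline U_k\ge\langle\theta^\star,\cdot\rangle\) pointwise, the monotonicity of the Bellman recursion gives \(J_{k,n}\ge J^\star_n\) on \(\mathcal S_n\). The learner acts greedily with respect to \(J_{k,\cdot}\), so the Bellman equation telescopes within the episode:
\[
\sum_{t=t_k}^{t_{k+1}-1}\overline U_k(x_t)=J_{k,T-t_k+1}(s_{t_k})-J_{k,T-t_{k+1}+1}(s_{t_{k+1}}).
\]
Hence the in-episode regret against \(J^\star\) equals a sum of per-round bonus terms \(\overline U_k(x_t)-\langle\theta^\star,x_t\rangle\le 2\beta_k\|x_t\|_{W_k^{-1}}\), plus a boundary mismatch at episode ends. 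That mismatch is \([J_{k+1,n}-J^\star_n](s_{t_{k+1}})-[J_{k,n}-J^\star_n](s_{t_{k+1}})\).

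Third, we control the boundary terms, which we expect to be the main obstacle. Our first attempt is to write the total regret as \(J^\star_{n_1}(s_{t_1})-\sum_t\langle\theta^\star,x_t\rangle\). We then insert \(J_{k,n}\) at episode starts and use optimism, \(J^\star_n(s_{t_k})\le J_{k,n}(s_{t_k})\), at the start of episode \(k\). The remaining terms at episode ends are \(J^\star_{n}(s_{t_{k+1}})-J_{k,n}(s_{t_{k+1}})\le 0\) plus a difference of starting points. Naively this double-counts the future value. The remedy is to compare, at each switch, the continuation value under \(\overline U_k\) to that under \(\overline U_{k+1}\). Both are values of \(n\)-step problems on \(\mathcal S_n\) with rewards in \([-LS,LS]\). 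Applying Lemma~\ref{lem:value_span_diameter} to the difference reward \(g=\overline U_{k}-\langle\theta^\star,\cdot\rangle\) bounds the state-dependence of each correction by \(D\,\rng(g)\le 2LSD\). It is this span, rather than the horizon, that enters. If this decomposition fails, the fallback is a direct path argument. From \(s_{t_{k+1}}\) one can reach, within \(D\) steps, any history on an optimal path, so restarting costs at most \(2LSD\) per switch.

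Finally, we sum the pieces. The bonus terms give
\[
2\beta_T\sum_t\|x_t\|_{W_k^{-1}}\le 4\beta_T\sum_t\|x_t\|_{V_t^{-1}},
\]
since within an episode \(\det V_t\le2\det W_k\), so \(W_k^{-1}\preceq 2V_t^{-1}\). By Cauchy--Schwarz and the elliptical potential lemma with \(\lambda\ge L^2\), this is \(\widetilde{\mathcal O}(d\sqrt T)\). Determinant doubling limits the number of episodes to \(K\le d\log_2(1+TL^2/(d\lambda))+1=\widetilde{\mathcal O}(d)\). Each episode boundary costs \(\mathcal O(LSD)\), giving \(\widetilde{\mathcal O}(dD)\). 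Adding the \(\mathcal O(w)\) initialization term and the \(\mathcal O(1)\) failure-event term yields \(\widetilde{\mathcal O}(d\sqrt T+dD+w)\).
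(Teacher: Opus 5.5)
Your good-event argument, clipping and optimism, initialization bound, bonus summation and episode count all coincide with the paper's. The gap is in Step~3, which carries the whole argument, and it does not work as described. Once you apply optimism $J^\star_{n_1}(s_{t_1})\le J_{1,n_1}(s_{t_1})$, the within-episode Bellman identities plus the insertion of $J_{k+1,n_{k+1}}$ at each boundary form an \emph{exact} identity:
\[ J_{1,n_1}(s_{t_1})=\sum_{k}\sum_{t\in\mathcal I_k}\overline U_k(x_t)+\sum_{k<K}\bigl(J_{k,n_{k+1}}-J_{k+1,n_{k+1}}\bigr)(s_{t_{k+1}}). \]
So your bonus terms plus boundary corrections sum exactly to $J_{1,n_1}(s_{t_1})-\sum_{t\ge t_1}\langle\theta^\star,x_t\rangle$. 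This is at least $J_{1,n_1}(s_{t_1})-J^\star_{n_1}(s_{t_1})$, the first episode's optimistic overestimate of the \emph{entire} remaining horizon, which can be of order $LS\,T$. For example, take $d=1$, $\theta^\star=0$, and initialization at $x=0$: then $\overline U_1(L)=LS$ after clipping. No bound on the corrections can rescue this.

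The variant with optimism at every episode start has the same defect. It leaves $\bigl(J_{k,n_{k+1}}-J^\star_{n_{k+1}}\bigr)(s_{t_{k+1}})\ge 0$, which is the \emph{level} of the overestimate at a single state, not a span. Lemma~\ref{lem:value_span_diameter} only controls how one value function varies across states. It says nothing about two value functions with different rewards evaluated at the same state. Also, $J_{k,n}-J^\star_n$ is not the value function of $g=\overline U_k-\langle\theta^\star,\cdot\rangle$, since the maximum is not additive, so the lemma cannot be applied to $g$ as you propose.

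The missing idea, which your fallback points toward without supplying, is to use optimism only over the episode's own length. The learner's Bellman optimality then turns the continuation overestimate into a span. Let $m=n_{k+1}$, and let $s_k^\star\in\mathcal S_m$ be the history reached after $\ell_k$ steps of a $\theta^\star$-optimal continuation $x^\star$ from $s_{t_k}$.

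\begin{itemize}
\item Optimism on this prefix gives $J^\star_{n_k}(s_{t_k})\le\sum_{j<\ell_k}\overline U_k(x^\star_{t_k+j})+J^\star_m(s_k^\star)$.
\item Compare the learner's $\overline U_k$-plan against the alternative ``follow the optimal prefix, then the best $\overline U_k$-continuation from $s_k^\star$.'' Optimality of the plan gives $\sum_{j<\ell_k}\overline U_k(x^\star_{t_k+j})+J_{k,m}(s_k^\star)\le\sum_{t\in\mathcal I_k}\overline U_k(x_t)+J_{k,m}(s_{t_{k+1}})$.
\item Combining the two, the overestimate $J_{k,m}-J^\star_m$ appears at the two \emph{different} states $s_{t_{k+1}}$ and $s_k^\star$ with opposite signs. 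It is therefore bounded by $\Span_{\mathcal S_m}(J^\star_m)+\Span_{\mathcal S_m}(J_{k,m})\le 4LSD$. This uses the lemma separately for $\langle\theta^\star,\cdot\rangle$ and for $\overline U_k$, each of range at most $2LS$.
\end{itemize}

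The result is $J^\star_{n_k}(s_{t_k})\le\sum_{t\in\mathcal I_k}\overline U_k(x_t)+J^\star_{n_{k+1}}(s_{t_{k+1}})+4LSD$, which telescopes, with no boundary term in the final episode. A ``restart cost'' for $J^\star$ alone is not enough here. You also need the span of the optimistic continuation $J_{k,m}$, because the learner's in-episode actions maximize the full-horizon optimistic objective, not the $\ell_k$-step prefix.
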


The proof is provided in Appendix~\ref{app:proof_thm_noncyclic_regret}. In Appendix~\ref{app:approximate_planning}, we relax Assumption~\ref{assump:exact_optimistic_planner} by allowing additive approximate remaining-horizon planning. Algorithm~\ref{alg:noncyclic-rare-switching} summarizes the complete procedure.
\section{Experiments}
\label{sec:experiments}

In this section, we evaluate our framework in two real-world sequential decision-making settings: a recommendation system with diversity constraints and LLM routing with cost constraints. In the recommendation setting, the learner sequentially selects content categories based on user preferences while enforcing sliding-window diversity constraints to prevent excessive repetition and promote diversified recommendations. In the LLM routing setting, the learner selects among multiple language models with different response qualities and inference costs. We impose sliding-window cost constraints to ensure that the total inference cost within each decision window remains below a predefined budget. The synthetic experiments are conducted in controlled settings, with the corresponding ablation studies and sensitivity analyses reported in Appendix~\ref{app:synthetic_experiments} and Appendix~\ref{app:ablation}, respectively.

\begin{table*}[t]
\centering
\caption{Recommendation results on KuaiRand-1K over 100 users, with 10 independent simulations per user and horizon. Average reward and normalized pseudo-regret are reported as mean $\pm$ 95\% confidence interval.}
\label{tab:kuairand-results}
\begin{tabular}{c l c c c c}
\toprule
$T$ & Method & Avg. Reward & Norm. Pseudo-Regret & Violations & Updates \\
\midrule

500
& Rare-Update Planning
& $0.4618 \pm 0.0343$
& $0.0540 \pm 0.0033$
& $0.0$
& $31.8$ \\

& Myopic Feasible OFUL
& $0.4638 \pm 0.0344$
& $0.0520 \pm 0.0031$
& $0.0$
& $500.0$ \\

& Primal-Dual OFUL
& $0.4639 \pm 0.0344$
& $0.0518 \pm 0.0031$
& $160.6$
& $500.0$ \\

& Unconstrained OFUL
& $0.4643 \pm 0.0344$
& $0.0515 \pm 0.0031$
& $177.6$
& $500.0$ \\

\midrule

2500
& Rare-Update Planning
& $0.4772 \pm 0.0350$
& $0.0386 \pm 0.0019$
& $0.0$
& $49.5$ \\

& Myopic Feasible OFUL
& $0.4801 \pm 0.0352$
& $0.0356 \pm 0.0017$
& $0.0$
& $2500.0$ \\

& Primal-Dual OFUL
& $0.4803 \pm 0.0352$
& $0.0355 \pm 0.0017$
& $1803.4$
& $2500.0$ \\

& Unconstrained OFUL
& $0.4854 \pm 0.0351$
& $0.0304 \pm 0.0021$
& $1864.2$
& $2500.0$ \\

\midrule

5000
& Rare-Update Planning
& $0.4844 \pm 0.0353$
& $0.0313 \pm 0.0014$
& $0.0$
& $57.0$ \\

& Myopic Feasible OFUL
& $0.4876 \pm 0.0354$
& $0.0282 \pm 0.0012$
& $0.0$
& $5000.0$ \\

& Primal-Dual OFUL
& $0.4876 \pm 0.0354$
& $0.0281 \pm 0.0013$
& $4163.3$
& $5000.0$ \\

& Unconstrained OFUL
& $0.4980 \pm 0.0354$
& $0.0177 \pm 0.0029$
& $4240.8$
& $5000.0$ \\

\bottomrule
\end{tabular}
\end{table*}

\paragraph{Recommendation with Diversity Constraints}

We consider an online recommendation problem based on the KuaiRand-1K dataset~\citep{gao2022kuairand}. The items are grouped into \(K\) content categories, with action set \(\mathcal{A}=\{1,\ldots,K\}\). At each round \(t\), the learner recommends a category \(a_t\in\mathcal{A}\) and observes a binary reward
\(
Y_t\sim\operatorname{Bernoulli}(\theta_{a_t}^\star),
\)
where \(\theta_j^\star\in[0,1]\) denotes the unknown click probability associated with category \(j\). The learner aims to identify the categories preferred by the user and maximize cumulative reward over time. A policy that focuses only on immediate reward may repeatedly recommend a small number of highly preferred categories, leading to overly repetitive suggestions. To maintain recommendation diversity, we impose exact sliding-window exposure constraints that limit how frequently each category can be recommended within any recent window. Further details of this setting are provided in Appendix~\ref{app:real_world_experiments}.

\paragraph{LLM Routing with Cost Constraints}

We consider an online LLM routing problem based on the LLMRouterBench dataset~\citep{li2026llmrouterbench}. The available language models form the action set \(\mathcal{A}=\{1,\ldots,K\}\), where each action corresponds to selecting a different LLM for answering a user query. At each round \(t\), the learner observes an incoming query \(q_t\), selects a feasible model \(a_t\in\mathcal{A}\), and receives a reward \(Y_t\) given by the corresponding LLMRouterBench quality score; if no model is feasible, a zero-cost, zero-reward no-op fallback is used. The learner uses query--model features \(\phi(q_t,a_t)\) and a linear predictor to estimate model quality and adaptively route queries across models. The learner aims to maximize response quality while controlling inference cost, since repeatedly selecting high-quality but expensive models can consume the available sliding-window budget and restrict subsequent routing decisions. To enforce resource efficiency, we impose exact sliding-window cost constraints that limit the total inference cost incurred within every recent window. Since future queries are unknown at decision time, the LLM-routing experiment uses myopic feasible selection after observing the current query rather than full remaining-horizon viability planning. Further details are provided in Appendix~\ref{app:real_world_experiments}.

\paragraph{Baselines.}
We compare our \emph{Rare-Update Optimistic Planning} method with three baselines. \emph{Myopic Feasible OFUL} selects the currently feasible action with the largest optimistic reward and recomputes its optimistic decision at every round, providing a fully updated feasible baseline against which to evaluate the rare-update mechanism. \emph{Primal-Dual OFUL} incorporates the sliding-window constraints through nonnegative Lagrangian dual variables and balances optimistic reward maximization with constraint violation penalties, inspired by primal-dual methods for constrained stochastic linear bandits~\citep{liu2021efficient}. \emph{Unconstrained OFUL} uses the same optimistic reward model but ignores the sliding-window constraints, providing a reference for the cost of enforcing exact feasibility. Further details are provided in Appendix~\ref{app:implementation_details}.

\paragraph{Real-world datasets.}
For the recommendation problem, we use the KuaiRand-1K dataset~\citep{gao2022kuairand}, collected from the Kuaishou short-video platform. The dataset contains \(1{,}000\) users, approximately \(4.37\) million videos, and \(11.7\) million user--video interactions with sequential histories and category information, enabling content-category recommendation under diversity constraints. For the LLM routing problem, we use LLMRouterBench~\citep{li2026llmrouterbench}, a large-scale benchmark containing over \(400\)K query instances from \(21\) datasets evaluated across \(33\) language models. It provides response quality scores and inference cost information, enabling LLM selection as an online decision problem under resource constraints.

\begin{table*}[t]
\centering
\caption{LLM routing results with sliding-window length \(w=5\) and budget \(B=0.40\) over 10 independent simulations for each horizon. Average reward and normalized pseudo-regret are reported as mean \(\pm\) 95\% confidence interval.}
\label{tab:llmrouter-results}
\begin{tabular}{c l c c c c}
\toprule
$T$ & Method & Avg. Reward & Norm. Pseudo-Regret & Violations & Updates \\
\midrule

500
& Rare-Update Optimistic Planning
& $0.7655 \pm 0.0145$
& $0.1645 \pm 0.0134$
& $0.0$
& $250.2$ \\

& Myopic Feasible OFUL
& $0.7701 \pm 0.0138$
& $0.1599 \pm 0.0130$
& $0.0$
& $500.0$ \\

& Unconstrained OFUL
& $0.7713 \pm 0.0167$
& $0.1587 \pm 0.0160$
& $4.7$
& $500.0$ \\

& Primal-Dual OFUL
& $0.7713 \pm 0.0167$
& $0.1587 \pm 0.0160$
& $4.7$
& $500.0$ \\

\midrule

2500
& Rare-Update Optimistic Planning
& $0.7775 \pm 0.0049$
& $0.1549 \pm 0.0040$
& $0.0$
& $789.8$ \\

& Myopic Feasible OFUL
& $0.7768 \pm 0.0048$
& $0.1556 \pm 0.0048$
& $0.0$
& $2500.0$ \\

& Unconstrained OFUL
& $0.7780 \pm 0.0059$
& $0.1543 \pm 0.0060$
& $38.6$
& $2500.0$ \\

& Primal-Dual OFUL
& $0.7782 \pm 0.0061$
& $0.1541 \pm 0.0062$
& $38.1$
& $2500.0$ \\

\midrule

5000
& Rare-Update Optimistic Planning
& $0.7797 \pm 0.0037$
& $0.1531 \pm 0.0038$
& $0.0$
& $1104.8$ \\

& Myopic Feasible OFUL
& $0.7793 \pm 0.0047$
& $0.1535 \pm 0.0047$
& $0.0$
& $5000.0$ \\

& Unconstrained OFUL
& $0.7769 \pm 0.0036$
& $0.1559 \pm 0.0036$
& $67.6$
& $5000.0$ \\

& Primal-Dual OFUL
& $0.7769 \pm 0.0036$
& $0.1559 \pm 0.0036$
& $67.6$
& $5000.0$ \\

\bottomrule
\end{tabular}
\end{table*}

\paragraph{Evaluation metrics.}
We evaluate performance using four metrics: average expected reward, normalized pseudo-regret, the number of sliding-window constraint violations, and the number of policy updates. The average expected reward measures the utility obtained by the selected actions, while the normalized pseudo-regret evaluates performance relative to the offline-optimal feasible policy. Constraint violations quantify the number of times a policy violates the corresponding sliding-window constraint, and the number of updates measures the frequency of policy recomputation. Detailed definitions of these metrics and their application-specific formulations are provided in Appendix~\ref{app:implementation_details}.

\paragraph{Recommendation Performance under Diversity Constraints.}
Table~\ref{tab:kuairand-results} summarizes the recommendation results at representative horizons. Unconstrained OFUL attains the lowest normalized pseudo-regret, but does so while incurring a large number of sliding-window constraint violations, particularly at longer horizons. In contrast, Rare-Update Optimistic Planning and Myopic Feasible OFUL satisfy the hard diversity constraints with zero violations across all reported horizons. Rare-Update Optimistic Planning achieves reward and normalized regret close to those of Myopic Feasible OFUL while requiring dramatically fewer policy updates: approximately $32$, $49$, and $57$ updates at $T=500$, $2500$, and $5000$, respectively, compared with one update per round for Myopic Feasible OFUL. These results highlight the effectiveness of the rare-update mechanism in maintaining competitive constrained performance while substantially reducing the frequency of policy recomputation.

\paragraph{LLM Routing Performance under Cost Constraints.}
Table~\ref{tab:llmrouter-results} reports LLM routing performance under the budget constraint \(B=0.40\) across different horizons \(T\). Since future query contexts are unavailable at decision time, we use the \(H=1\) specialization of Rare-Update Optimistic Planning. The method maintains zero constraint violations across all horizons and achieves reward and normalized pseudo-regret comparable to the other approaches, particularly at the longer horizons. Importantly, it substantially reduces the number of policy updates, requiring only \(250.2\), \(789.8\), and \(1104.8\) updates at \(T=500\), \(2500\), and \(5000\), respectively, compared with one update per round for Myopic Feasible OFUL. Myopic Feasible OFUL also satisfies the constraint exactly, whereas Unconstrained OFUL and Primal-Dual OFUL incur sliding-window cost violations whose frequency increases with the horizon. Additional results across budget levels and horizons are provided in Appendix~\ref{app:ablation}.
\section{Conclusion}
\label{sec:conclusion}
We study learning under exact sliding-window constraints, where each action affects both current reward and future feasibility. Our results show that the main difficulty can be understood through three structural ideas: geometry, reachability, and planning. Under convex and cyclically invariant constraints, cyclic symmetrization reduces the offline problem to stationary optimization: stationary behavior is optimal when \(w\mid T\) and within an additive \(\mathcal{O}(w)\) otherwise. In the online setting, geometry alone is not enough, since feasible behaviors may be disconnected. The transition diameter \(\tau\) captures this reachability cost and enters the regret through the price of feasibility-preserving transitions. Beyond cyclic symmetry, optimal behavior may be non-stationary. We therefore model the recent \(w-1\) actions as a finite-memory state and introduce the history-state diameter \(D\), which controls communication and continuation-value variation across viable histories. This leads to exact optimistic remaining-horizon planning with regret \(\widetilde{\mathcal O}(d\sqrt{T}+dD+w)\). Overall, the geometry--reachability--planning viewpoint separates the main sources of difficulty created by exact temporal feasibility and provides a basis for studying broader history-dependent constraint models.
\subsection*{AI use statement}

We used generative AI tools to improve the language and presentation of the manuscript and to review proof drafts for potentially nontrivial or inadvertently omitted steps. We did not use generative AI tools to formulate mathematical claims, develop the theoretical framework, design experiments, implement methods, clean or reformat datasets, or interpret experimental results. All AI-assisted feedback related to the proofs was independently checked and verified by the authors. The authors take full responsibility for the final content of the paper.

\bibliographystyle{apalike}
\bibliography{ctrl}

\clearpage
\appendix
\thispagestyle{empty}

% Supplementary material: To improve readability, you must use a single-column format for the supplementary material.
\onecolumn
\aistatstitle{Supplementary Materials}
\section{Limitations and Extensions}
\label{app:limitations}

Our results have several limitations. In the cyclic setting, the stationary reduction relies on convexity and cyclic-shift invariance, and may fail for discrete or nonconvex action spaces. In the general non-cyclic setting, our regret guarantee assumes a finite history-state diameter and access to exact remaining-horizon planning. For finite action spaces, the history-state space can grow as \(|\mathcal X|^{w-1}\), making exact planning computationally expensive for large \(w\).

These limitations suggest several directions for future work. Although the stationary reduction may fail without convexity, the finite-memory formulation remains applicable, motivating the study of periodic or recurrent feasible policies for discrete and nonconvex action spaces. Developing scalable planning methods that exploit the structure of feasible transitions, rather than enumerating the full history-state space, is another important direction. A further open question concerns the dependence on the transition diameter. Our rare-switching regret bound contains a transition term of order \(\widetilde{\mathcal O}(\tau d)\), whereas Proposition~\ref{prop:tau-lower-bound} establishes only an \(\Omega(\tau)\) lower bound. It remains open whether the additional factor of \(d\) is intrinsic or can be removed through a sharper algorithm or analysis.
\section{Computational Oracles}
\label{app:computational-oracles}

Our regret guarantees separate statistical learning from the computational tasks required to implement the algorithms. In the cyclic setting, each episode requires two operations: selecting an optimistic stationary action and constructing a feasible transition between stationary histories. The general non-cyclic setting additionally requires exact remaining-horizon planning. We formalize these operations below.

\paragraph{Optimistic-action oracle.}
Given an estimate \(\widehat\theta\in\mathbb R^d\), a positive-definite matrix \(V\succ0\), and a confidence radius \(\beta\geq0\), the optimistic-action oracle returns
\[
\operatorname{Opt}(\widehat\theta,V,\beta)
\in
\arg\max_{z\in\mathcal Z}
\left\{
\langle\widehat\theta,z\rangle
+
\beta\|z\|_{V^{-1}}
\right\}.
\]
Since \(\mathcal Z\) is compact and the objective is continuous, a maximizer exists. We assume that the oracle returns an exact maximizer. The objective
\[
z\mapsto
\langle\widehat\theta,z\rangle
+
\beta\|z\|_{V^{-1}}
\]
is convex in \(z\). Thus, even when \(\mathcal Z\) is convex, optimistic action selection is generally not a convex optimization problem because a convex function is being maximized. Efficient implementation therefore requires additional structure in \(\mathcal Z\).

\paragraph{Transition oracle.}
For \(w=1\), no history transition is required, and we take \(\tau=0\). For \(w\geq2\) and \(z,z'\in\mathcal Z\), the transition oracle returns
\[
\Gamma(z,z')
=
(a_1,\ldots,a_m),
\qquad
m\leq\tau,
\]
such that, starting from
\(
h_0=h(z),
\)
the histories
\[
h_j
=
F(h_{j-1},a_j),
\qquad
j=1,\ldots,m,
\]
satisfy
\[
(h_{j-1},a_j)\in\mathcal C,
\qquad
j=1,\ldots,m,
\]
and terminate at
\(
h_m=h(z').
\)
Hence every window encountered during the transition is feasible, and the target stationary history is reached in at most \(\tau\) rounds. When \(z=z'\), the oracle may return the empty transition.

The regret guarantee in Section~\ref{sec:linear-bandit} assumes exact access to both \(\operatorname{Opt}\) and \(\Gamma\). This isolates the statistical difficulty of learning the reward from the computational difficulty of optimizing over and navigating the feasible set.

\paragraph{Remaining-horizon planning oracle.}
In the general non-cyclic setting of Section~\ref{sec:online_learning_without_cyclic}, the learner must instead optimize over viable history states. Given a viable history \(s\in\mathcal S_n\) and an optimistic reward function \(U:\mathcal X\to\mathbb R\), the planning oracle computes an exact Bellman-optimal \(n\)-step continuation, with value
\[
J_0^U(s):=0,
\]
and, for \(n\geq1\),
\[
J_n^U(s)
=
\max_{\substack{x\in A(s)\\
F(s,x)\in\mathcal S_{n-1}}}
\left\{
U(x)+J_{n-1}^U(F(s,x))
\right\}.
\]
The regret guarantee for the non-cyclic setting is conditional on exact access to this planner. For finite action spaces, straightforward backward dynamic programming over viable histories takes \(\mathcal{O}(T|\mathcal X|^w)\) time in the direct history representation, up to viability preprocessing. For general continuous constraints, efficient planning requires additional problem structure.

\section{Relation to Linear MDPs and Constrained Reinforcement Learning}
\label{app:relation_rl}

In this section, we clarify the connection between our formulation and existing reinforcement-learning models. We first show that a direct reduction to linear MDPs~\citep{jin2020provably} can lead to much weaker dependence on the window length \(w\). The reason is that the linear-MDP framework must encode the history-dependent transition dynamics through a linear feature representation, whose dimension can grow exponentially with \(w\). In contrast, our setting has known deterministic history dynamics and only \(d\)-dimensional reward uncertainty, allowing our analysis to avoid this exponential dependence. We then compare our formulation with tabular and constrained reinforcement learning, highlighting why generic state-augmentation approaches can introduce exponential dependence on the window length and why existing constrained-RL guarantees do not directly capture our exact sliding-window setting.

\paragraph{Relation to linear MDPs.}
\citet{jin2020provably} consider a $p$-dimensional linear MDP with a known feature map $\phi:\mathcal S\times\mathcal A\to\mathbb R^p$ such that both the expected reward and the transition kernel are linear in the same features:
\[
r_h(s,a)=\langle\phi(s,a),\theta_h\rangle,\qquad
P_h(\cdot\mid s,a)=\langle\phi(s,a),\mu_h(\cdot)\rangle.
\]
Thus, the feature dimension $p$ must be rich enough to represent both rewards and transitions. Unconstrained linear bandits~\citep{dani2008stochastic,abbasi2011improved} arise as the special case $H=1$. Our sliding-window problem has a different structure. In Section~\ref{sec:online_learning_without_cyclic}, the state is the recent action history
\( s=(x_1,\ldots,x_{w-1}), \) and choosing action $x$ deterministically updates the state according to
\( F(s,x)=(x_2,\ldots,x_{w-1},x). \) Hence the transition kernel is known and deterministic: \(P(\cdot\mid s,x)=\delta_{F(s,x)}(\cdot)\).
In contrast, the unknown reward remains $d$-dimensional:
\[
r^\star(s,x)=\langle\theta^\star,x\rangle,\qquad \theta^\star\in\mathbb R^d,
\]
where the same parameter $\theta^\star$ is shared across all history states. Thus, the statistical uncertainty in our problem is only $d$-dimensional, even though the history-state representation can be much larger.

The following proposition shows that representing the natural history-shift dynamics as a linear MDP can require a feature dimension exponential in the window length.

\begin{proposition}
\label{prop:history_linear_mdp_dimension}
Let \(w\geq2\) and let \(\mathcal X=\{a_1,\ldots,a_m\}\subset\mathbb R^d\), with \(m\geq2\). Consider the history-state space \(\mathcal S=\mathcal X^{w-1}\) with deterministic transition \(F(s,x)=(x_2,\ldots,x_{w-1},x)\). Any exact linear-MDP representation of this transition kernel on the full history-state space requires feature dimension \( p\geq m^{w-1}. \) In particular, the required feature dimension is exponential in \(w\) for fixed \(m\geq2\).
\end{proposition}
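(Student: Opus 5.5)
The plan is a rank argument. An exact linear-MDP representation means there is a feature map \(\phi:\mathcal S\times\mathcal X\to\mathbb R^p\) and signed measures \(\mu=(\mu_1,\ldots,\mu_p)\) on \(\mathcal S\) such that, for every \((s,x)\) and every \(s'\in\mathcal S\),
\[
\delta_{F(s,x)}(s')=P(s'\mid s,x)=\langle\phi(s,x),\mu(s')\rangle.
\]
Since \(\mathcal S=\mathcal X^{w-1}\) is finite with \(N:=m^{w-1}\) elements, we can collect these identities into a matrix. Let \(\mathbf P\in\mathbb R^{(N m)\times N}\) be the transition matrix, with rows indexed by pairs \((s,x)\) and columns by next states \(s'\). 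Then the representation says exactly that \(\mathbf P=\Phi M\). Here \(\Phi\in\mathbb R^{(Nm)\times p}\) has rows \(\phi(s,x)^\top\), and \(M\in\mathbb R^{p\times N}\) has columns \(\mu(s')\). Hence \(\operatorname{rank}(\mathbf P)\le p\), and it suffices to show \(\operatorname{rank}(\mathbf P)=N\).

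To show that \(\mathbf P\) has full column rank, I would show that every next state is reached, so that \(\mathbf P\) contains an \(N\times N\) identity submatrix. Given any \(s'=(y_1,\ldots,y_{w-1})\in\mathcal S\), pick the predecessor \(s=(a_1,y_1,\ldots,y_{w-2})\) and the action \(x=y_{w-1}\). Then \(F(s,x)=s'\). When \(w=2\), the state \(s\) is just the single entry \(a_1\), and \(F(s,x)=x=y_1\).

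Thus for each \(s'\) there is a row \((s,x)\) of \(\mathbf P\) equal to the standard basis row vector \(e_{s'}^\top\). Selecting one such row for each \(s'\) gives the \(N\times N\) identity, so \(\operatorname{rank}(\mathbf P)\ge N\). Combined with the factorization, this yields \(p\ge m^{w-1}\), which is exponential in \(w\) for fixed \(m\ge2\).

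The only point needing care is the formalization of "exact linear-MDP representation" on a finite state space: the \(\mu_i\) are signed measures, so the identity can be evaluated pointwise at singletons \(\{s'\}\), and the transition kernel does not depend on the stage \(h\). Once the factorization \(\mathbf P=\Phi M\) is written down, the result follows immediately from \(\operatorname{rank}(\mathbf P)\le\operatorname{rank}(M)\le p\). If the representation is allowed to be stage-dependent, the same argument applies to any single stage. The hypothesis \(m\ge2\) is needed only for the "exponential in \(w\)" conclusion.
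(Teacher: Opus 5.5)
Your proposal is correct and matches the paper's proof essentially step for step: the factorization \(P=\Phi M\) bounds \(\operatorname{rank}(P)\) by \(p\), and taking the predecessor \(s=(a,y_1,\ldots,y_{w-2})\) with action \(x=y_{w-1}\) shows that every standard basis row appears in \(P\), so \(\operatorname{rank}(P)=m^{w-1}\). The only difference is that the paper bounds the rank through \(\operatorname{rank}(\Phi)\) rather than \(\operatorname{rank}(M)\), which is immaterial.
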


\begin{proof}
Let \(P\) be the transition matrix with rows indexed by \((s,x)\in\mathcal S\times\mathcal X\) and columns indexed by \(s'\in\mathcal S\), where
\[
P_{(s,x),s'}=\mathbf 1\{F(s,x)=s'\}.
\]
Since \(|\mathcal S|=m^{w-1}\), the matrix \(P\) has \(m^{w-1}\) columns. Fix any \(s'=(b_1,\ldots,b_{w-1})\in\mathcal S\). For any \(a\in\mathcal X\), choose \(s=(a,b_1,\ldots,b_{w-2})\) and \(x=b_{w-1}\). Then \(F(s,x)=s'\), so the corresponding row of \(P\) is the standard basis vector associated with \(s'\). Hence the rows of \(P\) contain all \(m^{w-1}\) standard basis vectors, which are linearly independent. Therefore,
\( \operatorname{rank}(P)=m^{w-1}.\) If the transition kernel admits an exact \(p\)-dimensional linear-MDP representation, then in the finite-state setting \( P=\Phi M, \) where \(\Phi\) has \(p\) columns. Thus,
\[
m^{w-1}=\operatorname{rank}(P)\leq\operatorname{rank}(\Phi)\leq p,
\]
which proves \(p\geq m^{w-1}\).
\end{proof}

Proposition~\ref{prop:history_linear_mdp_dimension} clarifies the distinction between the reward dimension \(d\) and the feature dimension \(p\) required by a generic linear-MDP representation. Representing the known history-shift dynamics in the standard linear-MDP form may require \(p\geq m^{w-1}\), which is exponential in \(w\). Thus, although the history-augmented problem is an MDP, a low-dimensional linear-MDP representation need not preserve the structure that is useful in our setting. Our analysis instead learns only the \(d\)-dimensional reward parameter and handles the known history dynamics through planning and the history-state diameter \(D\). This yields the regret bound \(\widetilde{\mathcal O}(d\sqrt{T}+dD+w)\), whose statistical dependence is on \(d\) rather than on the potentially exponential representation dimension \(p\).

\paragraph{Comparison with a tabular reduction.}
One could instead treat the history-augmented process as a generic tabular MDP. For a finite action space with \(|\mathcal X|=m\), the full history representation has \(|\mathcal S|=m^{w-1}\) and \(|\mathcal A|=m\), and therefore \(|\mathcal S||\mathcal A|=m^w\). Thus, the state-action space grows exponentially with the window length \(w\). This growth can also appear in generic tabular RL regret guarantees. For example, \citet{azar2017minimax} establish finite-horizon tabular RL bounds with leading dependence of the form
\( \widetilde{\mathcal O}\!\left(\sqrt{H|\mathcal S||\mathcal A|N}\right), \)
where \(H\) is the episodic horizon and \(N\) is the total number of interaction steps. Applied to the full history-state representation, this becomes
\(
\widetilde{\mathcal O}\!\left(\sqrt{Hm^wN}\right)
=
\widetilde{\mathcal O}\!\left(m^{w/2}\sqrt{HN}\right),
\)
which has exponential dependence on \(w\).

Our analysis avoids this state-space dependence by exploiting the structure of the problem: the history dynamics are known and deterministic, while the reward uncertainty is governed by a single \(d\)-dimensional parameter shared across all histories. Rather than depending on the number of history states, our regret bound depends on their communication structure through the history-state diameter \(D\), yielding \(\widetilde{\mathcal O}(d\sqrt{T}+dD+w)\). For many constraint classes considered in this work, \(D=\mathcal{O}(w)\) (see Appendix~\ref{app:history_diameter_examples}), so the bound simplifies to \(\widetilde{\mathcal O}(d\sqrt{T}+dw)\). Thus, for these classes, the dependence on temporal memory is linear in \(w\), up to logarithmic and dimension factors, even though the full history-state space is exponential in \(w\). This highlights the advantage of exploiting the low-dimensional reward structure and known finite-memory dynamics rather than treating the problem as a generic tabular MDP.

\paragraph{Relation to constrained and safe reinforcement learning.}
Constrained Markov decision processes (CMDPs)~\citep{altman1999constrained} optimize expected cumulative reward subject to constraints on cumulative or expected costs, while safe reinforcement learning~\citep{amani2021safe} can impose stronger pathwise requirements that restrict the actions or states that may be visited during the trajectory. Our setting is closer to the latter notion: exact sliding-window feasibility requires every realized overlapping window to satisfy \((x_{t-w+1},\ldots,x_t)\in\mathcal C\) throughout the trajectory, rather than allowing violations that are compensated for later.

After augmenting the state with the previous \(w-1\) actions, the feasible-action set \(\mathcal A(s)=\{x:(s,x)\in\mathcal C\}\) can be interpreted as a state-dependent hard safety constraint. Thus, our problem can be viewed abstractly as a safe MDP. The key distinction lies in the structure of the problem: the history dynamics and constraint set are known, feasibility must hold exactly at every round, and all reward uncertainty is governed by a single \(d\)-dimensional parameter shared across history states. Generic safe-RL formulations do not typically exploit this combination of known finite-memory dynamics, exact pathwise feasibility, and low-dimensional shared reward uncertainty.

Moreover, applying a generic RL formulation after history augmentation can obscure this structure. As shown above, an exact linear-MDP representation of the full history dynamics may require feature dimension \(p\geq m^{w-1}\), while a tabular representation contains \(|\mathcal X|^{w-1}\) history states. To the best of our knowledge, existing constrained- and safe-RL guarantees do not directly yield this dependence for exact overlapping-window feasibility with low-dimensional reward uncertainty and known deterministic finite-memory dynamics.
\section{Examples of Convex Cyclically Invariant Constraints}
\label{app:demonstrating_functions}

The following examples illustrate useful classes of convex cyclically invariant window-feasibility sets. Throughout this appendix, for a window $x=(x_1,\ldots,x_w)\in\mathcal{X}^w$,
we use $x_i$ to denote its $i$th component.

\paragraph{Aggregate budget constraints.}
A cumulative budget constraint is given by
\[
\mathcal C
=
\left\{
x\in\mathcal X^w:
\sum_{i=1}^{w} c^\top x_i \leq B
\right\}.
\]

\paragraph{Maximum constraints.}
Worst-case constraints within a window can be expressed as
\[
\mathcal C
=
\left\{
x\in\mathcal X^w:
\max_{1\leq i\leq w}\phi(x_i)\leq \rho
\right\},
\]
where \(\phi\) is convex. A common example is
\[
\max_{1\leq i\leq w}\|Ax_i-b\|\leq \rho,
\]
which limits the maximum deviation of any action in the window.

\paragraph{Box constraints.}
Given \(\ell,u\in\mathbb R^d\) with \(\ell\leq u\),
\[
\mathcal C
=
\left\{
x\in\mathcal X^w:
\ell\leq x_i\leq u,\quad i=1,\ldots,w
\right\}.
\]

\paragraph{Ramping constraints.}
Using \(x_{w+1}=x_1\), cyclic variation can be controlled by
\[
\mathcal C
=
\left\{
x\in\mathcal X^w:
\max_{1\leq i\leq w}\|x_{i+1}-x_i\|\leq \rho
\right\}.
\]

\paragraph{Exponential constraints.}
Convex exponential penalties yield
\[
\mathcal C
=
\left\{
x\in\mathcal X^w:
\sum_{i=1}^{w}\exp(a^\top x_i-b)\leq \rho
\right\},
\]
which can represent cumulative risk, energy consumption, or congestion.

\paragraph{\(\ell_p\)-norm constraints.}
For \(p\geq1\),
\[
\mathcal C
=
\left\{
x\in\mathcal X^w:
\left(
\sum_{i=1}^{w}\|Ax_i-b\|_p^p
\right)^{1/p}
\leq \rho
\right\}
\]
captures aggregate deviation and robustness requirements.

\paragraph{Convex risk constraints.}
For a convex loss function \(\ell\),
\[
\mathcal C
=
\left\{
x\in\mathcal X^w:
\frac{1}{w}\sum_{i=1}^{w}\ell(x_i)\leq \rho
\right\}
\]
provides a general risk-based window constraint.

\section{Transition Diameter for Common Sliding-Window Constraints}
\label{app:transition_diameter}

We verify Assumption~\ref{ass:bounded-transitions} for the constraint classes introduced in Appendix~\ref{app:demonstrating_functions}. Table~\ref{tab:transition-diameters} summarizes the resulting transition-diameter bounds. Throughout, assume that the corresponding stationary feasible set \(\mathcal Z\) is nonempty. We consider \(w\geq2\); when \(w=1\), no transition history is required and one may take \(\tau=0\). For \(z,z'\in\mathcal X\) and \(j\in\{0,\ldots,w\}\), define the mixed window
\[
M_j(z,z')
:=
(\underbrace{z,\ldots,z}_{w-j},
 \underbrace{z',\ldots,z'}_{j}).
\]

\begin{lemma}
\label{lem:overwrite-transition}
Suppose that, for every \(z,z'\in\mathcal Z\),
\[
M_j(z,z')\in\mathcal C,
\qquad
j=1,\ldots,w-1.
\]
Then Assumption~\ref{ass:bounded-transitions} holds with
\( \tau\leq w-1. \)
\end{lemma}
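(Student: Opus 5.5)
The plan is to prove this with an explicit "overwrite" transition. The idea is to replace the $w-1$ copies of $z$ in the history $h(z)$ by copies of $z'$, one round at a time. For $z=z'$ the empty transition works, so $d_{\mathcal C}(h(z),h(z))=0$. For $z\neq z'$ we play
\[
\Gamma(z,z')=(a_1,\ldots,a_{w-1}),\qquad a_j=z',\quad j=1,\ldots,w-1,
\]
and check that every window encountered along the way is feasible.

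First we track the histories. Start from $h_0=h(z)=(z,\ldots,z)\in\mathcal X^{w-1}$ and set $h_j=F(h_{j-1},z')$. A short induction gives
\[
h_j=(\underbrace{z,\ldots,z}_{w-1-j},\underbrace{z',\ldots,z'}_{j}),\qquad j=0,\ldots,w-1.
\]
The shift $F$ drops the oldest entry, which is a copy of $z$ as long as $j\le w-1$, and appends $z'$. In particular $h_{w-1}=h(z')$.

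Next, the window realized at step $j$ is the concatenation
\[
(h_{j-1},a_j)=(\underbrace{z,\ldots,z}_{w-j},\underbrace{z',\ldots,z'}_{j})=M_j(z,z').
\]
The hypothesis gives $M_j(z,z')\in\mathcal C$ for every $j=1,\ldots,w-1$. So each of the $w-1$ updates is feasible, and the transition uses exactly $w-1$ rounds. This gives $d_{\mathcal C}(h(z),h(z'))\le w-1$ for all $z,z'\in\mathcal Z$. Taking the supremum yields $\tau\le w-1<\infty$, which is Assumption~\ref{ass:bounded-transitions}. The same sequence also realizes the transition oracle $\Gamma$ with $m\le\tau$.

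There is no real obstacle; the argument is mostly index bookkeeping. The step needing care is matching the window $(h_{j-1},a_j)$ to $M_j$ exactly, and confirming that $j=w$ is never needed. The final window $M_{w-1}$ is feasible by hypothesis. After it, the history is $h(z')$, and the next window $(z',\ldots,z')=M_w(z,z')$ is feasible because $z'\in\mathcal Z$. That window belongs to the subsequent stationary play rather than the transition.
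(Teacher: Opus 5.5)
Your proposal is correct and is the same argument as the paper's proof. In both, you play $z'$ for $w-1$ rounds, so the $j$-th window is exactly $M_j(z,z')$, which is feasible by hypothesis, and the final history is $h(z')$. Your explicit tracking of $h_j$ and your separate treatment of $z=z'$ via the empty transition only add bookkeeping detail that the paper states more briefly.
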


\begin{proof}
Fix \(z,z'\in\mathcal Z\) and select
\[
a_j=z',
\qquad
j=1,\ldots,w-1.
\]
Starting from \(h(z)\), the complete window generated by the \(j\)-th transition action is \(M_j(z,z')\), which is feasible by assumption. After \(w-1\) actions, all entries of the original history have been overwritten, and the resulting history is
\(
h_{w-1}=h(z').
\)
Thus the transition is feasible and has length \(w-1\).
\end{proof}

%%%%%%%%%%%%%%%%%%%%%%%%%%%%%%%%%%%%%%%%%%%%%%%%%%%%%%%%%%%%%%%%%%%%%%%%%%%%%%%

\begin{proposition}[Aggregate budget constraints]
\label{prop:budget-transition}
Let \(c\in\mathbb R^d\), \(B\in\mathbb R\), and define
\[
\mathcal C_{\mathrm{budget}}
:=
\left\{
(x_1,\ldots,x_w)\in\mathcal X^w:
\sum_{i=1}^w c^\top x_i\leq B
\right\}.
\]
Then \(\mathcal C_{\mathrm{budget}}\) is convex and cyclically invariant, and
its transition diameter satisfies
\( \tau\leq w-1. \)
\end{proposition}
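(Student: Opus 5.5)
The plan has three parts: convexity, cyclic invariance, and the transition-diameter bound. The last part reduces to Lemma~\ref{lem:overwrite-transition}.

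\emph{Convexity.} The map \((x_1,\ldots,x_w)\mapsto\sum_{i=1}^w c^\top x_i\) is linear on \((\mathbb R^d)^w\). Hence its sublevel set \(\{\sum_i c^\top x_i\le B\}\) is a closed half-space. Intersecting with \(\mathcal X^w\), which is convex because \(\mathcal X\) is (Assumption~\ref{ass:concave_reward} in the cyclic setting), gives a convex set. Closedness follows the same way from compactness of \(\mathcal X\), should Assumption~\ref{ass:cyclic_invariance} be needed in full.

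\emph{Cyclic invariance.} The constraint value \(\sum_i c^\top x_i\) is a symmetric function of the entries of the window. It is therefore unchanged under any permutation, and in particular under every power \(\mathcal S^k\) of the cyclic shift. Membership in \(\mathcal X^w\) is also permutation invariant.

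\emph{Transition diameter.} By Lemma~\ref{lem:overwrite-transition}, it suffices to show \(M_j(z,z')\in\mathcal C_{\mathrm{budget}}\) for all \(z,z'\in\mathcal Z\) and \(1\le j\le w-1\). From \(z\in\mathcal Z\) we get \(w\,c^\top z\le B\), and likewise \(w\,c^\top z'\le B\). The mixed window satisfies
\[
\sum_{i=1}^w c^\top \bigl(M_j(z,z')\bigr)_i=(w-j)c^\top z+j\,c^\top z'
=\tfrac{w-j}{w}\,(w c^\top z)+\tfrac{j}{w}\,(w c^\top z')\le B,
\]
which is a convex combination of two quantities each at most \(B\). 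All entries of \(M_j\) lie in \(\mathcal X\), so \(M_j\in\mathcal C_{\mathrm{budget}}\). Lemma~\ref{lem:overwrite-transition} then gives \(\tau\le w-1\), so Assumption~\ref{ass:bounded-transitions} holds.

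There is no real obstacle. The only point needing care is the mixed-window feasibility, and it follows directly from linearity of the budget: the window cost interpolates between the costs of the two stationary windows.
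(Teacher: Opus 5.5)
Your proposal is correct and follows the paper's proof: the paper also writes \(\mathcal Z=\{z\in\mathcal X: c^\top z\le B/w\}\), shows that each mixed window has cost \((w-j)c^\top z+j\,c^\top z'\le B\), and then applies Lemma~\ref{lem:overwrite-transition}. Your explicit checks of convexity (which relies on convexity of \(\mathcal X\), as you note) and of cyclic invariance supply details that the paper only asserts.
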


\begin{proof}
The stationary feasible set is
\[
\mathcal Z
=
\left\{
z\in\mathcal X:
c^\top z\leq\frac{B}{w}
\right\}.
\]
For \(z,z'\in\mathcal Z\) and \(j=1,\ldots,w-1\),
\[
\begin{aligned}
\sum_{i=1}^w c^\top M_j(z,z')_i
&=
(w-j)c^\top z+jc^\top z'
\leq
(w-j)\frac{B}{w}
+j\frac{B}{w}
=B.
\end{aligned}
\]
Thus every mixed window is feasible, and Lemma~\ref{lem:overwrite-transition} applies.
\end{proof}

%%%%%%%%%%%%%%%%%%%%%%%%%%%%%%%%%%%%%%%%%%%%%%%%%%%%%%%%%%%%%%%%%%%%%%%%%%%%%%%

\begin{proposition}[Maximum constraints]
\label{prop:maximum-transition}
Let \(\phi:\mathcal X\to\mathbb R\) be convex and define
\[
\mathcal C_{\max}
:=
\left\{
(x_1,\ldots,x_w)\in\mathcal X^w:
\max_{1\leq i\leq w}\phi(x_i)\leq\rho
\right\}.
\]
Then \(\mathcal C_{\max}\) is convex and cyclically invariant, and its
transition diameter satisfies
\( \tau\leq w-1. \)
\end{proposition}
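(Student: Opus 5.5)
The statement has three parts: convexity, cyclic invariance, and the bound \(\tau\le w-1\). For the last part we will reduce to Lemma~\ref{lem:overwrite-transition}, exactly as in the proof of Proposition~\ref{prop:budget-transition}.

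\emph{Cyclic invariance.} The condition \(\max_{1\le i\le w}\phi(x_i)\le\rho\) depends only on the set of entries of the window, not on their order. Any cyclic shift \(\mathcal S^k\) permutes the entries, so it preserves membership in \(\mathcal C_{\max}\).

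\emph{Convexity.} We write
\[
\mathcal C_{\max}=\bigcap_{i=1}^w\{x\in\mathcal X^w:\phi(x_i)\le\rho\}.
\]
Each set in the intersection is the preimage, under the linear coordinate projection \(x\mapsto x_i\), of the sublevel set \(\{y\in\mathcal X:\phi(y)\le\rho\}\). That sublevel set is convex because \(\phi\) is convex and \(\mathcal X\) is convex (as in the cyclic setting, Assumption~\ref{ass:concave_reward}). Preimages of convex sets under linear maps are convex, and intersections of convex sets are convex. Closedness is not asserted in the statement, so we do not need it; if it were wanted, it would follow from lower semicontinuity of \(\phi\).

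\emph{Transition diameter.} First we identify the stationary set:
\[
\mathcal Z=\{z\in\mathcal X:\phi(z)\le\rho\}.
\]
Now fix \(z,z'\in\mathcal Z\) and \(j\in\{1,\ldots,w-1\}\). Every entry of the mixed window \(M_j(z,z')\) is either \(z\) or \(z'\), so
\[
\max_i\phi\bigl(M_j(z,z')_i\bigr)=\max\{\phi(z),\phi(z')\}\le\rho.
\]
Hence \(M_j(z,z')\in\mathcal C_{\max}\) for all \(j=1,\ldots,w-1\). Lemma~\ref{lem:overwrite-transition} then gives \(\tau\le w-1\). The transition it produces simply plays \(z'\) for \(w-1\) rounds.

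None of these steps is difficult. The only point needing care is confirming that membership in \(\mathcal C_{\max}\) is entrywise, so that mixed windows are trivially feasible. Unlike the budget case, convexity of \(\phi\) is not even needed for this step.
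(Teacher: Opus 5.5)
Your proposal is correct and follows the paper's proof. Both identify \(\mathcal Z=\{z\in\mathcal X:\phi(z)\le\rho\}\), observe that every mixed window \(M_j(z,z')\) has maximum \(\max\{\phi(z),\phi(z')\}\le\rho\), and invoke Lemma~\ref{lem:overwrite-transition}; your explicit checks of convexity (as an intersection of preimages of a convex sublevel set) and of cyclic invariance are sound and supply steps the paper only asserts.
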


\begin{proof}
The stationary feasible set is
\[
\mathcal Z
=
\{z\in\mathcal X:\phi(z)\leq\rho\}.
\]
For \(z,z'\in\mathcal Z\),
\[
\max_{1\leq i\leq w}
\phi\!\left(M_j(z,z')_i\right)
=
\max\{\phi(z),\phi(z')\}
\leq\rho.
\]
Therefore every mixed window is feasible, and
Lemma~\ref{lem:overwrite-transition} applies.
\end{proof}

%%%%%%%%%%%%%%%%%%%%%%%%%%%%%%%%%%%%%%%%%%%%%%%%%%%%%%%%%%%%%%%%%%%%%%%%%%%%%%%

\begin{proposition}[Box constraints]
\label{prop:box-transition}
Let \(\ell,u\in\mathbb R^d\) satisfy \(\ell\leq u\), and define
\[
\mathcal C_{\mathrm{box}}
:=
\left\{
(x_1,\ldots,x_w)\in\mathcal X^w:
\ell\leq x_i\leq u,\quad i=1,\ldots,w
\right\}.
\]
Then \(\mathcal C_{\mathrm{box}}\) is convex and cyclically invariant, and its transition diameter satisfies \( \tau\leq w-1. \)
\end{proposition}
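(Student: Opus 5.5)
The plan mirrors the proofs of Propositions~\ref{prop:budget-transition} and~\ref{prop:maximum-transition}: check convexity and cyclic invariance of \(\mathcal C_{\mathrm{box}}\) directly, identify the stationary feasible set \(\mathcal Z\), and then verify the mixed-window hypothesis of Lemma~\ref{lem:overwrite-transition}.

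\emph{Structural properties.} The set \(\mathcal C_{\mathrm{box}}\) is the intersection of \(\mathcal X^w\) with the product set \([\ell,u]^w\), where \([\ell,u]:=\{x\in\mathbb R^d:\ell\leq x\leq u\}\) is taken coordinatewise.
\begin{itemize}
\item \(\mathcal X\) is convex under Assumption~\ref{ass:concave_reward}, so \(\mathcal X^w\) is convex.
\item \([\ell,u]\) is a convex polyhedron, so \([\ell,u]^w\) is convex, and hence so is the intersection.
\item The defining condition is imposed identically on every component \(x_i\). A cyclic shift only permutes the components, so \(\mathcal S^k(x)\in\mathcal C_{\mathrm{box}}\) whenever \(x\in\mathcal C_{\mathrm{box}}\).
\end{itemize}
Closedness also holds, since \(\mathcal X\) is compact and the box is closed.

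\emph{Stationary set.} A stationary window \((z,\ldots,z)\) is feasible exactly when \(z\in\mathcal X\) and \(\ell\leq z\leq u\). Hence
\[
\mathcal Z=\{z\in\mathcal X:\ell\leq z\leq u\},
\]
which is assumed nonempty throughout this appendix.

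\emph{Mixed windows.} Fix \(z,z'\in\mathcal Z\) and \(j\in\{1,\ldots,w-1\}\). Every component of \(M_j(z,z')\) equals either \(z\) or \(z'\), and both lie in \(\mathcal X\cap[\ell,u]\). Therefore every component satisfies the box constraint, and \(M_j(z,z')\in\mathcal C_{\mathrm{box}}\). Lemma~\ref{lem:overwrite-transition} then gives \(\tau\leq w-1\).

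There is no real obstacle here. The only point needing care is that feasibility of \(\mathcal C_{\mathrm{box}}\) is a purely componentwise condition, so overwriting the history one entry at a time never produces an infeasible window.
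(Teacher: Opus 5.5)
Your proposal is correct and follows essentially the same route as the paper: every entry of the mixed window \(M_j(z,z')\) is either \(z\) or \(z'\), both of which satisfy the box constraint, so Lemma~\ref{lem:overwrite-transition} gives \(\tau\leq w-1\). You also explicitly verify convexity (using convexity of \(\mathcal X\)) and cyclic invariance, which fills in steps the paper's proof leaves implicit.
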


\begin{proof}
If \(z,z'\in\mathcal Z\), then
\[
\ell\leq z\leq u,
\qquad
\ell\leq z'\leq u.
\]
Every entry of \(M_j(z,z')\) is either \(z\) or \(z'\), and therefore satisfies the box constraint. Hence every mixed window is feasible, and Lemma~\ref{lem:overwrite-transition} applies.
\end{proof}

%%%%%%%%%%%%%%%%%%%%%%%%%%%%%%%%%%%%%%%%%%%%%%%%%%%%%%%%%%%%%%%%%%%%%%%%%%%%%%%

\begin{proposition}[Exponential constraints]
\label{prop:exponential-transition}
Let \(a\in\mathbb R^d\), \(b\in\mathbb R\), and \(\rho>0\). Define
\[
\mathcal C_{\exp}
:=
\left\{
(x_1,\ldots,x_w)\in\mathcal X^w:
\sum_{i=1}^w\exp(a^\top x_i-b)\leq\rho
\right\}.
\]
Then \(\mathcal C_{\exp}\) is convex and cyclically invariant, and its transition diameter satisfies
\(
\tau\leq w-1.
\)
\end{proposition}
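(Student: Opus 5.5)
The plan follows the pattern of Propositions~\ref{prop:budget-transition} and~\ref{prop:maximum-transition}: check convexity and cyclic invariance of \(\mathcal C_{\exp}\) directly, identify the stationary feasible set \(\mathcal Z\), and verify the hypothesis of Lemma~\ref{lem:overwrite-transition}, i.e.\ that every mixed window \(M_j(z,z')\) is feasible.

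\emph{Convexity and cyclic invariance.} Write \(g(x_1,\ldots,x_w):=\sum_{i=1}^w\exp(a^\top x_i-b)\).
\begin{itemize}
\item Each summand is the composition of the convex increasing function \(\exp\) with an affine map, so \(g\) is convex on \(\mathcal X^w\).
\item \(\mathcal C_{\exp}\) is a sublevel set of \(g\) intersected with \(\mathcal X^w\); it is convex because \(\mathcal X\) is convex under Assumption~\ref{ass:concave_reward}, and closed because \(g\) is continuous.
\item The sum is symmetric in the components, so \(g(\mathcal S^k x)=g(x)\) for every \(k\), which gives cyclic invariance.
\end{itemize}

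\emph{Stationary set.} Since \(g(z,\ldots,z)=w\exp(a^\top z-b)\), we have
\[
\mathcal Z=\{z\in\mathcal X:\ w\exp(a^\top z-b)\le\rho\}=\{z\in\mathcal X:\ a^\top z\le b+\log(\rho/w)\},
\]
using \(\rho>0\) so the logarithm is defined.

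\emph{Mixed windows.} For \(z,z'\in\mathcal Z\) and \(1\le j\le w-1\),
\[
g\bigl(M_j(z,z')\bigr)=(w-j)\exp(a^\top z-b)+j\exp(a^\top z'-b)\le (w-j)\frac{\rho}{w}+j\frac{\rho}{w}=\rho,
\]
using \(\exp(a^\top z-b)\le\rho/w\) and \(\exp(a^\top z'-b)\le\rho/w\). So every mixed window lies in \(\mathcal C_{\exp}\), and Lemma~\ref{lem:overwrite-transition} gives \(\tau\le w-1\).

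The only step needing care is the mixed-window bound, and it is immediate once the per-coordinate bound \(\rho/w\) is extracted from stationarity; the constraint is additively separable, exactly as in the budget case. I do not expect a genuine obstacle. The one point to state explicitly is that \(\mathcal Z\) is assumed nonempty throughout this appendix, so the statement is not vacuous.
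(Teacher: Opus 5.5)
Your proposal is correct and follows essentially the same route as the paper: convexity from the convex summands \(\exp(a^\top x_i-b)\), cyclic invariance from permutation invariance of the sum, the stationary set \(\mathcal Z=\{z\in\mathcal X: w\exp(a^\top z-b)\le\rho\}\), and the mixed-window bound \((w-j)\rho/w+j\rho/w=\rho\), which feeds Lemma~\ref{lem:overwrite-transition}. Your extra remarks on closedness of \(\mathcal C_{\exp}\) and on assuming \(\mathcal Z\) nonempty are harmless additions.
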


\begin{proof}
Convexity follows because \(x\mapsto\exp(a^\top x-b)\) is convex, while cyclic invariance follows from invariance of the sum under permutations. The stationary feasible set is
\(
\mathcal Z
=
\left\{
z\in\mathcal X:
w\exp(a^\top z-b)\leq\rho
\right\}.
\)
For \(z,z'\in\mathcal Z\),
\[
\begin{aligned}
\sum_{i=1}^w
\exp\!\left(a^\top M_j(z,z')_i-b\right)
&=
(w-j)\exp(a^\top z-b)
+j\exp(a^\top z'-b)
\leq
(w-j)\frac{\rho}{w}
+j\frac{\rho}{w}
=\rho.
\end{aligned}
\]
Thus every mixed window is feasible, and
Lemma~\ref{lem:overwrite-transition} applies.
\end{proof}

%%%%%%%%%%%%%%%%%%%%%%%%%%%%%%%%%%%%%%%%%%%%%%%%%%%%%%%%%%%%%%%%%%%%%%%%%%%%%%%

\begin{proposition}[\(\ell_p\)-norm constraints]
\label{prop:lp-transition}
Let \(A\in\mathbb R^{r\times d}\), \(b\in\mathbb R^r\), and \(p\in[1,\infty)\). Define
\[
\mathcal C_p
:=
\left\{
(x_1,\ldots,x_w)\in\mathcal X^w:
\left(
\sum_{i=1}^w\|Ax_i-b\|_p^p
\right)^{1/p}
\leq\rho
\right\}.
\]
Then \(\mathcal C_p\) is convex and cyclically invariant, and its transition diameter satisfies \(  \tau\leq w-1.\)
\end{proposition}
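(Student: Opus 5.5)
The proof follows the template of the previous propositions: establish convexity and cyclic invariance, identify \(\mathcal Z\), and then check that every mixed window \(M_j(z,z')\) is feasible, so that Lemma~\ref{lem:overwrite-transition} gives \(\tau\le w-1\).

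\emph{Convexity and cyclic invariance.} Define
\[
\Phi(x_1,\ldots,x_w):=\sum_{i=1}^w\|Ax_i-b\|_p^p .
\]
The map \(x_i\mapsto\|Ax_i-b\|_p\) is convex, being a norm composed with an affine map. It is also nonnegative, and \(u\mapsto u^p\) is convex and nondecreasing on \([0,\infty)\) for \(p\ge1\). Hence each summand is convex, and so \(\Phi\) is convex on \(\mathcal X^w\).

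Since \(\rho\ge0\) is needed for nonemptiness, the constraint is equivalent to the sublevel-set condition \(\Phi\le\rho^p\). If \(\rho<0\), then \(\mathcal C_p\) is empty and the claims are vacuous, as \(\mathcal Z=\emptyset\) makes \(\tau\) a supremum over an empty set. Therefore \(\mathcal C_p\) is the intersection of the convex set \(\mathcal X^w\) with a sublevel set of a convex function, and is convex. Alternatively, \((\sum_i\|Ax_i-b\|_p^p)^{1/p}\) is itself the \(\ell_p\) norm of the stacked affine map \(x\mapsto (Ax_i-b)_{i}\), which gives convexity directly.

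Cyclic invariance follows because \(\Phi\) is a symmetric sum, so it is unchanged under any permutation of the \(w\) components, and in particular under \(\mathcal S^k\).

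\emph{Stationary set and mixed windows.} The stationary feasible set is
\[
\mathcal Z=\{z\in\mathcal X: w\|Az-b\|_p^p\le\rho^p\}.
\]
For \(z,z'\in\mathcal Z\) and \(j=1,\ldots,w-1\), we compute
\[
\Phi\bigl(M_j(z,z')\bigr)=(w-j)\|Az-b\|_p^p+j\|Az'-b\|_p^p\le (w-j)\frac{\rho^p}{w}+j\frac{\rho^p}{w}=\rho^p .
\]
Taking \(p\)-th roots shows \(M_j(z,z')\in\mathcal C_p\). The entries of \(M_j\) lie in \(\mathcal X\), since \(z,z'\in\mathcal X\). Lemma~\ref{lem:overwrite-transition} then yields \(\tau\le w-1\).

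There is no real obstacle. The only points needing care are the convexity of \(\|Ax-b\|_p^p\), which holds because the outer power is monotone on nonnegative arguments, and the equivalence of the \(1/p\)-root constraint with the \(\rho^p\) sublevel set. The latter lets us use the same additive counting argument as for the aggregate-budget and exponential constraints.
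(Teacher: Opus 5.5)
Your proposal is correct and follows essentially the same route as the paper: convexity from the stacked \(\ell_p\) norm of the affine residuals (your sublevel-set argument is an equivalent alternative), cyclic invariance from permutation invariance of the sum, the same description of \(\mathcal Z\), and the same additive mixed-window computation feeding into Lemma~\ref{lem:overwrite-transition}. Your separate treatment of \(\rho<0\) and your explicit use of convexity of \(\mathcal X\) for convexity of \(\mathcal C_p\) are harmless extras; the paper leaves both implicit.
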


\begin{proof}
The left-hand side is the \(\ell_p\)-norm of the stacked affine residual
vector
\(
\bigl(
(Ax_1-b)^\top,\ldots,(Ax_w-b)^\top
\bigr)^\top,
\)
and is therefore convex. It is unchanged by cyclic permutations of the actions. For a stationary action \(z\), feasibility is equivalent to
\(
w^{1/p}\|Az-b\|_p\leq\rho.
\)
Hence
\(
\mathcal Z
=
\left\{
z\in\mathcal X:
\|Az-b\|_p\leq\frac{\rho}{w^{1/p}}
\right\}.
\)
For \(z,z'\in\mathcal Z\),
\[
\begin{aligned}
\sum_{i=1}^w
\left\|A M_j(z,z')_i-b\right\|_p^p
&=
(w-j)\|Az-b\|_p^p
+j\|Az'-b\|_p^p
\leq
(w-j)\frac{\rho^p}{w}
+j\frac{\rho^p}{w}
=\rho^p.
\end{aligned}
\]
Taking the \(p\)-th root shows that \(M_j(z,z')\in\mathcal C_p\). The result
follows from Lemma~\ref{lem:overwrite-transition}.
\end{proof}

\begin{remark}
The case \(p=\infty\) becomes
\(
\max_{1\leq i\leq w}\|Ax_i-b\|_\infty\leq\rho
\)
and is covered by Proposition~\ref{prop:maximum-transition}.
\end{remark}

%%%%%%%%%%%%%%%%%%%%%%%%%%%%%%%%%%%%%%%%%%%%%%%%%%%%%%%%%%%%%%%%%%%%%%%%%%%%%%%

\begin{proposition}[Convex risk constraints]
\label{prop:risk-transition}
Let \(\ell:\mathcal X\to\mathbb R\) be convex and define
\[
\mathcal C_{\mathrm{risk}}
:=
\left\{
(x_1,\ldots,x_w)\in\mathcal X^w:
\frac{1}{w}\sum_{i=1}^w\ell(x_i)\leq\rho
\right\}.
\]
Then \(\mathcal C_{\mathrm{risk}}\) is convex and cyclically invariant, and its transition diameter satisfies \( \tau\leq w-1. \)
\end{proposition}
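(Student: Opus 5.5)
The proof follows the same template as Propositions~\ref{prop:budget-transition}--\ref{prop:lp-transition}: first verify convexity and cyclic invariance of \(\mathcal C_{\mathrm{risk}}\), then compute the stationary feasible set \(\mathcal Z\), and finally show that every mixed window \(M_j(z,z')\) is feasible so that Lemma~\ref{lem:overwrite-transition} applies.

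\emph{Convexity and cyclic invariance.} Convexity holds because the map \((x_1,\ldots,x_w)\mapsto\frac1w\sum_{i=1}^w\ell(x_i)\) is a nonnegative combination of convex functions of the separate blocks, so it is convex on \(\mathcal X^w\). Its sublevel set at level \(\rho\) is therefore convex. Cyclic invariance is immediate, since the average is unchanged by any permutation of the components, in particular by \(\mathcal S^k\). If closedness of \(\mathcal C\) is needed for Assumption~\ref{ass:cyclic_invariance}, I would note that it holds whenever \(\ell\) is lower semicontinuous, which is automatic for a convex \(\ell\) that is finite on an open neighbourhood of \(\mathcal X\). This is the only point that needs care, and it concerns the standing assumption rather than the diameter bound.

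\emph{Stationary set and mixed windows.} For a stationary window \((z,\ldots,z)\) the constraint reads \(\ell(z)\le\rho\), so
\[
\mathcal Z=\{z\in\mathcal X:\ell(z)\le\rho\}.
\]
For \(z,z'\in\mathcal Z\) and \(j=1,\ldots,w-1\),
\[
\frac1w\sum_{i=1}^w \ell\bigl(M_j(z,z')_i\bigr)=\frac{w-j}{w}\ell(z)+\frac{j}{w}\ell(z'),
\]
a convex combination of two quantities each at most \(\rho\). Hence it is at most \(\rho\), and \(M_j(z,z')\in\mathcal C_{\mathrm{risk}}\).

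Lemma~\ref{lem:overwrite-transition} then gives \(\tau\le w-1\). None of these steps is a real obstacle; the argument is the same averaging computation used for the budget and exponential constraints.
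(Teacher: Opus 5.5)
Your proof is correct and follows the paper's argument: identify \(\mathcal Z=\{z\in\mathcal X:\ell(z)\le\rho\}\), show that the window average on \(M_j(z,z')\) equals \(\frac{w-j}{w}\ell(z)+\frac{j}{w}\ell(z')\le\rho\), and invoke Lemma~\ref{lem:overwrite-transition}. Your explicit check of convexity and cyclic invariance, together with the lower-semicontinuity caveat needed for closedness, fills in steps that the paper's proof of this proposition leaves implicit.
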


\begin{proof}
The stationary feasible set is
\(
\mathcal Z
=
\{z\in\mathcal X:\ell(z)\leq\rho\}.
\)
For \(z,z'\in\mathcal Z\),
\[
\begin{aligned}
\frac{1}{w}
\sum_{i=1}^w
\ell\!\left(M_j(z,z')_i\right)
&=
\frac{w-j}{w}\ell(z)
+
\frac{j}{w}\ell(z')
\leq\rho.
\end{aligned}
\]
Thus every mixed window is feasible, and
Lemma~\ref{lem:overwrite-transition} applies.
\end{proof}

%%%%%%%%%%%%%%%%%%%%%%%%%%%%%%%%%%%%%%%%%%%%%%%%%%%%%%%%%%%%%%%%%%%%%%%%%%%%%%%

\begin{proposition}[Cyclic ramping constraints]
\label{prop:ramping-transition}
Suppose that \(\mathcal X\) is compact and convex and let
\(
D_{\mathcal X}
:=
\sup_{x,y\in\mathcal X}\|x-y\|<\infty.
\)
For \(\rho>0\), define
\[
\mathcal C_{\mathrm{ramp}}
:=
\left\{
(x_1,\ldots,x_w)\in\mathcal X^w:
\max_{1\leq i\leq w}\|x_{i+1}-x_i\|\leq\rho
\right\},
\qquad x_{w+1}=x_1.
\]
Then \(\mathcal C_{\mathrm{ramp}}\) is convex and cyclically invariant, and
Assumption~\ref{ass:bounded-transitions} holds with
\[
\tau
\leq
\left\lceil
\frac{(w-1)D_{\mathcal X}}{\rho}
\right\rceil
+w-2.
\]
\end{proposition}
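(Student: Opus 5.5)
The plan is to prove the three claims in turn: convexity, cyclic invariance, and the transition bound. The transition is built explicitly as a discretised straight-line path followed by a short padding phase. We use the paper's convention \(w\geq2\); for \(w=1\) we take \(\tau=0\) as stated.

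\textbf{Convexity and cyclic invariance.} For each \(i\), the map \((x_1,\ldots,x_w)\mapsto\|x_{i+1}-x_i\|\) is a norm composed with a linear map, so it is convex. The maximum of finitely many convex functions is convex, so its sublevel set intersected with the convex set \(\mathcal X^w\) is convex. For cyclic invariance, note that because of the wrap-around convention \(x_{w+1}=x_1\), the multiset of consecutive differences \(\{x_{i+1}-x_i\}_{i=1}^w\) is unchanged by \(\mathcal S\). Hence the maximum is unchanged as well.

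\textbf{Stationary set and path construction.} Every stationary window has all differences equal to zero, so \(\mathcal Z=\mathcal X\). Fix \(z,z'\in\mathcal X\); if \(z=z'\), the empty transition works. Otherwise set
\[
N:=\left\lceil\frac{(w-1)\|z'-z\|}{\rho}\right\rceil\leq\left\lceil\frac{(w-1)D_{\mathcal X}}{\rho}\right\rceil
\]
and \(y_k:=z+\tfrac{k}{N}(z'-z)\) for \(k=1,\ldots,N\). By convexity of \(\mathcal X\), each \(y_k\) lies in \(\mathcal X\), and \(y_N=z'\). The transition plays \(y_1,\ldots,y_N\) and then \(z'\) for \(w-2\) further rounds. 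The final history consists of the last \(w-1\) actions, all equal to \(z'\), so it is \(h(z')\). The total length is \(N+w-2\), which matches the claimed bound.

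\textbf{Feasibility of every window (the main step).} Consider the full action sequence \(z,\ldots,z,y_1,\ldots,y_N,z',\ldots,z'\). By construction, consecutive entries differ by at most \(\|z'-z\|/N\leq\rho/(w-1)\). Any window encountered consists of \(w\) consecutive entries of this sequence. Its \(w-1\) internal differences are each at most \(\rho/(w-1)\leq\rho\). The wrap-around difference \(\|x_1-x_w\|\) is bounded by the triangle inequality along the window by \((w-1)\cdot\rho/(w-1)=\rho\). Hence every window lies in \(\mathcal C_{\mathrm{ramp}}\).

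The only real subtlety is this wrap-around term. It is what forces the step size to be \(\rho/(w-1)\) rather than \(\rho\), and that is exactly where the factor \((w-1)\) in the bound comes from. The argument also holds when \(N<w-1\), since windows mixing copies of \(z\), the \(y_k\), and copies of \(z'\) still have all consecutive gaps at most \(\rho/(w-1)\).
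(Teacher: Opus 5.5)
Your proof is correct and follows the paper's argument essentially step for step. You use the same straight-line path with $N=\lceil (w-1)\|z'-z\|/\rho\rceil$ steps of size at most $\rho/(w-1)$, the same $w-2$ padding rounds of $z'$, and the same triangle-inequality bound on the wrap-around difference; your explicit justification of convexity and of cyclic invariance (the wrap-around differences are merely permuted by $\mathcal S$) just fills in what the paper asserts in one line.
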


\begin{proof}
The function
\(
(x_1,\ldots,x_w)
\mapsto
\max_{1\leq i\leq w}\|x_{i+1}-x_i\|
\)
is convex and cyclically invariant. Moreover, every stationary window has zero cyclic variation, so
\(
\mathcal Z=\mathcal X.
\) Fix distinct \(z,z'\in\mathcal Z\) and set
\[
N
:=
\left\lceil
\frac{(w-1)\|z'-z\|}{\rho}
\right\rceil.
\]
Define
\[
y_r
:=
z+\frac{r}{N}(z'-z),
\qquad
r=0,\ldots,N.
\]
Convexity of \(\mathcal X\) implies \(y_r\in\mathcal X\), and
\[
\|y_r-y_{r-1}\|
=
\frac{\|z'-z\|}{N}
\leq
\frac{\rho}{w-1}.
\]
Starting from \(h(z)\), play
\(
y_1,\ldots,y_N
\)
and then play \(z'=y_N\) for an additional \(w-2\) rounds. In every complete window, consecutive actions differ by at most \(\rho/(w-1)\). The cyclic difference between the first and last actions in the window is therefore at most
\[
(w-1)\frac{\rho}{w-1}
=
\rho.
\]
Hence every generated window belongs to \(\mathcal C_{\mathrm{ramp}}\). After \(N+w-2\) actions, the history equals \(h(z')\). Finally,
\[
N+w-2
\leq
\left\lceil
\frac{(w-1)D_{\mathcal X}}{\rho}
\right\rceil+w-2.
\]
When \(z=z'\), the empty transition is feasible.
\end{proof}

%%%%%%%%%%%%%%%%%%%%%%%%%%%%%%%%%%%%%%%%%%%%%%%%%%%%%%%%%%%%%%%%%%%%%%%%%%%%%%%

\begin{remark}
\label{rem:transition-diameter-sharpness}
Suppose that \(\mathcal Z\) contains distinct actions \(z\neq z'\). Any transition from \(h(z)\) to \(h(z')\) requires at least \(w-1\) actions because each new action replaces at most one entry of the current history. Therefore, for the aggregate-budget, maximum, box, exponential, \(\ell_p\)-norm, and convex-risk classes, \( \tau=w-1. \) The ramping class may require more than \(w-1\) actions because transitions must proceed gradually.
\end{remark}

%%%%%%%%%%%%%%%%%%%%%%%%%%%%%%%%%%%%%%%%%%%%%%%%%%%%%%%%%%%%%%%%%%%%%%%%%%%%%%%

\begin{table}[t]
\centering
\small
\caption{Transition-diameter for convex and cyclically invariant sliding-window constraints.}
\label{tab:transition-diameters}
\renewcommand{\arraystretch}{1.35}
\setlength{\tabcolsep}{5pt}
\begin{tabular}{
@{}
>{\centering\arraybackslash}p{0.18\linewidth}
>{\centering\arraybackslash}p{0.40\linewidth}
>{\centering\arraybackslash}p{0.27\linewidth}
@{}
}
\hline
\textbf{Constraint class}
&
\textbf{Window condition}
&
\textbf{Transition diameter}
\\
\hline

Aggregate budget
&
\(\displaystyle
\sum_{i=1}^w c^\top x_i\leq B
\)
&
\(\tau\leq w-1\)
\\

Maximum
&
\(\displaystyle
\max_{1\leq i\leq w}\phi(x_i)\leq\rho
\)
&
\(\tau\leq w-1\)
\\

Box
&
\(\displaystyle
\ell\leq x_i\leq u,\quad i=1,\ldots,w
\)
&
\(\tau\leq w-1\)
\\

Exponential
&
\(\displaystyle
\sum_{i=1}^w\exp(a^\top x_i-b)\leq\rho
\)
&
\(\tau\leq w-1\)
\\

\(\ell_p\)-norm
&
\(\displaystyle
\left(
\sum_{i=1}^w\|Ax_i-b\|_p^p
\right)^{1/p}
\leq\rho
\)
&
\(\tau\leq w-1\)
\\

Convex risk
&
\(\displaystyle
\frac{1}{w}\sum_{i=1}^w\ell(x_i)\leq\rho
\)
&
\(\tau\leq w-1\)
\\

Cyclic ramping
&
\(\displaystyle
\max_i\|x_{i+1}-x_i\|\leq\rho,
\quad x_{w+1}=x_1
\)
&
\(\displaystyle
\tau\leq
\left\lceil
\frac{(w-1)D_{\mathcal X}}{\rho}
\right\rceil
+w-2
\)
\\
\hline
\end{tabular}

\medskip

{\footnotesize
\emph{Note.}
For all classes except cyclic ramping, the transition is obtained by playing \(z'\) for \(w-1\) rounds. If the corresponding stationary feasible set contains at least two distinct actions, these bounds are sharp: \(\tau=w-1\).}
\end{table}
\section{History Diameter for Common Constraint Classes}
\label{app:history_diameter_examples}

The history diameter can be computed or bounded explicitly for several common sliding-window constraints.

\paragraph{State-independent constraints.}
Suppose \( \mathcal C=\mathcal A^w \) for some nonempty \(\mathcal A\subseteq\mathcal X\).

\begin{proposition}[State-Independent Constraints]
\label{prop:state_independent_D}
If \(\mathcal A\) contains at least two distinct actions, then
\(
D=w-1.
\)
If \(\mathcal A\) is a singleton, then \(D=0\).
\end{proposition}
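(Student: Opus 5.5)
The plan is to compute $\mathcal S_1$ and $d_{\mathcal C}$ directly for $\mathcal C=\mathcal A^w$, then prove matching upper and lower bounds on $D$.

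\emph{Viable histories.} Since $\mathcal C=\mathcal A^w$, we have $A(s)=\mathcal A$ whenever $s\in\mathcal A^{w-1}$, and $A(s)=\emptyset$ otherwise. Hence $\mathcal S_1=\mathcal A^{w-1}$, because every $s\in\mathcal A^{w-1}$ has a feasible action and $F(s,x)\in\mathcal X^{w-1}=\mathcal S_0$ automatically. The same argument gives $\mathcal S_n=\mathcal A^{w-1}$ for all $n\ge1$.

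\emph{Upper bound.} Fix $s,s'\in\mathcal A^{w-1}$ with $s'=(b_1,\ldots,b_{w-1})$. Starting from $s$, play $b_1,\ldots,b_{w-1}$ in order. Every window formed along the way has entries from $s$ or from $s'$, so it lies in $\mathcal A^w$. After $w-1$ steps every entry of $s$ has been overwritten, and the history is exactly $s'$. Thus $d_{\mathcal C}(s,s')\le w-1$, and so $D\le w-1$.

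\emph{Lower bound.} If $a\neq a'$ both lie in $\mathcal A$, take $s=h(a)$ and $s'=h(a')$. After $k$ feasible updates from $s$, position $1$ of the history holds entry $k+1$ of $s$ whenever $k<w-1$, and that entry equals $a\neq a'$. So $s'$ cannot be reached in fewer than $w-1$ steps. This is the same overwriting argument as in Remark~\ref{rem:transition-diameter-sharpness}, and it gives $D\ge w-1$, hence $D=w-1$.

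\emph{Singleton case.} If $\mathcal A=\{a\}$, then $\mathcal S_1=\{h(a)\}$, and the empty transition gives $d_{\mathcal C}=0$, so $D=0$.

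\emph{Main obstacle.} The only delicate point is the lower bound: one has to track index positions under the shift map $F$ carefully, and check that the $d_{\mathcal C}=0$ convention applies when $s=s'$. No other difficulty is expected.
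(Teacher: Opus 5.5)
Your proposal is correct and follows the paper's proof essentially step for step. You identify \(\mathcal S_n=\mathcal A^{w-1}\), get \(D\le w-1\) by appending the target history's entries in order, and get \(D\ge w-1\) from the stationary histories of two distinct actions, since each appended action overwrites only one coordinate (your tracking of the first coordinate under \(F\) just makes the paper's one-line remark explicit).
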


\begin{proof}
For every \(n\geq1\),
\(
\mathcal S_n=\mathcal A^{w-1}.
\)
Fix
\[
s=(x_1,\ldots,x_{w-1}),
\qquad
s'=(y_1,\ldots,y_{w-1})
\]
in \(\mathcal S_n\). Starting from \(s\), append \(y_1,\ldots,y_{w-1}\). Every resulting window lies in \(\mathcal A^w\), and after \(w-1\) actions the history equals \(s'\). Hence
\(
d_{\mathcal C}(s,s')\leq w-1,
\)
so \(D\leq w-1\).
For the reverse inequality, choose distinct \(a,b\in\mathcal A\) and consider
\[
s=(a,\ldots,a),
\qquad
s'=(b,\ldots,b).
\]
Each appended action replaces only one coordinate of the current history, so at least \(w-1\) actions are required to transform \(s\) into \(s'\). Therefore,
\(
d_{\mathcal C}(s,s')\geq w-1.
\)
Thus \(D=w-1\). The singleton case is immediate.
\end{proof}

\paragraph{Box constraints.}
Consider
\[
\mathcal C_{\mathrm{box}}
=
\left\{
(x_1,\ldots,x_w)\in\mathcal X^w:
\ell\leq x_i\leq u,\quad i=1,\ldots,w
\right\},
\]
and define
\[
\mathcal A_{\mathrm{box}}
:=
\{x\in\mathcal X:\ell\leq x\leq u\}.
\]

\begin{proposition}[Box Constraints]
\label{prop:box_D}
If \(\mathcal A_{\mathrm{box}}\) contains at least two distinct actions, then
\(
D=w-1.
\)
If \(\mathcal A_{\mathrm{box}}\) is a singleton, then \(D=0\).
\end{proposition}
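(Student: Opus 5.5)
The plan is to reduce Proposition~\ref{prop:box_D} directly to Proposition~\ref{prop:state_independent_D}. The box constraint is coordinatewise in time, so it is state independent. More precisely, a window \((x_1,\ldots,x_w)\in\mathcal X^w\) lies in \(\mathcal C_{\mathrm{box}}\) if and only if \(\ell\le x_i\le u\) for every \(i\). That is the same as requiring \(x_i\in\mathcal A_{\mathrm{box}}\) for every \(i\). The first step is therefore to record the set identity
\[
\mathcal C_{\mathrm{box}}=\mathcal A_{\mathrm{box}}^{\,w}.
\]
Here \(\mathcal A_{\mathrm{box}}\subseteq\mathcal X\) by definition. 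It is nonempty in both cases of the statement, since it contains either at least two points or exactly one.

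With this identity, \(\mathcal C_{\mathrm{box}}\) is exactly of the form \(\mathcal C=\mathcal A^w\) with \(\mathcal A=\mathcal A_{\mathrm{box}}\). The second step is to invoke Proposition~\ref{prop:state_independent_D}. If \(\mathcal A_{\mathrm{box}}\) has two distinct actions, it gives \(D=w-1\). If \(\mathcal A_{\mathrm{box}}\) is a singleton, it gives \(D=0\).

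For completeness, I would briefly recall why that earlier argument applies verbatim, and in particular check that \(\mathcal S_1=\mathcal A_{\mathrm{box}}^{w-1}\).
\begin{itemize}
\item A history \(s\) admits a feasible next action only if all of its entries lie in the box, so \(\mathcal S_1\subseteq\mathcal A_{\mathrm{box}}^{w-1}\).
\item Conversely, any such \(s\) can be extended by any point of \(\mathcal A_{\mathrm{box}}\), and the resulting history again has all entries in the box. Hence \(\mathcal A_{\mathrm{box}}^{w-1}\subseteq\mathcal S_n\) for every \(n\), and in particular equality holds at \(n=1\).
\item For the upper bound \(D\le w-1\), append the target history's entries \(y_1,\ldots,y_{w-1}\) one at a time. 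Every window formed along the way has all entries in the box.
\item For the lower bound, compare the stationary histories \(h(a)\) and \(h(b)\) with \(a\ne b\) in \(\mathcal A_{\mathrm{box}}\). Each step overwrites only one coordinate, so at least \(w-1\) steps are needed.
\end{itemize}

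There is no real obstacle. The only point needing care is verifying the set identity, including that the box condition imposes nothing beyond membership of each entry in \(\mathcal X\cap[\ell,u]\). In the singleton case, \(\mathcal S_1\) is a single history, so its distance to itself is \(0\) via the empty transition.
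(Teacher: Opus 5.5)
Your proposal is correct and matches the paper's proof: it records the identity $\mathcal C_{\mathrm{box}}=\mathcal A_{\mathrm{box}}^{w}$ and invokes Proposition~\ref{prop:state_independent_D}. Your extra checks that $\mathcal S_1=\mathcal A_{\mathrm{box}}^{w-1}$ and of both bounds simply restate the argument of that proposition.
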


\begin{proof}
The constraint is state-independent:
\(
\mathcal C_{\mathrm{box}}=\mathcal A_{\mathrm{box}}^w.
\)
The result follows directly from Proposition~\ref{prop:state_independent_D}.
\end{proof}

\paragraph{Maximum constraints.}
Consider
\[
\mathcal C_{\max}
=
\left\{
(x_1,\ldots,x_w)\in\mathcal X^w:
\max_{1\leq i\leq w}\phi(x_i)\leq\rho
\right\},
\]
and define
\[
\mathcal A_{\max}
:=
\{x\in\mathcal X:\phi(x)\leq\rho\}.
\]

\begin{proposition}[Maximum Constraints]
\label{prop:maximum_D}
If \(\mathcal A_{\max}\) contains at least two distinct actions, then
\(
D=w-1.
\)
If \(\mathcal A_{\max}\) is a singleton, then \(D=0\).
\end{proposition}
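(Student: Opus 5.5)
The plan is to show that the maximum constraint is state-independent and then invoke Proposition~\ref{prop:state_independent_D}, exactly as was done for box constraints in Proposition~\ref{prop:box_D}. The first step is to check that
\[
\mathcal C_{\max}=\mathcal A_{\max}^w .
\]
Indeed, \((x_1,\ldots,x_w)\in\mathcal C_{\max}\) holds iff \(\max_i\phi(x_i)\leq\rho\). This is equivalent to \(\phi(x_i)\leq\rho\) for every \(i\), i.e.\ \(x_i\in\mathcal A_{\max}\) for every \(i\). Since the maximum of finitely many numbers is at most \(\rho\) exactly when each of them is, this step is immediate and needs no convexity of \(\phi\).

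With this identity, \(\mathcal C_{\max}\) has the form \(\mathcal A^w\) with \(\mathcal A=\mathcal A_{\max}\), so Proposition~\ref{prop:state_independent_D} applies directly. That proposition shows \(\mathcal S_n=\mathcal A_{\max}^{w-1}\) for all \(n\geq1\). It then gives the upper bound \(D\leq w-1\) by appending the target history \(y_1,\ldots,y_{w-1}\) coordinate by coordinate, with every intermediate window lying in \(\mathcal A_{\max}^w\). It gives the lower bound \(D\geq w-1\) from two distinct stationary histories \((a,\ldots,a)\) and \((b,\ldots,b)\), since each step overwrites only one coordinate. In the singleton case, \(\mathcal S_1\) consists of a single history, so \(D=0\).

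The only point needing care is the nonemptiness implicit in the statement. If \(\mathcal A_{\max}\) has at least two points, or exactly one, it is nonempty, so Proposition~\ref{prop:state_independent_D}'s standing hypothesis is met. I do not expect a genuine obstacle: the argument reduces entirely to the set identity above followed by citation.
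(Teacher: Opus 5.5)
Your proposal is correct and matches the paper's proof: the paper also notes that the maximum constraint holds iff every action lies in \(\mathcal A_{\max}\), so \(\mathcal C_{\max}=\mathcal A_{\max}^w\), and then cites Proposition~\ref{prop:state_independent_D} for both cases. Your additional remark that this set identity needs no convexity of \(\phi\) is accurate.
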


\begin{proof}
The maximum constraint holds if and only if every action belongs to \(\mathcal A_{\max}\). Hence
\(
\mathcal C_{\max}=\mathcal A_{\max}^w,
\)
and Proposition~\ref{prop:state_independent_D} gives the result.
\end{proof}

\paragraph{A general reset principle for additive constraints.}
\label{app:reset_history_diameter}

Consider a separable additive constraint
\[
\mathcal C_g
=
\left\{
(x_1,\ldots,x_w)\in\mathcal X^w:
\sum_{i=1}^w g(x_i)\leq B
\right\},
\]
where \(g:\mathcal X\to\mathbb R\). Suppose \(g\) attains its minimum over \(\mathcal X\), and let
\[
x_0\in\arg\min_{x\in\mathcal X}g(x),
\qquad
g_0:=g(x_0).
\]

\begin{proposition}[Reset-Based History-Diameter Bound]
\label{prop:reset_diameter}
Suppose \(\mathcal C_g\neq\varnothing\). Then
\(
D\leq2(w-1).
\)
\end{proposition}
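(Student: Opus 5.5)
The plan is to route every pair of viable histories through a common ``reset'' history $h_0:=(x_0,\ldots,x_0)\in\mathcal X^{w-1}$. The claimed bound then follows from the triangle inequality
\[
d_{\mathcal C}(s,s')\le d_{\mathcal C}(s,h_0)+d_{\mathcal C}(h_0,s'),
\]
provided each term is at most $w-1$ for all $s,s'\in\mathcal S_1$.

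\textbf{Step 1 (reaching the reset).} Fix $s=(x_1,\ldots,x_{w-1})\in\mathcal S_1$. By definition of $\mathcal S_1$ there is some $x\in A(s)$, so $(s,x)\in\mathcal C_g$, i.e. $\sum_{i=1}^{w-1}g(x_i)+g(x)\le B$. From $s$, append $x_0$ for $w-1$ consecutive rounds. The $j$-th appended action produces the window $(x_j,\ldots,x_{w-1},x_0,\ldots,x_0)$, containing $j$ copies of $x_0$. Compared with the feasible window $(s,x)$, this window is obtained by deleting $x_1,\ldots,x_{j-1}$ and $x$ and inserting $j$ copies of $x_0$. So its $g$-sum is
\[
\sum_{i=j}^{w-1}g(x_i)+jg_0\le \sum_{i=1}^{w-1}g(x_i)+g(x)\le B .
\]
The inequality holds because each removed term is at least $g_0$ and we remove exactly $j$ terms ($x_1,\ldots,x_{j-1}$ and $x$) while adding $j$ copies of $g_0$. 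Hence every window along the way is feasible, and after $w-1$ steps the history is $h_0$. This gives $d_{\mathcal C}(s,h_0)\le w-1$.

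\textbf{Step 2 (leaving the reset).} Fix $s'=(y_1,\ldots,y_{w-1})\in\mathcal S_1$, with some $y\in A(s')$, so that $\sum_i g(y_i)+g(y)\le B$. From $h_0$, append $y_1,\ldots,y_{w-1}$. The $j$-th window is $(x_0,\ldots,x_0,y_1,\ldots,y_j)$, with $w-j$ copies of $x_0$. Its $g$-sum is
\[
(w-j)g_0+\sum_{i\le j}g(y_i)\le \sum_{i>j}g(y_i)+g(y)+\sum_{i\le j}g(y_i)\le B,
\]
again because $g_0$ is the minimum and the bracketed $w-j$ terms $y_{j+1},\ldots,y_{w-1},y$ each have $g$-value at least $g_0$. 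After $w-1$ steps the history is exactly $s'$. This gives $d_{\mathcal C}(h_0,s')\le w-1$.

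Combining the two steps yields $D\le 2(w-1)$. Note that $h_0$ need not itself lie in $\mathcal S_1$ for this argument; it is only used as an intermediate history, and all windows visited along the path were verified to be feasible.

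The main obstacle is Step 1. Elements of $\mathcal S_1$ are only histories admitting some feasible extension, so the partial sum of $s$ alone is not known to be at most $B$. The resolution is to compare each window with the feasible window $(s,x)$ rather than with $s$, which is why the count of replaced terms must be exactly $j$. Care is also needed with the index bookkeeping at $j=1$ and $j=w-1$, and in the degenerate case $w=1$, where the bound $0$ is trivial.
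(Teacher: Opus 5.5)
Your proof is correct and follows the paper's argument. You reset from $s$ to the stationary history $(x_0,\ldots,x_0)$ in $w-1$ steps, using the feasible extension guaranteed by $s\in\mathcal S_1$ and the minimality of $g_0$, and then append $y_1,\ldots,y_{w-1}$ in another $w-1$ steps. The only differences are cosmetic: you compare each intermediate window directly with $(s,x)$ and $(s',y)$, whereas the paper argues step by step in the reset phase and first derives $g_0+\sum_{i}g(y_i)\le B$ for the second phase.
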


\begin{proof}
Fix \(n\geq1\) and
\[
s=(x_1,\ldots,x_{w-1}),
\qquad
s'=(y_1,\ldots,y_{w-1})
\]
in \(\mathcal S_n\).  Since \(\mathcal S_n\subseteq\mathcal S_1\), there exists \(a\in\mathcal X\) such that
\[
\sum_{i=1}^{w-1}g(x_i)+g(a)\leq B.
\]
Since \(g_0\leq g(a)\),
\[
\sum_{i=1}^{w-1}g(x_i)+g_0\leq B,
\]
so \(x_0\) can be appended feasibly. After one reset action, the history becomes
\(
(x_2,\ldots,x_{w-1},x_0).
\)
Since \(g_0\leq g(x_1)\), replacing \(x_1\) by \(x_0\) cannot increase the window cost. Repeating this argument shows that \(x_0\) can be played \(w-1\) times while preserving feasibility. Hence
\(
s\longrightarrow h(x_0):=(x_0,\ldots,x_0)
\)
in at most \(w-1\) actions. Similarly, since \(s'\in\mathcal S_1\), there exists \(a'\in\mathcal X\) such that
\[
\sum_{i=1}^{w-1}g(y_i)+g(a')\leq B.
\]
Because \(g_0\leq g(a')\),
\[
g_0+\sum_{i=1}^{w-1}g(y_i)\leq B.
\]
Starting from \(h(x_0)\), append \(y_1,\ldots,y_{w-1}\). After \(j\in\{1,\ldots,w-1\}\) target actions have been appended, the current window has cost
\[
(w-j)g_0+\sum_{i=1}^j g(y_i).
\]
Since \(g_0\leq g(y_i)\) for every \(i\),
\[
(w-j)g_0+\sum_{i=1}^j g(y_i)
\leq
g_0+\sum_{i=1}^{w-1}g(y_i)
\leq B.
\]
Thus every intermediate window is feasible, and \(s'\) is reached after at most \(w-1\) additional actions. Therefore,
\(
d_{\mathcal C}(s,s')\leq2(w-1).
\)
Since \(n\) and \(s,s'\in\mathcal S_n\) were arbitrary,
\(
D\leq2(w-1).
\)
\end{proof}

\paragraph{Aggregate budget constraints.}
Consider
\[
\mathcal C_{\mathrm{budget}}
=
\left\{
(x_1,\ldots,x_w)\in\mathcal X^w:
\sum_{i=1}^w c^\top x_i\leq B
\right\}.
\]

\begin{proposition}[Aggregate Budget Constraints]
\label{prop:budget_D}
Suppose \(\mathcal C_{\mathrm{budget}}\neq\varnothing\) and \(c^\top x\) attains its minimum over \(\mathcal X\). Then
\(
D\leq2(w-1).
\)
\end{proposition}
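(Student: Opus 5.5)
The plan is to reduce this to Proposition~\ref{prop:reset_diameter} by taking \(g(x):=c^\top x\). The aggregate budget constraint is then exactly the separable additive constraint \(\mathcal C_g\) with this \(g\), since
\[
\mathcal C_{\mathrm{budget}}
=
\left\{(x_1,\ldots,x_w)\in\mathcal X^w:\sum_{i=1}^w g(x_i)\le B\right\}.
\]
By hypothesis, \(g\) attains its minimum over \(\mathcal X\), so we may fix \(x_0\in\arg\min_{x\in\mathcal X}c^\top x\). We are also given \(\mathcal C_g=\mathcal C_{\mathrm{budget}}\neq\varnothing\). These are exactly the hypotheses of Proposition~\ref{prop:reset_diameter}, which then gives \(D\le 2(w-1)\).

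To keep the argument self-contained, I would also recall why the reset construction works in this linear case. Take \(s,s'\in\mathcal S_1\), so each admits some feasible appended action.

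\textbf{Reset phase.} Starting from \(s\), play \(x_0\) for \(w-1\) rounds. Each step either replaces the appended action or drops an old entry in favour of \(x_0\). Since \(c^\top x_0\) is minimal, neither change can increase the window cost, so every window stays within budget \(B\). After these \(w-1\) rounds the history is \(h(x_0)\).

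\textbf{Build phase.} Starting from \(h(x_0)\), append \(y_1,\ldots,y_{w-1}\), the entries of \(s'\). After \(j\) of them have been appended, the window cost is
\[
(w-j)\,c^\top x_0+\sum_{i\le j}c^\top y_i
\;\le\;
c^\top x_0+\sum_{i=1}^{w-1}c^\top y_i
\;\le\;
B .
\]
The first inequality uses minimality of \(x_0\). The second holds because \(s'\in\mathcal S_1\) admits some feasible appended action \(a'\), and \(c^\top a'\ge c^\top x_0\). After \(w-1\) appended actions the history equals \(s'\).

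Together the two phases give \(d_{\mathcal C}(s,s')\le 2(w-1)\). Taking the supremum over \(s,s'\in\mathcal S_1\) yields the bound.

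There is no real obstacle here; the only point to check is that the minimiser exists, which is exactly why the statement assumes that \(c^\top x\) attains its minimum. When \(\mathcal X\) is compact this holds automatically by continuity.
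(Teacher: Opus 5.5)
Your proposal is correct and matches the paper's proof: the paper sets \(g(x):=c^\top x\) and invokes Proposition~\ref{prop:reset_diameter}. Your recap of the reset and build phases is the same argument the paper gives for that proposition.
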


\begin{proof}
Set
\(
g(x):=c^\top x.
\)
Then
\[
\mathcal C_{\mathrm{budget}}
=
\left\{
(x_1,\ldots,x_w):
\sum_{i=1}^w g(x_i)\leq B
\right\}.
\]
The claim follows from Proposition~\ref{prop:reset_diameter}.
\end{proof}

\paragraph{Exponential constraints.}
Consider
\[
\mathcal C_{\exp}
=
\left\{
(x_1,\ldots,x_w)\in\mathcal X^w:
\sum_{i=1}^w e^{a^\top x_i-b}\leq\rho
\right\}.
\]

\begin{proposition}[Exponential Constraints]
\label{prop:exponential_D}
Suppose \(\mathcal C_{\exp}\neq\varnothing\) and \(x\mapsto e^{a^\top x-b}\) attains its minimum over \(\mathcal X\). Then
\(
D\leq2(w-1).
\)
\end{proposition}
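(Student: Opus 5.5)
The plan is to reduce directly to the reset principle of Proposition~\ref{prop:reset_diameter}. Set
\[
g(x):=e^{a^\top x-b},
\qquad
B:=\rho .
\]
With this choice the exponential window constraint is exactly
\[
\mathcal C_{\exp}
=
\left\{
(x_1,\ldots,x_w)\in\mathcal X^w:
\sum_{i=1}^w g(x_i)\leq B
\right\}
=\mathcal C_g ,
\]
which is a separable additive constraint of the form treated there.

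We then check the two hypotheses of Proposition~\ref{prop:reset_diameter}.
\begin{itemize}
\item \emph{Nonemptiness.} We need \(\mathcal C_g\neq\varnothing\). This is the assumption \(\mathcal C_{\exp}\neq\varnothing\), since the two sets coincide.
\item \emph{Minimizer.} We need \(g\) to attain its minimum over \(\mathcal X\). This is the stated hypothesis that \(x\mapsto e^{a^\top x-b}\) attains its minimum. We pick \(x_0\in\arg\min_{x\in\mathcal X} g(x)\).
\end{itemize}

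Proposition~\ref{prop:reset_diameter} then gives \(D\leq 2(w-1)\) immediately. The argument is the same as for aggregate budget constraints (Proposition~\ref{prop:budget_D}).

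We would also note why the reset argument goes through. It uses only two facts: \(g(x_0)\le g(y)\) for every \(y\in\mathcal X\), and the window cost is a sum of per-action costs. No convexity or sign condition on \(g\) is needed, so the exponential form adds nothing to check.

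There is no real obstacle here. The only point to be careful about is that the minimum of \(g\) must be \emph{attained}, not merely approached. An infimum alone would not supply a reset action \(x_0\in\mathcal X\). Because \(\exp\) is strictly increasing, attainment is equivalent to \(a^\top x\) attaining its minimum over \(\mathcal X\); for example, this holds when \(\mathcal X\) is compact.
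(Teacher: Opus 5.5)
Your proposal is correct and follows the paper's proof: the paper also sets \(g(x):=e^{a^\top x-b}\), rewrites \(\mathcal C_{\exp}\) as the additive constraint \(\sum_{i=1}^w g(x_i)\leq\rho\), and concludes by Proposition~\ref{prop:reset_diameter}. Your observations that the reset argument needs only additivity and an attained minimizer, with no convexity or sign condition on \(g\), are accurate.
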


\begin{proof}
Set
\(
g(x):=e^{a^\top x-b}.
\)
Then
\[
\mathcal C_{\exp}
=
\left\{
(x_1,\ldots,x_w):
\sum_{i=1}^w g(x_i)\leq\rho
\right\}.
\]
The result follows from Proposition~\ref{prop:reset_diameter}.
\end{proof}

\paragraph{Aggregate \(\ell_p\)-norm constraints.}
Consider
\[
\mathcal C_p
=
\left\{
(x_1,\ldots,x_w)\in\mathcal X^w:
\sum_{i=1}^w\|Ax_i-b\|_p^p\leq\rho^p
\right\}.
\]

\begin{proposition}[Aggregate \(\ell_p\)-Norm Constraints]
\label{prop:lp_D}
Suppose \(\mathcal C_p\neq\varnothing\) and \(x\mapsto\|Ax-b\|_p^p\) attains its minimum over \(\mathcal X\). Then
\(
D\leq2(w-1).
\)
\end{proposition}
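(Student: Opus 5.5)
This follows the pattern of Propositions~\ref{prop:budget_D} and~\ref{prop:exponential_D}: we reduce to the reset principle of Proposition~\ref{prop:reset_diameter}. Set
\[
g(x):=\|Ax-b\|_p^p ,
\]
which is a real-valued function on \(\mathcal X\). Since \(t\mapsto t^p\) is increasing on \([0,\infty)\), the window condition \(\bigl(\sum_{i=1}^w\|Ax_i-b\|_p^p\bigr)^{1/p}\le\rho\) is equivalent (for \(\rho\ge0\)) to \(\sum_{i=1}^w g(x_i)\le\rho^p\). Hence
\[
\mathcal C_p=\Bigl\{(x_1,\ldots,x_w)\in\mathcal X^w:\ \sum_{i=1}^w g(x_i)\le B\Bigr\},\qquad B:=\rho^p ,
\]
which is exactly the separable additive form \(\mathcal C_g\) considered in Proposition~\ref{prop:reset_diameter}.

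The hypotheses of Proposition~\ref{prop:reset_diameter} then hold directly. First, \(\mathcal C_g=\mathcal C_p\neq\varnothing\) by assumption. Second, \(g\) attains its minimum over \(\mathcal X\) by the assumed attainment of the minimum of \(x\mapsto\|Ax-b\|_p^p\).

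That proposition uses no further structure of \(g\): no convexity and no sign condition. Its argument only needs monotone replacement by a minimizer \(x_0\), namely \(g(x_0)\le g(y)\) for every \(y\in\mathcal X\). This inequality holds by the definition of \(x_0\). We conclude \(D\le 2(w-1)\).

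The only step needing care is the equivalence of the two descriptions of the constraint set, since Proposition~\ref{prop:lp_D} is stated with the already-raised form \(\sum\|Ax_i-b\|_p^p\le\rho^p\). This amounts to noting that \(\rho^p\) plays the role of the budget \(B\).
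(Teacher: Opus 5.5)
Your proposal is correct and is essentially the paper's proof: set \(g(x)=\|Ax-b\|_p^p\), observe that \(\mathcal C_p\) is the additive set \(\mathcal C_g\) with budget \(B=\rho^p\), and invoke Proposition~\ref{prop:reset_diameter}. The root-versus-raised equivalence you discuss is not needed, since this appendix already defines \(\mathcal C_p\) in the raised form.
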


\begin{proof}
Set
\(
g(x):=\|Ax-b\|_p^p.
\)
Then
\[
\mathcal C_p
=
\left\{
(x_1,\ldots,x_w):
\sum_{i=1}^w g(x_i)\leq\rho^p
\right\}.
\]
The result follows from Proposition~\ref{prop:reset_diameter}.
\end{proof}

\paragraph{Convex-risk constraints.}
Consider
\[
\mathcal C_{\mathrm{risk}}
=
\left\{
(x_1,\ldots,x_w)\in\mathcal X^w:
\frac1w\sum_{i=1}^w\ell(x_i)\leq\rho
\right\}.
\]

\begin{proposition}[Convex-Risk Constraints]
\label{prop:risk_D}
Suppose \(\mathcal C_{\mathrm{risk}}\neq\varnothing\) and \(\ell\) attains its minimum over \(\mathcal X\). Then
\(
D\leq2(w-1).
\)
\end{proposition}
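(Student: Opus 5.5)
The plan is to reduce directly to Proposition~\ref{prop:reset_diameter}, exactly as for the budget, exponential, and \(\ell_p\) cases. First I will rewrite the constraint in separable additive form. Multiplying the defining inequality by \(w>0\) gives
\[
\mathcal C_{\mathrm{risk}}
=
\left\{
(x_1,\ldots,x_w)\in\mathcal X^w:
\sum_{i=1}^w g(x_i)\leq B
\right\},
\qquad
g:=\ell,\quad B:=w\rho .
\]
This is precisely a set of the form \(\mathcal C_g\).

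Next I will check the hypotheses of Proposition~\ref{prop:reset_diameter}. Nonemptiness of \(\mathcal C_g\) is the assumed \(\mathcal C_{\mathrm{risk}}\neq\varnothing\), since the two sets coincide. The proposition also needs \(g\) to attain its minimum over \(\mathcal X\). This holds because \(\ell\) is assumed to attain its minimum, so we may fix \(x_0\in\arg\min_{x\in\mathcal X}\ell(x)\) to serve as the reset action.

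With both hypotheses verified, Proposition~\ref{prop:reset_diameter} yields \(d_{\mathcal C}(s,s')\leq 2(w-1)\) for all \(s,s'\in\mathcal S_1\), and therefore \(D\leq 2(w-1)\). Convexity of \(\ell\) plays no role in this argument; only separability and attainment of the minimum matter.

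There is no real obstacle here. The only point needing care is that rescaling by \(w\) changes neither the feasible set nor the minimizers of \(g\), so the reset argument (play \(x_0\) for \(w-1\) rounds, then append \(y_1,\ldots,y_{w-1}\)) applies verbatim.
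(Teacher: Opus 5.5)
Your proposal is correct and is the same argument as the paper's proof: rewrite the constraint as \(\sum_{i=1}^w \ell(x_i)\le w\rho\), take \(g=\ell\) and \(B=w\rho\), and apply Proposition~\ref{prop:reset_diameter}. Your remark that convexity of \(\ell\) is never used is also accurate, since the reset argument relies only on separability and on \(\ell\) attaining its minimum.
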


\begin{proof}
The constraint is equivalent to
\(
\sum_{i=1}^w\ell(x_i)\leq w\rho.
\)
Thus it has the additive form of Proposition~\ref{prop:reset_diameter} with
\(
g(x):=\ell(x),
\)
which gives
\(
D\leq2(w-1).
\)
\end{proof}

\paragraph{Non-cyclic ramping constraints.}
Consider
\[
\mathcal C_{\mathrm{ramp}}^{\mathrm{nc}}
=
\left\{
(x_1,\ldots,x_w)\in\mathcal X^w:
\|x_{i+1}-x_i\|\leq\rho,\quad i=1,\ldots,w-1
\right\}.
\]
Assume that \(\mathcal X\) is compact and convex, and define
\(
D_{\mathcal X}:=\max_{x,y\in\mathcal X}\|x-y\|.
\)

\begin{proposition}[Exact Diameter for Non-Cyclic Ramping]
\label{prop:noncyclic_ramping_D}
Suppose \(w\geq2\), \(\rho>0\), and \(D_{\mathcal X}>0\). Then
\[
D=
\left\lceil\frac{D_{\mathcal X}}{\rho}\right\rceil+w-2.
\]
\end{proposition}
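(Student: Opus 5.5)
The plan is to compute \(\mathcal S_1\) explicitly and then prove matching upper and lower bounds on \(d_{\mathcal C}(s,s')\).

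\textbf{Step 1: identify \(\mathcal S_1\).} A history \(s=(x_1,\ldots,x_{w-1})\) lies in \(\mathcal S_1\) iff some \(x\in\mathcal X\) has \((s,x)\in\mathcal C^{\mathrm{nc}}_{\mathrm{ramp}}\). If \(\|x_{i+1}-x_i\|\le\rho\) for \(i\le w-2\), then \(x=x_{w-1}\) works. The converse is clear. Hence \(\mathcal S_1\) is exactly the set of histories whose consecutive entries differ by at most \(\rho\). In particular, every stationary history \(h(z)\) lies in \(\mathcal S_1\).

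\textbf{Step 2: upper bound.} Fix \(s=(x_1,\ldots,x_{w-1})\) and \(s'=(y_1,\ldots,y_{w-1})\) in \(\mathcal S_1\), and set
\[
N:=\left\lceil \|y_1-x_{w-1}\|/\rho\right\rceil\le\lceil D_{\mathcal X}/\rho\rceil .
\]
Play the points \(x_{w-1}+\tfrac{r}{N}(y_1-x_{w-1})\) for \(r=1,\ldots,N\). These lie in \(\mathcal X\) by convexity, and successive ones differ by at most \(\rho\). Then play \(y_2,\ldots,y_{w-1}\).

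Every window produced is a contiguous piece of the concatenated sequence
\[
x_1,\ldots,x_{w-1},\,\text{segment points},\,y_2,\ldots,y_{w-1},
\]
and every adjacent pair in this sequence differs by at most \(\rho\): within \(s\) and within \(s'\) by Step 1, and along the segment by construction. So all windows are feasible.

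It remains to check the final history. If \(N\ge1\), there are \(N+w-2\ge w-1\) appended actions, and the last \(w-1\) of them are \(y_1,\ldots,y_{w-1}\). If \(N=0\), then \(x_{w-1}=y_1\), and after appending \(y_2,\ldots,y_{w-1}\) the history is again \(s'\). In both cases
\[
d_{\mathcal C}(s,s')\le \lceil D_{\mathcal X}/\rho\rceil+w-2 .
\]

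\textbf{Step 3: lower bound.} By compactness, pick \(x,y\in\mathcal X\) with \(\|x-y\|=D_{\mathcal X}>0\). Take \(s=h(x)\) and \(s'=h(y)\), both in \(\mathcal S_1\) by Step 1. Consider any feasible transition \(a_1,\ldots,a_m\) from \(s\) to \(s'\), and set \(a_0:=x\).

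Feasibility of each window gives \(\|a_j-a_{j-1}\|\le\rho\) for all \(j\). Let \(k\) be the first index with \(a_k=y\). The triangle inequality gives \(k\rho\ge D_{\mathcal X}\), so \(k\ge\lceil D_{\mathcal X}/\rho\rceil\ge1\).

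The final history must equal \((y,\ldots,y)\). Since \(k\ge1\), this forces \(m\ge w-1\) and \(a_{m-w+2}=\cdots=a_m=y\). Therefore \(k\le m-w+2\), and so
\[
m\ge \lceil D_{\mathcal X}/\rho\rceil+w-2 .
\]

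Combining Steps 2 and 3 gives the claimed equality for \(D\).

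The only delicate points are the bookkeeping in Step 2, namely the \(N=0\) case and checking that every window is a contiguous piece of a \(\rho\)-Lipschitz chain, and in Step 3 the argument that the first visit to \(y\) must occur at least \(w-2\) steps before the end.
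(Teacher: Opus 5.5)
Your proof is correct and follows the paper's route. For the upper bound you interpolate from $x_{w-1}$ to $y_1$ in steps of size at most $\rho$ and then append $y_2,\ldots,y_{w-1}$; for the lower bound you take the extremal stationary pair $h(x),h(y)$ and combine the triangle inequality up to the first occurrence of $y$ with the $w-2$ further copies of $y$ that are forced. Your write-up is slightly more careful than the paper's, and the only thing left implicit is that the index $k$ exists, which holds because $x\neq y$ forces $m\ge1$ and hence $a_m=y$.
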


\begin{proof}
Fix \(n\geq1\) and
\[
s=(x_1,\ldots,x_{w-1}),
\qquad
s'=(y_1,\ldots,y_{w-1})
\]
in \(\mathcal S_n\). Since \(\mathcal S_n\subseteq\mathcal S_1\), both histories satisfy
\[
\|x_{i+1}-x_i\|\leq\rho,
\qquad
\|y_{i+1}-y_i\|\leq\rho,
\qquad
i=1,\ldots,w-2.
\]

Set
\[
N:=
\left\lceil
\frac{\|x_{w-1}-y_1\|}{\rho}
\right\rceil.
\]
If \(N=0\), then \(x_{w-1}=y_1\). Otherwise define
\[
z_j
:=
x_{w-1}+\frac{j}{N}(y_1-x_{w-1}),
\qquad
j=1,\ldots,N.
\]
By convexity, \(z_j\in\mathcal X\), and
\[
\|z_j-z_{j-1}\|
=
\frac{\|y_1-x_{w-1}\|}{N}
\leq\rho.
\]
Hence \(z_1,\ldots,z_N=y_1\) can be appended feasibly. Appending \(y_2,\ldots,y_{w-1}\) then reaches \(s'\). Therefore,
\[
d_{\mathcal C}(s,s')
\leq
N+w-2
\leq
\left\lceil\frac{D_{\mathcal X}}{\rho}\right\rceil+w-2.
\]
Taking the supremum gives
\[
D\leq
\left\lceil\frac{D_{\mathcal X}}{\rho}\right\rceil+w-2.
\]
For the reverse inequality, compactness gives \(a,b\in\mathcal X\) such that
\(
\|a-b\|=D_{\mathcal X}.
\)
Consider the stationary histories
\[
s_a=(a,\ldots,a),
\qquad
s_b=(b,\ldots,b).
\]
Both belong to every \(\mathcal S_n\). Along any feasible transition from \(s_a\) to \(s_b\), consecutive actions can move by at most \(\rho\). By the triangle inequality, the first occurrence of \(b\) requires at least
\(
\left\lceil\frac{D_{\mathcal X}}{\rho}\right\rceil
\)
appended actions. After the first \(b\), another \(w-2\) copies of \(b\) are required to obtain the complete history \(s_b\). Hence
\[
d_{\mathcal C}(s_a,s_b)
\geq
\left\lceil\frac{D_{\mathcal X}}{\rho}\right\rceil+w-2.
\]
Combining the upper and lower bounds proves the claim.
\end{proof} 

Thus, the ramping diameter separates into movement and memory-replacement costs:
\[
D
=
\left\lceil\frac{D_{\mathcal X}}{\rho}\right\rceil
+
(w-2).
\]

Table~\ref{tab:history_state_diameters} summarizes the resulting history-state diameter bounds.

\begin{table}[t]
\centering
\small
\caption{History-state diameter for representative sliding-window constraints.}
\label{tab:history_state_diameters}
\renewcommand{\arraystretch}{1.35}
\setlength{\tabcolsep}{5pt}
\begin{tabular}{
@{}
>{\centering\arraybackslash}p{0.24\linewidth}
>{\centering\arraybackslash}p{0.48\linewidth}
>{\centering\arraybackslash}p{0.22\linewidth}
@{}
}
\hline
\textbf{Constraint class}
&
\textbf{Window condition}
&
\textbf{History diameter}
\\
\hline

State-independent
&
\(\displaystyle
\mathcal C=\mathcal A^w
\)
&
\(\displaystyle
D=w-1
\)
\\

Box
&
\(\displaystyle
\ell\leq x_i\leq u
\)
&
\(\displaystyle
D=w-1
\)
\\

Maximum
&
\(\displaystyle
\max_i\phi(x_i)\leq\rho
\)
&
\(\displaystyle
D=w-1
\)
\\

Aggregate budget
&
\(\displaystyle
\sum_{i=1}^w c^\top x_i\leq B
\)
&
\(\displaystyle
D\leq2(w-1)
\)
\\

Exponential
&
\(\displaystyle
\sum_{i=1}^w e^{a^\top x_i-b}\leq\rho
\)
&
\(\displaystyle
D\leq2(w-1)
\)
\\

Aggregate \(\ell_p\)
&
\(\displaystyle
\sum_{i=1}^w\|Ax_i-b\|_p^p\leq\rho^p
\)
&
\(\displaystyle
D\leq2(w-1)
\)
\\

Convex risk
&
\(\displaystyle
\sum_{i=1}^w\ell(x_i)\leq w\rho
\)
&
\(\displaystyle
D\leq2(w-1)
\)
\\

Non-cyclic ramping
&
\(\displaystyle
\|x_{i+1}-x_i\|\leq\rho
\)
&
\(\displaystyle
D=
\left\lceil
\frac{D_{\mathcal X}}{\rho}
\right\rceil
+w-2
\)
\\

\hline
\end{tabular}
\end{table}
\section{Proofs for Section~\ref{sec:geometry}: Geometry}
\label{app:geometry_proofs}

\subsection{Proof of Theorem~\ref{thm:stationary_near_optimality}}
\label{app:proof-thm-stationary-near-optimality}

Define
\[
M_w
:=
\max_{(x_1,\ldots,x_w)\in\mathcal C}
\sum_{i=1}^w f(x_i).
\]
Since \(\mathcal X\) is compact, \(\mathcal X^w\) is compact. Because \(\mathcal C\subseteq\mathcal X^w\) is closed and nonempty, it is also compact. Moreover, \((x_1,\ldots,x_w)\mapsto\sum_{i=1}^w f(x_i)\) is upper semicontinuous. Hence the maximum defining \(M_w\) is attained.

\begin{lemma}
\label{lem:cyclic_symmetrization}
For any \(x=(x_1,\ldots,x_w)\in\mathcal C\),
\[
\frac{1}{w}\sum_{k=0}^{w-1}\mathcal S^k x
=
(\bar z,\ldots,\bar z)\in\mathcal C,
\qquad
\bar z:=\frac{1}{w}\sum_{i=1}^w x_i.
\]
\end{lemma}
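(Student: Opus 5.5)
The plan is a direct computation followed by an appeal to the two parts of Assumption~\ref{ass:cyclic_invariance}. First I will compute the average of the cyclic shifts coordinatewise. By definition, $\mathcal S^k x=(x_{1+k},x_{2+k},\ldots,x_{w+k})$ with indices taken modulo $w$, so its $i$th coordinate is $x_{((i+k-1)\bmod w)+1}$. For a fixed coordinate $i$, the map $k\mapsto ((i+k-1)\bmod w)+1$ is a bijection from $\{0,\ldots,w-1\}$ onto $\{1,\ldots,w\}$. Hence the $i$th coordinate of $\frac1w\sum_{k=0}^{w-1}\mathcal S^k x$ equals $\frac1w\sum_{j=1}^w x_j=\bar z$ for every $i$. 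This proves the identity $\frac1w\sum_k\mathcal S^k x=(\bar z,\ldots,\bar z)$.

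Next, membership in $\mathcal C$. By cyclic invariance, each $\mathcal S^k x\in\mathcal C$ for $k=0,\ldots,w-1$. The average is a convex combination of these $w$ points with weights $1/w$, and $\mathcal C$ is convex, so the average lies in $\mathcal C$. A convex set is closed under finite convex combinations, which follows from the two-point definition by a routine induction. Note also that $\bar z\in\mathcal X$, since $\mathcal X$ is convex (Assumption~\ref{ass:concave_reward}), although this already follows from $\mathcal C\subseteq\mathcal X^w$.

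There is no real obstacle here. The only point needing care is the index bookkeeping: each original entry must appear exactly once in each coordinate across the $w$ shifts. The bijection argument above handles this.
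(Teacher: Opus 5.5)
Your proof is correct and follows essentially the same route as the paper. Cyclic invariance places each $\mathcal S^k x$ in $\mathcal C$, convexity places their uniform average in $\mathcal C$, and each coordinate of the average contains every $x_i$ exactly once. Your explicit bijection $k\mapsto ((i+k-1)\bmod w)+1$ simply makes precise the paper's statement that each entry appears exactly once.
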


\begin{proof}
By cyclic-shift invariance,
\[
\mathcal S^k x\in\mathcal C,
\qquad
k=0,\ldots,w-1.
\]
Since \(\mathcal C\) is convex,
\[
\bar x
:=
\frac{1}{w}\sum_{k=0}^{w-1}\mathcal S^k x
\in\mathcal C.
\]
For each coordinate \(j\in\{1,\ldots,w\}\), the \(j\)-th coordinates of
\(
x,\mathcal Sx,\ldots,\mathcal S^{w-1}x
\)
are \(x_1,\ldots,x_w\), each appearing exactly once. Therefore,
\[
\bar x_j
=
\frac{1}{w}\sum_{i=1}^w x_i
=
\bar z.
\]
Thus every coordinate of \(\bar x\) equals \(\bar z\), and hence
\(
\bar x=(\bar z,\ldots,\bar z)\in\mathcal C.
\)
\end{proof}

\begin{lemma}
\label{lem:stationary_window}
There exists \(z^\star\in\mathcal X\) such that
\[
(z^\star,\ldots,z^\star)\in\mathcal C
\qquad\text{and}\qquad
wf(z^\star)=M_w.
\]
\end{lemma}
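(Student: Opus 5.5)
Take a maximizer of \(M_w\) and symmetrize it. Since \(\mathcal C\) is compact and \(x\mapsto\sum_{i=1}^w f(x_i)\) is upper semicontinuous, the maximum defining \(M_w\) is attained at some window \(x^\circ=(x_1^\circ,\ldots,x_w^\circ)\in\mathcal C\). Set
\[
z^\star:=\bar z=\frac1w\sum_{i=1}^w x_i^\circ .
\]
Because \(\mathcal X\) is convex, \(z^\star\in\mathcal X\). Lemma~\ref{lem:cyclic_symmetrization} then gives \((z^\star,\ldots,z^\star)\in\mathcal C\).

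Next, compare values using concavity of \(f\) (Assumption~\ref{ass:concave_reward}). By Jensen's inequality,
\[
wf(z^\star)=wf\Bigl(\frac1w\sum_{i=1}^w x_i^\circ\Bigr)\geq\sum_{i=1}^w f(x_i^\circ)=M_w.
\]
For the reverse inequality, note that the stationary window \((z^\star,\ldots,z^\star)\) lies in \(\mathcal C\). It is therefore a competitor in the definition of \(M_w\), so \(wf(z^\star)\leq M_w\). Together these give \(wf(z^\star)=M_w\).

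I do not expect any real obstacle. The only points to check are three small ones:
\begin{itemize}
\item attainment of \(M_w\), which was already argued just before Lemma~\ref{lem:cyclic_symmetrization};
\item that \(\bar z\in\mathcal X\), which is exactly where convexity of \(\mathcal X\) is used;
\item that Jensen's inequality holds for a concave function that is merely upper semicontinuous. This is fine, since Jensen for finite convex combinations needs only concavity.
\end{itemize}
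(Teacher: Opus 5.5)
Your proposal is correct and follows the paper's proof essentially verbatim. You average a maximizer of \(M_w\), invoke Lemma~\ref{lem:cyclic_symmetrization} to get a feasible stationary window, apply Jensen to obtain \(wf(z^\star)\ge M_w\), and use the stationary window as a competitor for the reverse inequality. One minor note: \(z^\star\in\mathcal X\) also follows directly from \((z^\star,\ldots,z^\star)\in\mathcal C\subseteq\mathcal X^w\), so convexity of \(\mathcal X\) is not strictly needed at that step.
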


\begin{proof}
Let
\[
x^\star=(x_1^\star,\ldots,x_w^\star)
\in
\arg\max_{(x_1,\ldots,x_w)\in\mathcal C}
\sum_{i=1}^w f(x_i),
\]
and define
\[
z^\star
:=
\frac{1}{w}\sum_{i=1}^w x_i^\star.
\]
By Lemma~\ref{lem:cyclic_symmetrization},
\(
(z^\star,\ldots,z^\star)\in\mathcal C.
\)
By concavity of \(f\) and Jensen's inequality,
\[
f(z^\star)
=
f\left(\frac{1}{w}\sum_{i=1}^w x_i^\star\right)
\geq
\frac{1}{w}\sum_{i=1}^w f(x_i^\star).
\]
Therefore,
\[
wf(z^\star)
\geq
\sum_{i=1}^w f(x_i^\star)
=
M_w.
\]
On the other hand, since
\((z^\star,\ldots,z^\star)\in\mathcal C\), this stationary window is
feasible for the optimization defining \(M_w\). Hence
\[
wf(z^\star)\leq M_w.
\]
Thus,
\[
wf(z^\star)=M_w.
\]
\end{proof}

\begin{proof}[Proof of Theorem~\ref{thm:stationary_near_optimality}]
Define
\[
\Delta:\mathcal X\to\mathcal X^w,
\qquad
\Delta(z):=(z,\ldots,z).
\]
Then
\(
\mathcal Z=\Delta^{-1}(\mathcal C).
\)
Since \(\mathcal C\) is nonempty, choose any
\(x=(x_1,\ldots,x_w)\in\mathcal C\). By
Lemma~\ref{lem:cyclic_symmetrization}, its cyclic average has the form \((\bar z,\ldots,\bar z)\in\mathcal C\), so \(\bar z\in\mathcal Z\). Hence \(\mathcal Z\) is nonempty.
Moreover, \(\Delta\) is continuous and \(\mathcal C\) is closed, so
\(\mathcal Z=\Delta^{-1}(\mathcal C)\) is closed in the compact set
\(\mathcal X\). Therefore, \(\mathcal Z\) is compact. By Lemma~\ref{lem:stationary_window}, there exists
\(z^\star\in\mathcal Z\) such that
\(
wf(z^\star)=M_w.
\)
For every \(z\in\mathcal Z\), we have
\((z,\ldots,z)\in\mathcal C\), and therefore
\[
wf(z)\leq M_w=wf(z^\star).
\]
Thus,
\[
z^\star\in\arg\max_{z\in\mathcal Z}f(z).
\]
The stationary trajectory \(x_t=z^\star\) is feasible because every
length-\(w\) window is equal to
\((z^\star,\ldots,z^\star)\in\mathcal C\). Hence
\[
V_T(f)\geq Tf(z^\star).
\]
Now write
\[
T=qw+r,
\qquad
q\in\mathbb N_0,
\qquad
0\leq r<w.
\]
Let \(y_{1:T}=(y_1,\ldots,y_T)\) be any feasible trajectory. For each
\(j=1,\ldots,q\), the block
\[
(y_{(j-1)w+1},\ldots,y_{jw})
\]
is one of the realized length-\(w\) windows and therefore belongs to
\(\mathcal C\). By the definition of \(M_w\),
\[
\sum_{i=(j-1)w+1}^{jw}f(y_i)\leq M_w.
\]
Since \(\mathcal X\) is compact and \(f\) is upper semicontinuous,
\(
f_{\max}:=\max_{x\in\mathcal X}f(x)
\)
is well defined and attained. The remaining \(r\) actions therefore
contribute at most \(rf_{\max}\). Consequently,
\[
\sum_{t=1}^T f(y_t)
\leq
qM_w+rf_{\max}.
\]
Taking the maximum over all feasible trajectories and using
\(M_w=wf(z^\star)\), we obtain
\[
V_T(f)
\leq
qwf(z^\star)+rf_{\max}
=
Tf(z^\star)
+
r\bigl(f_{\max}-f(z^\star)\bigr).
\]
Combining this with \(V_T(f)\geq Tf(z^\star)\) gives
\[
0
\leq
V_T(f)-Tf(z^\star)
\leq
r\bigl(f_{\max}-f(z^\star)\bigr).
\]
Since \(z^\star\in\mathcal X\), we have
\(f_{\max}-f(z^\star)\geq 0\). Using \(r\leq w-1\),
\[
0
\leq
V_T(f)-Tf(z^\star)
\leq
r\bigl(f_{\max}-f(z^\star)\bigr)
\leq
(w-1)\bigl(f_{\max}-f(z^\star)\bigr).
\]
Finally, if \(w\mid T\), then \(r=0\), and hence
\[
V_T(f)=Tf(z^\star).
\]
Therefore, the stationary trajectory \(x_t=z^\star\) is optimal.
\end{proof}

\subsection{Tightness of the Stationary Approximation Gap}
\label{app:tightness-offline-gap}

We next show that the $\mathcal{O}(w)$ finite-horizon gap in
Theorem~\ref{thm:stationary_near_optimality} is unavoidable in general. A simple aggregate-budget example shows that, for a suitable choice of the horizon, this gap can scale linearly with $w$. In fact, the upper bound in Theorem~\ref{thm:stationary_near_optimality} is attained exactly.

\begin{proposition}
\label{prop:tightness-offline-gap}
Let $w$ be even, $\mathcal{X}=[0,1]$, and
\[
\mathcal{C}
=
\left\{
(x_1,\ldots,x_w)\in[0,1]^w :
\sum_{i=1}^w x_i \le \frac{w}{2}
\right\}.
\]
For the linear reward $f(x)=x$, the best stationary feasible action is
$z^\star=1/2$. Moreover, for any $q\ge 0$ and \( T=qw+\frac{w}{2}, \) the offline gap is exactly
\[
V_T-Tf(z^\star)=\frac{w}{4}.
\]
Thus, the $\mathcal{O}(w)$ dependence in Theorem~\ref{thm:stationary_near_optimality} is tight in general.
\end{proposition}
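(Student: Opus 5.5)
We first identify the stationary set and the stationary benchmark, then establish the offline gap by matching an upper bound (from Theorem~\ref{thm:stationary_near_optimality}) with an explicit feasible trajectory. A constant window $(z,\ldots,z)$ lies in $\mathcal C$ iff $wz\le w/2$, so $\mathcal Z=[0,1/2]$. The reward $f(x)=x$ is increasing, so $z^\star=1/2$, and the stationary trajectory earns $Tf(z^\star)=T/2$. The constraint is an aggregate-budget constraint, so it is convex, closed and cyclically invariant, and Assumptions~\ref{ass:cyclic_invariance}--\ref{ass:concave_reward} hold. Here $f_{\max}=1$ and $r=w/2$ when $T=qw+w/2$.

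\textbf{Upper bound.} Theorem~\ref{thm:stationary_near_optimality} gives
\[
V_T-Tf(z^\star)\le r\bigl(f_{\max}-f(z^\star)\bigr)=\frac{w}{2}\cdot\frac12=\frac w4 .
\]

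\textbf{Lower bound.} We construct a periodic trajectory with period $w$: in each block of length $w$, play $1$ in the first $w/2$ positions and $0$ in the last $w/2$ positions. That is, $x_t=1$ if $((t-1)\bmod w)<w/2$, and $x_t=0$ otherwise. We continue this pattern through the final partial block of length $w/2$, which consists entirely of ones.

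\textbf{Feasibility of the construction.} Every length-$w$ window of a $w$-periodic sequence contains exactly one representative of each residue class mod $w$. Hence its sum equals the sum over one period, which is $w/2$, and every window satisfies the budget constraint.

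\textbf{Reward of the construction.} The total reward is $q\cdot w/2+w/2$. Subtracting $T/2=(qw+w/2)/2$ gives
\[
q\frac w2+\frac w2-\frac{qw}{2}-\frac w4=\frac w4 .
\]
So $V_T-Tf(z^\star)\ge w/4$, and combined with the upper bound the gap equals $w/4$ exactly.

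The only step needing care is the feasibility check for windows straddling block boundaries and the final partial block. The periodicity argument handles this uniformly: any window lies within the first $T$ positions of the infinite periodic sequence, so it still contains each residue class exactly once.
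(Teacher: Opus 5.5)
Your proof is correct and follows the paper's argument essentially step for step. You use the same identification $\mathcal Z=[0,1/2]$ with $z^\star=1/2$, the same truncated periodic ones-then-zeros trajectory for the lower bound (your residue-class argument is a slightly tidier justification that every window has sum exactly $w/2$), and the same application of Theorem~\ref{thm:stationary_near_optimality} with $r=w/2$ for the matching upper bound.
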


\begin{proof}
A stationary action $z\in[0,1]$ is feasible if and only if
\(
wz\le \frac{w}{2},
\)
and hence the stationary feasible set is
\(
\mathcal{Z}=[0,1/2].
\)
Since $f(x)=x$, the optimal stationary feasible action is therefore
\(
z^\star=\frac{1}{2}.
\)
Thus the stationary trajectory has value
\[
Tf(z^\star)
=
\frac{T}{2}
=
\frac{qw}{2}+\frac{w}{4}.
\]
We next construct a feasible nonstationary trajectory. Consider the periodic sequence of period $w$ given by
\[
\underbrace{1,\ldots,1}_{w/2},
\underbrace{0,\ldots,0}_{w/2},
\underbrace{1,\ldots,1}_{w/2},
\underbrace{0,\ldots,0}_{w/2},
\ldots
\]
and truncate it after $T=qw+w/2$ rounds. Every block of $w$ consecutive actions contains exactly $w/2$ entries equal to $1$ and $w/2$ entries equal to $0$. Therefore, for every $t=w,\ldots,T$,
\[
\sum_{i=t-w+1}^{t} x_i=\frac{w}{2},
\]
so every sliding window belongs to $\mathcal{C}$. Each complete period contributes reward $w/2$, and the final $w/2$ actions are all equal to $1$. Hence this feasible trajectory achieves
\[
\sum_{t=1}^T f(x_t)
=
q\frac{w}{2}+\frac{w}{2}.
\]
It follows that
\[
V_T
\ge
q\frac{w}{2}+\frac{w}{2},
\]
and therefore
\[
V_T-Tf(z^\star)
\ge
\left(
q\frac{w}{2}+\frac{w}{2}
\right)
-
\left(
q\frac{w}{2}+\frac{w}{4}
\right)
=
\frac{w}{4}.
\]
To obtain the matching upper bound, apply Theorem~\ref{thm:stationary_near_optimality} with
$r=w/2$. Since
\[
f_{\max}=\max_{x\in[0,1]}f(x)=1
\qquad\text{and}\qquad
f(z^\star)=\frac12,
\]
Theorem~\ref{thm:stationary_near_optimality} gives
\[
V_T-Tf(z^\star)
\le
r\bigl(f_{\max}-f(z^\star)\bigr)
=
\frac{w}{2}
\left(1-\frac12\right)
=
\frac{w}{4}.
\]
Combining the lower and upper bounds yields
\[
V_T-Tf(z^\star)=\frac{w}{4}.
\]
Thus the additive $\mathcal{O}(w)$ finite-horizon term in Theorem~\ref{thm:stationary_near_optimality} is tight.
\end{proof}

\begin{remark}
The gap comes from the horizon ending partway through a period. Over each complete block of length $w$, both the stationary and nonstationary trajectories achieve the same average reward of $1/2$. The difference appears in the final incomplete block: the nonstationary trajectory can use its high-reward actions first, gaining an additional $w/4$ before the horizon ends.
\end{remark}

\section{Proofs for Section~\ref{sec:linear-bandit}: Reachability and Learning}
\label{app:reachability_proofs}

\subsection{Proof of Proposition~\ref{prop:linear-regret-without-transitions}}
\label{proof_prop:linear-regret-without-transitions}

\begin{proof}
Consider the one-dimensional action space \(\mathcal X=[-1,1]\) and let \(w=2\). Define
\[
\mathcal C
=
\left\{
(x_1,x_2)\in[-1,1]^2:x_1=x_2
\right\}.
\]
The set \(\mathcal C\) is nonempty, closed, and convex. It is also invariant under cyclic shifts, since
\[
(x_1,x_2)\in\mathcal C
\quad\Longrightarrow\quad
x_1=x_2
\quad\Longrightarrow\quad
(x_2,x_1)\in\mathcal C.
\]
Thus, Assumption~\ref{ass:cyclic_invariance} holds. The stationary feasible action set is
\[
\mathcal Z
=
\{z\in[-1,1]:(z,z)\in\mathcal C\}
=
[-1,1].
\]
Let \(\theta^\star\in\{-1,1\}\) and consider the noiseless linear reward model
\(
Y_t=\theta^\star x_t.
\)
This is a special case of the conditionally sub-Gaussian model with \(\eta_t=0\). For any admissible trajectory,
\[
(x_{t-1},x_t)\in\mathcal C,
\qquad
t=2,\ldots,T,
\]
which implies
\[
x_t=x_{t-1},
\qquad
t=2,\ldots,T.
\]
Hence
\[
x_t=x_1,
\qquad
t=1,\ldots,T.
\]
Thus, once the learner chooses its first action, feasibility forces it to repeat that action for the entire horizon. For \(\theta^\star=1\), the optimal stationary feasible action is \(z^\star=1\), and therefore
\[
\operatorname{Reg}^{\mathcal Z}_{\pi}(T;1)
=
T\left(1-\mathbb E_\pi[x_1]\right).
\]
For \(\theta^\star=-1\), the optimal stationary feasible action is \(z^\star=-1\), giving
\[
\operatorname{Reg}^{\mathcal Z}_{\pi}(T;-1)
=
T\left(1+\mathbb E_\pi[x_1]\right).
\]
Averaging the two regrets yields
\[
\frac{1}{2}
\operatorname{Reg}^{\mathcal Z}_{\pi}(T;1)
+
\frac{1}{2}
\operatorname{Reg}^{\mathcal Z}_{\pi}(T;-1)
=
T.
\]
Therefore,
\[
\sup_{\theta^\star\in\{-1,1\}}
\operatorname{Reg}^{\mathcal Z}_{\pi}(T;\theta^\star)
\geq T.
\]
Thus, every admissible policy suffers linear regret for at least one value of \(\theta^\star\).
\end{proof}

%%%%%%%%%%%%%%%%%%%%%%%%%%%%%%%%%%%%%%%%%%%%%%%%%%%%%%%%%%%%%%%%%%%%%%%%%%%%%%%

\subsection{Proof of Theorem~\ref{thm:expected-offline-regret}}
\label{proof_thm:expected-offline-regret}

We first establish several auxiliary lemmas controlling the confidence sets, covariance growth within an episode, the elliptical potential, the number of switches, and feasibility. We then derive a high-probability stationary-regret bound and convert it to expected regret against the offline-optimal feasible trajectory.

%%%%%%%%%%%%%%%%%%%%%%%%%%%%%%%%%%%%%%%%%%%%%%%%%%%%%%%%%%%%%%%%%%%%%%%%%%%%%%%

\begin{lemma}
\label{lem:uniform-confidence}
For \(t\geq1\), define
\[
\mathcal E_t(\delta)
:=
\left\{
\theta\in\mathbb R^d:
\|\theta-\widehat\theta_t\|_{V_t}
\leq
\beta_t(\delta)
\right\},
\]
where
\[
V_t
=
\lambda I_d
+
\sum_{s\in\mathcal O_t}x_sx_s^\top,
\qquad
\widehat\theta_t
=
V_t^{-1}
\sum_{s\in\mathcal O_t}x_sY_s,
\]
and
\[
\beta_t(\delta)
=
R
\sqrt{
2\log\!\left(
\frac{\det(V_t)^{1/2}}
{\det(\lambda I_d)^{1/2}\delta}
\right)
}
+
\sqrt{\lambda}S.
\]
Then, with probability at least \(1-\delta\),
\[
\theta^\star\in\mathcal E_t(\delta),
\qquad
t=1,\ldots,T+1.
\]
Moreover,
\[
\beta_t(\delta)
\leq
\overline\beta_T(\delta),
\qquad
t\leq T+1,
\]
where
\[
\overline\beta_T(\delta)
:=
R
\sqrt{
d\log\!\left(
1+\frac{TL^2}{\lambda d}
\right)
+
2\log\!\left(\frac{1}{\delta}\right)
}
+
\sqrt{\lambda}S.
\]
\end{lemma}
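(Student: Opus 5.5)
The plan is to invoke the self-normalized martingale confidence bound of \citet{abbasi2011improved} (their Theorem~2), applied to the subsequence of retained rounds. The one thing to check first is that dropping transition rounds does not break the martingale structure.

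Concretely, I will enumerate the retained rounds in increasing order as \(s_1<s_2<\cdots\). The indicator \(I_s\) is determined by the algorithm's episode and transition logic, which depends only on \(\mathcal F_{s-1}\). So \(I_s\) is \(\mathcal F_{s-1}\)-measurable, just like \(x_s\), and each retention time \(s_j\) is a stopping time with respect to the filtration \((\mathcal F_{t-1})\). Setting \(\mathcal G_j:=\mathcal F_{s_j}\), I will argue by optional sampling that \(\eta_{s_j}\) remains conditionally \(R\)-sub-Gaussian given \(\mathcal G_{j-1}\), and that \(x_{s_j}\) is \(\mathcal G_{j-1}\)-measurable. An alternative that avoids stopping-time technicalities is to work with the full sequence and the rescaled actions \(\tilde x_s:=I_sx_s\). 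This gives the same \(V_t\) and the same \(\sum_{s<t} \tilde x_s Y_s=\sum_{s\in\mathcal O_t}x_sY_s\), and \(\tilde x_s\) is predictable. Theorem~2 of \citet{abbasi2011improved} then applies verbatim to \(\{(\tilde x_s,\eta_s)\}\), and I expect to present this version since it is cleaner.

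That theorem gives, with probability at least \(1-\delta\) and simultaneously for all \(t\ge1\), the bound \(\|\widehat\theta_t-\theta^\star\|_{V_t}\le R\sqrt{2\log(\det(V_t)^{1/2}\det(\lambda I_d)^{-1/2}/\delta)}+\sqrt\lambda\|\theta^\star\|_2\). Using \(\|\theta^\star\|_2\le S\) yields \(\theta^\star\in\mathcal E_t(\delta)\) for all \(t\), in particular for \(t=1,\dots,T+1\). The uniformity in \(t\) comes from the stopping-time or supermartingale argument inside that theorem, so no union bound is needed.

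For the deterministic bound on \(\beta_t\), I will apply the determinant--trace inequality: \(\det V_t\le(\operatorname{tr}V_t/d)^d\le(\lambda+|\mathcal O_t|L^2/d)^d\), and \(|\mathcal O_t|\le t-1\le T\) for \(t\le T+1\). Hence \(2\log(\det(V_t)^{1/2}/\det(\lambda I_d)^{1/2})\le d\log(1+TL^2/(\lambda d))\). Substituting this into \(\beta_t\) gives \(\beta_t(\delta)\le\overline\beta_T(\delta)\), since \(\sqrt{\cdot}\) is monotone.

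The main obstacle is the measurability step. I need to verify that the retention rule never depends on the current observation \(Y_s\). This holds because transitions and the determinant-doubling checks are decided using only past data, and the determinant check at round \(t\) uses \(V_t\), which involves only rounds before \(t\). Everything else is a direct citation.
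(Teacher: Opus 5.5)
Your proposal is correct and follows the paper's proof. The paper also passes to the predictable sequence $\widetilde x_s=I_sx_s$ and applies the Abbasi-Yadkori et al.\ self-normalized bound, citing Theorem~1 and handling the bias term by hand via $\lambda\|\theta^\star\|_{V_t^{-1}}\le\sqrt{\lambda}S$ (your citation of Theorem~2 packages this step); it then bounds $\beta_t$ with the same determinant--trace inequality and $|\mathcal O_t|\le T$, with one small detail worth writing out: Theorem~2 is applied with virtual responses $\langle\theta^\star,\widetilde x_s\rangle+\eta_s$, whose products with $\widetilde x_s$ equal $\widetilde x_sY_s$, so the estimator is unchanged.
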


\begin{proof}
For \(s\geq1\), define
\(
\widetilde x_s:=I_sx_s.
\)
Since \(I_s\) and \(x_s\) are \(\mathcal F_{s-1}\)-measurable,
\(\widetilde x_s\) is predictable. Moreover, because \(I_s\in\{0,1\}\),
\[
V_t
=
\lambda I_d
+
\sum_{s=1}^{t-1}
\widetilde x_s\widetilde x_s^\top.
\]

By the conditional \(R\)-sub-Gaussian noise assumption and the
self-normalized concentration inequality for vector-valued martingales
\citep[Theorem~1]{abbasi2011improved}, with probability at least
\(1-\delta\), simultaneously for all \(t\leq T+1\),
\[
\left\|
\sum_{s=1}^{t-1}\widetilde x_s\eta_s
\right\|_{V_t^{-1}}
\leq
R\sqrt{
2\log\!\left(
\frac{\det(V_t)^{1/2}}
{\det(\lambda I_d)^{1/2}\delta}
\right)
}.
\]
Since
\[
\sum_{s=1}^{t-1}\widetilde x_s\eta_s
=
\sum_{s\in\mathcal O_t}x_s\eta_s,
\]
the same bound holds for the retained observations. Using
\(
Y_s=\langle\theta^\star,x_s\rangle+\eta_s,
\)
we obtain
\[
\begin{aligned}
V_t\widehat\theta_t
&=
\sum_{s\in\mathcal O_t}x_sY_s
=
\sum_{s\in\mathcal O_t}
x_sx_s^\top\theta^\star
+
\sum_{s\in\mathcal O_t}x_s\eta_s.
\end{aligned}
\]
Since
\[
V_t\theta^\star
=
\lambda\theta^\star
+
\sum_{s\in\mathcal O_t}
x_sx_s^\top\theta^\star,
\]
it follows that
\[
V_t(\widehat\theta_t-\theta^\star)
=
\sum_{s\in\mathcal O_t}x_s\eta_s
-
\lambda\theta^\star.
\]
Hence
\[
\begin{aligned}
\|\widehat\theta_t-\theta^\star\|_{V_t}
&=
\left\|
\sum_{s\in\mathcal O_t}x_s\eta_s
-
\lambda\theta^\star
\right\|_{V_t^{-1}}
\leq
\left\|
\sum_{s\in\mathcal O_t}x_s\eta_s
\right\|_{V_t^{-1}}
+
\lambda\|\theta^\star\|_{V_t^{-1}}.
\end{aligned}
\]
Because \(V_t\succeq \lambda I_d\), and inversion reverses the Loewner order for positive definite matrices,
\[
V_t^{-1}\preceq (\lambda I_d)^{-1}=\lambda^{-1}I_d.
\]
we have
\[
\lambda\|\theta^\star\|_{V_t^{-1}}
\leq
\sqrt{\lambda}\|\theta^\star\|_2
\leq
\sqrt{\lambda}S.
\]
Therefore, simultaneously for all \(t\leq T+1\),
\[
\|\widehat\theta_t-\theta^\star\|_{V_t}
\leq
R\sqrt{
2\log\!\left(
\frac{\det(V_t)^{1/2}}
{\det(\lambda I_d)^{1/2}\delta}
\right)
}
+
\sqrt{\lambda}S
=
\beta_t(\delta).
\]
Thus, with probability at least \(1-\delta\),
\[
\theta^\star\in\mathcal E_t(\delta),
\qquad
\forall\,t\leq T+1.
\]

It remains to upper bound \(\beta_t(\delta)\). Let
\(
N_t:=|\mathcal O_t|.
\)
Since \(N_t\leq T\) and \(\|x_s\|_2\leq L\),
\[
\operatorname{tr}(V_t)
=
\lambda d
+
\sum_{s\in\mathcal O_t}\|x_s\|_2^2
\leq
\lambda d+N_tL^2.
\]
Let \(\lambda_1,\ldots,\lambda_d\) denote the eigenvalues of
\(V_t\). By the arithmetic--geometric mean inequality,
\[
\begin{aligned}
\det(V_t)
&=
\prod_{i=1}^d\lambda_i
\leq
\left(
\frac{\operatorname{tr}(V_t)}{d}
\right)^d
\leq
\left(
\lambda+\frac{N_tL^2}{d}
\right)^d.
\end{aligned}
\]
Since \( \det(\lambda I_d)=\lambda^d, \)
we obtain
\[
\frac{\det(V_t)}{\det(\lambda I_d)}
\leq
\left(
1+\frac{N_tL^2}{\lambda d}
\right)^d
\leq
\left(
1+\frac{TL^2}{\lambda d}
\right)^d.
\]
Substituting this into the definition of \(\beta_t(\delta)\) yields
\[
\beta_t(\delta)
\leq
R\sqrt{
d\log\!\left(
1+\frac{TL^2}{\lambda d}
\right)
+
2\log\!\left(\frac1\delta\right)
}
+
\sqrt{\lambda}S
=
\overline\beta_T(\delta).
\]
This proves both claims.
\end{proof}

%%%%%%%%%%%%%%%%%%%%%%%%%%%%%%%%%%%%%%%%%%%%%%%%%%%%%%%%%%%%%%%%%%%%%%%%%%%%%%%

\begin{lemma}
\label{lem:determinant-comparison}
Suppose
\(
0\prec A\preceq B
\)
and
\[
\det(B)\leq c\det(A),
\qquad
c\geq1.
\]
Then
\[
B\preceq cA,
\qquad
A^{-1}\preceq cB^{-1}.
\]
Consequently, for every \(x\in\mathbb R^d\),
\[
\|x\|_{A^{-1}}
\leq
\sqrt c\,\|x\|_{B^{-1}}.
\]
\end{lemma}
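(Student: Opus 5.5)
The plan is to reduce to the identity matrix by a congruence transformation. Set \(C:=A^{-1/2}BA^{-1/2}\), which is well defined and symmetric positive definite because \(A\succ0\). The hypothesis \(A\preceq B\) is equivalent, after multiplying on both sides by \(A^{-1/2}\), to \(C\succeq I_d\). Hence every eigenvalue \(\mu_1,\ldots,\mu_d\) of \(C\) satisfies \(\mu_i\geq1\).

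The determinant hypothesis becomes
\[
\prod_{i=1}^d\mu_i=\det(C)=\frac{\det(B)}{\det(A)}\leq c .
\]
Fix any index \(j\). Since all the other factors satisfy \(\mu_i\geq1\), we get \(\mu_j\leq\prod_i\mu_i\leq c\). This is the one genuinely non-routine step: we use \(\mu_i\geq1\) to pass from a bound on the product to a bound on each eigenvalue. It gives \(C\preceq cI_d\), and conjugating back by \(A^{1/2}\) yields \(B\preceq cA\).

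For the inverse statement, apply the fact already used in the proof of Lemma~\ref{lem:uniform-confidence}: inversion reverses the Loewner order on positive definite matrices. From \(B\preceq cA\) this gives \((cA)^{-1}\preceq B^{-1}\), that is, \(A^{-1}\preceq cB^{-1}\).

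Finally, for any \(x\in\mathbb R^d\),
\[
\|x\|_{A^{-1}}^2=x^\top A^{-1}x\leq c\,x^\top B^{-1}x=c\,\|x\|_{B^{-1}}^2 ,
\]
and taking square roots gives the norm inequality.

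In the later argument I expect this lemma to be applied with \(A=V_{t_k}\) and \(B=V_t\) inside an episode, where \(c=2\) comes from the determinant-doubling rule.
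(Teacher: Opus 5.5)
Your proof is correct and follows the paper's argument essentially step for step. The paper also forms $M=A^{-1/2}BA^{-1/2}\succeq I_d$ and uses $\mu_i(M)\ge 1$ to pass from $\det(M)\le c$ to $\mu_i(M)\le c$ for each $i$. It then concludes $B\preceq cA$ and inverts to get $A^{-1}\preceq cB^{-1}$ and the norm bound.
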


\begin{proof}
Define
\(
M:=A^{-1/2}BA^{-1/2}.
\)
Since \(A\preceq B\),
\[
M\succeq I_d.
\]
Hence every eigenvalue \(\mu_i(M)\) satisfies \(\mu_i(M)\geq1\). Moreover,
\[
\prod_{i=1}^d\mu_i(M)
=
\det(M)
=
\frac{\det(B)}{\det(A)}
\leq c.
\]
Since all eigenvalues are at least one, each satisfies \(\mu_i(M)\leq c\). Thus
\(
M\preceq cI_d,
\)
which implies
\(
B\preceq cA.
\)
Taking inverses gives
\(
A^{-1}\preceq cB^{-1}.
\)
Therefore,
\[
\|x\|_{A^{-1}}^2
\leq
c\|x\|_{B^{-1}}^2,
\]
and the result follows by taking square roots.
\end{proof}

%%%%%%%%%%%%%%%%%%%%%%%%%%%%%%%%%%%%%%%%%%%%%%%%%%%%%%%%%%%%%%%%%%%%%%%%%%%%%%%

\begin{lemma}
\label{lem:elliptical-potential}
Let
\[
V_1=\lambda I_d,
\qquad
V_{n+1}=V_n+u_nu_n^\top,
\]
where
\[
\|u_n\|_2\leq L,
\qquad
\lambda\geq L^2.
\]
Then
\[
\sum_{n=1}^N
\|u_n\|_{V_n^{-1}}
\leq
\sqrt{
2Nd
\log\!\left(
1+\frac{NL^2}{\lambda d}
\right)
}.
\]
\end{lemma}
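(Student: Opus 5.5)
The plan is the standard elliptical-potential argument: apply the matrix determinant lemma to each rank-one update, use \(\lambda\geq L^2\) to keep every increment in a range where \(x\) and \(\log(1+x)\) are comparable, telescope the log-determinants, and finish with Cauchy--Schwarz and the AM--GM determinant bound already used in Lemma~\ref{lem:uniform-confidence}.

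\textbf{Step 1: per-step determinant identity.} Write
\[
V_{n+1}=V_n^{1/2}\bigl(I_d+V_n^{-1/2}u_nu_n^\top V_n^{-1/2}\bigr)V_n^{1/2}.
\]
The inner matrix is a rank-one perturbation of the identity with eigenvalue \(1+\|u_n\|_{V_n^{-1}}^2\), so
\[
\det(V_{n+1})=\det(V_n)\bigl(1+\|u_n\|_{V_n^{-1}}^2\bigr).
\]
Iterating over \(n=1,\ldots,N\) and taking logarithms gives
\[
\sum_{n=1}^N\log\bigl(1+\|u_n\|_{V_n^{-1}}^2\bigr)=\log\frac{\det(V_{N+1})}{\det(\lambda I_d)}.
\]

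\textbf{Step 2: linearize the logarithm.} Since \(V_n\succeq\lambda I_d\), we have \(V_n^{-1}\preceq\lambda^{-1}I_d\), and therefore
\[
\|u_n\|_{V_n^{-1}}^2\leq \frac{L^2}{\lambda}\leq 1.
\]
For \(y\in[0,1]\), concavity of \(\log(1+y)\) gives \(\log(1+y)\geq y\log 2\geq y/2\), i.e. \(y\leq 2\log(1+y)\). Applying this to each term,
\[
\sum_{n=1}^N\|u_n\|_{V_n^{-1}}^2\leq 2\log\frac{\det(V_{N+1})}{\det(\lambda I_d)}.
\]

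\textbf{Step 3: bound the determinant.} As in the proof of Lemma~\ref{lem:uniform-confidence}, \(\operatorname{tr}(V_{N+1})\leq\lambda d+NL^2\). By AM--GM on the eigenvalues,
\[
\frac{\det(V_{N+1})}{\lambda^d}\leq\Bigl(1+\frac{NL^2}{\lambda d}\Bigr)^d.
\]
Hence
\[
\sum_{n=1}^N\|u_n\|_{V_n^{-1}}^2\leq 2d\log\Bigl(1+\frac{NL^2}{\lambda d}\Bigr).
\]

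\textbf{Step 4: Cauchy--Schwarz.} Finally,
\[
\sum_{n=1}^N\|u_n\|_{V_n^{-1}}\leq\sqrt{N\sum_{n=1}^N\|u_n\|_{V_n^{-1}}^2},
\]
which combined with Step 3 gives the stated bound.

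The only point needing care is the linearization in Step 2. It works only because \(\lambda\geq L^2\) caps each squared increment at \(1\), so that \(y\leq 2\log(1+y)\) applies; without that assumption one would have to work with \(\min\{1,\|u_n\|_{V_n^{-1}}^2\}\) instead. The remaining steps are routine.
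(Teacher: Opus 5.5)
Your proposal is correct and matches the paper's proof step for step. Both use the matrix determinant lemma, the cap $\|u_n\|_{V_n^{-1}}^2\le 1$ from $\lambda\ge L^2$ to justify $q_n\le 2\log(1+q_n)$, telescoping against the AM--GM determinant bound from Lemma~\ref{lem:uniform-confidence}, and a final Cauchy--Schwarz step; your concavity argument for $y\le 2\log(1+y)$ just spells out an inequality the paper states without proof.
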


\begin{proof}
Since \(V_n\succeq \lambda I_d\), and inversion reverses the Loewner order for positive definite matrices, we obtain
\[
V_n^{-1}\preceq (\lambda I_d)^{-1}=\lambda^{-1}I_d.
\]
Therefore,
\[
\|u_n\|_{V_n^{-1}}^2
\leq
\frac{\|u_n\|_2^2}{\lambda}
\leq
1.
\]
Define
\(
q_n:=\|u_n\|_{V_n^{-1}}^2.
\)
By the matrix determinant lemma,
\[
\det(V_{n+1})
=
\det(V_n)(1+q_n).
\]
Since \(q_n\in[0,1]\),
\[
q_n\leq2\log(1+q_n).
\]
Therefore,
\[
\begin{aligned}
\sum_{n=1}^N q_n
&\leq
2\sum_{n=1}^N\log(1+q_n)
=
2\log
\frac{\det(V_{N+1})}{\det(V_1)}.
\end{aligned}
\]
As in Lemma~\ref{lem:uniform-confidence},
\[
\frac{\det(V_{N+1})}{\det(V_1)}
\leq
\left(
1+\frac{NL^2}{\lambda d}
\right)^d.
\]
Hence
\[
\sum_{n=1}^N
\|u_n\|_{V_n^{-1}}^2
\leq
2d
\log\!\left(
1+\frac{NL^2}{\lambda d}
\right).
\]
Finally, Cauchy--Schwarz gives
\[
\begin{aligned}
\sum_{n=1}^N
\|u_n\|_{V_n^{-1}}
&\leq
\sqrt{
N
\sum_{n=1}^N
\|u_n\|_{V_n^{-1}}^2
}
\leq
\sqrt{
2Nd
\log\!\left(
1+\frac{NL^2}{\lambda d}
\right)
}.
\end{aligned}
\]
\end{proof}

%%%%%%%%%%%%%%%%%%%%%%%%%%%%%%%%%%%%%%%%%%%%%%%%%%%%%%%%%%%%%%%%%%%%%%%%%%%%%%%

\begin{lemma}
\label{lem:number-switches}
Let \(K_T\) denote the number of determinant-doubling episodes and \(J_T\) the number of actual stationary-action switches. Then
\[
K_T
\leq
1+
\left\lceil
d\log_2\!\left(
1+\frac{TL^2}{\lambda d}
\right)
\right\rceil
\qquad\text{and}\qquad
J_T
\leq
\left\lceil
d\log_2\!\left(
1+\frac{TL^2}{\lambda d}
\right)
\right\rceil.
\]
\end{lemma}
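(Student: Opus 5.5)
The plan is a standard determinant-doubling counting argument applied to the retained-observation design matrices \(V_t\). Let \(t_1<t_2<\cdots<t_{K_T}\) denote the episode start times. By construction, a new episode starts at \(t_{k+1}\) only when \(\det(V_{t_{k+1}})>2\det(V_{t_k})\). I will therefore track \(\det\) only at the episode starts.

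First, I note that \(V_t\) is nondecreasing in the Loewner order. Observations are only ever added, and transition rounds with \(I_s=0\) add nothing. Consequently \(\det(V_t)\) is nondecreasing in \(t\). Chaining the doubling condition across episodes gives
\[
\det(V_{t_{K_T}})>2^{K_T-1}\det(V_{t_1})\geq 2^{K_T-1}\det(\lambda I_d).
\]
Next I invoke the trace/AM--GM bound from Lemma~\ref{lem:uniform-confidence}. It gives \(\det(V_t)/\det(\lambda I_d)\leq(1+TL^2/(\lambda d))^d\) for every \(t\leq T+1\), because the number of retained rounds is at most \(T\). Taking \(\log_2\) then yields
\[
K_T-1< d\log_2\!\left(1+\frac{TL^2}{\lambda d}\right),
\]
which implies the stated bound with the ceiling. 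The strict inequality gives the ceiling with room to spare.

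For \(J_T\), I observe that an actual switch (a call to \(\Gamma\)) can occur only at the start of an episode \(k\geq2\), and only when \(z_{k}\neq z_{k-1}\). Hence \(J_T\leq K_T-1\), and the second bound follows from the first. The first episode's initial action, including its \(w-1\) forced repetitions, involves no switch.

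The main obstacle is a bookkeeping subtlety about when the doubling test is evaluated relative to transitions. The doubling condition may be triggered at a round while the determinant has exceeded \(2\det(V_{t_k})\) by a larger factor, and there is a question of where \(t_{k+1}\) falls. If the new episode's \(V_{t_{k+1}}\) is recorded at the trigger time, before the transition begins, then \(V_{t_{k+1}}\) already satisfies the strict doubling inequality, since transitions add no retained observations. If instead \(t_{k+1}\) is defined after the transition completes, the inequality still holds because \(\det\) is monotone. Either way the chain inequality survives, and I would state this explicitly.

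A second check concerns the last, possibly truncated episode, which counts toward \(K_T\) but needs no further doubling. This is already handled by the \(K_T-1\) indexing in the chained bound.
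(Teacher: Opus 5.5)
Your proposal is correct and follows essentially the same argument as the paper: chain the strict doubling inequality $\det(W_{k+1})>2\det(W_k)$ over episode starts, bound $\det(W_{K_T})\le\det(V_{T+1})$ with the trace/AM--GM estimate from Lemma~\ref{lem:uniform-confidence}, take $\log_2$, and conclude with $J_T\le K_T-1$. Your added checks are consistent with the algorithm and make the argument slightly more robust than the paper's use of $W_1=\lambda I_d$: transitions add nothing to $V$, the warm-up only delays the end of the first episode, and using $\det(V_{t_1})\ge\det(\lambda I_d)$ instead of equality still works.
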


\begin{proof}
Let \(W_k\) denote the design matrix at the beginning of episode \(k\).
For every \(k=1,\ldots,K_T-1\), episode \(k+1\) begins only after the
determinant has more than doubled. Hence,
\(
\det(W_{k+1})
>
2\det(W_k).
\)
Iterating this inequality gives
\[
\det(W_{K_T})
>
2^{K_T-1}\det(W_1).
\]
Since
\(
W_1=\lambda I_d
\)
and
\(
W_{K_T}\preceq V_{T+1},
\)
we have
\[
2^{K_T-1}
<
\frac{\det(V_{T+1})}{\det(\lambda I_d)}.
\]
By the determinant bound from
Lemma~\ref{lem:uniform-confidence},
\[
\frac{\det(V_{T+1})}{\det(\lambda I_d)}
\leq
\left(
1+\frac{TL^2}{\lambda d}
\right)^d.
\]
Therefore,
\[
K_T-1
<
d\log_2\!\left(
1+\frac{TL^2}{\lambda d}
\right),
\]
and hence
\[
K_T
\leq
1+
\left\lceil
d\log_2\!\left(
1+\frac{TL^2}{\lambda d}
\right)
\right\rceil.
\]

Finally,
\[
J_T
=
\sum_{k=1}^{K_T-1}
\mathbf 1\{z_{k+1}\neq z_k\}
\leq
K_T-1,
\]
so
\[
J_T
\leq
\left\lceil
d\log_2\!\left(
1+\frac{TL^2}{\lambda d}
\right)
\right\rceil.
\]
\end{proof}

%%%%%%%%%%%%%%%%%%%%%%%%%%%%%%%%%%%%%%%%%%%%%%%%%%%%%%%%%%%%%%%%%%%%%%%%%%%%%%%

\begin{lemma}
\label{lem:algorithm-feasibility}
Under Assumption~\ref{ass:bounded-transitions}, the rare-switching OFUL policy is feasible at every round.
\end{lemma}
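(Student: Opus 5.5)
We argue by induction over rounds that every realized window lies in \(\mathcal C\), tracking an invariant on the current history state. The invariant we maintain is: at the first round of every stationary phase of episode \(k\) (that is, once initialization or the preceding transition has finished), the current history equals \(h(z_k)\), where \(z_k\in\mathcal Z\).

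For the initialization phase, the first action \(z_1\in\mathcal Z\) is repeated for at least \(w-1\) rounds. Any window consisting only of copies of \(z_1\) is \((z_1,\ldots,z_1)\in\mathcal C\) by the definition of \(\mathcal Z\). Hence every complete window formed during initialization is feasible, and afterwards the history is exactly \(h(z_1)\).

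For a stationary phase of episode \(k\), suppose the history is \(h(z_k)\). Playing \(z_k\) produces the window \((h(z_k),z_k)=(z_k,\ldots,z_k)\in\mathcal C\), and the history stays \(F(h(z_k),z_k)=h(z_k)\). So the phase may continue for any number of rounds, in particular until the determinant-doubling test fires, and the invariant is preserved.

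At a switch, \(z_{k+1}=\operatorname{Opt}(\cdot)\in\mathcal Z\) by construction of the optimistic-action oracle. There are two cases.
\begin{itemize}
\item If \(z_{k+1}=z_k\), nothing changes and the history remains \(h(z_{k+1})\).
\item Otherwise, Assumption~\ref{ass:bounded-transitions} makes \(\Gamma(z_k,z_{k+1})=(a_1,\ldots,a_m)\) well defined with \(m\le\tau\). Its defining property (Appendix~\ref{app:computational-oracles}) gives \((h_{j-1},a_j)\in\mathcal C\) for every \(j\), with \(h_0=h(z_k)\) and \(h_m=h(z_{k+1})\). Every window realized during the transition is therefore feasible, and the invariant holds at the start of episode \(k+1\).
\end{itemize}
Transition rounds are excluded from estimation, but that affects only \(I_t\), not the actions played, so feasibility is unaffected. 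When \(w=1\), windows are single actions in \(\mathcal Z\) and no history is needed.

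The main obstacle is bookkeeping at the boundaries, namely truncation by the horizon \(T\). If \(T\) ends during initialization or partway through a transition, only a prefix of the prescribed sequence is played. Each window in that prefix was already shown to be feasible, so truncation is harmless. Also, during initialization only windows with \(t\ge w\) are constrained, so the earliest incomplete windows impose nothing. Induction on \(t\) over these phases then yields \((x_{t-w+1},\ldots,x_t)\in\mathcal C\) for all \(t=w,\ldots,T\), almost surely.
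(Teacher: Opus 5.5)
Your proof is correct and follows essentially the same route as the paper. Both argue by induction over episodes with the invariant that the history equals \(h(z_k)\) at each switch, use the forced \(w-1\)-round warm-up to establish \(h(z_1)\), and use the defining property of \(\Gamma\) to carry the invariant across each transition. Your explicit handling of truncation by the horizon \(T\) is a harmless bit of extra bookkeeping that the paper leaves implicit.
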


\begin{proof}
When \(w=1\), feasibility is pointwise and every selected stationary action belongs to \(\mathcal Z\), so the claim is immediate. Hence suppose \(w\geq2\). The first selected action satisfies \(z_1\in\mathcal Z\), so
\(
(z_1,\ldots,z_1)\in\mathcal C.
\)
The learner repeats \(z_1\) for at least \(w-1\) rounds before attempting any transition. Thus, whenever a first transition is attempted, the current history is
\(
h(z_1)
=
(z_1,\ldots,z_1).
\)
Suppose inductively that immediately before a change from episode \(k\) to episode \(k+1\), the current history is
\(
h(z_k).
\)
If \(z_{k+1}=z_k\), no transition is needed, and continuing to play \(z_k\) produces only the feasible stationary window
\(
(z_k,\ldots,z_k)\in\mathcal C.
\)
If \(z_{k+1}\neq z_k\), Assumption~\ref{ass:bounded-transitions} and the transition oracle provide actions
\[
(a_1,\ldots,a_m),
\qquad
m\leq\tau,
\]
such that, starting from \(h_0=h(z_k)\),
\[
h_j=F(h_{j-1},a_j),
\qquad
(h_{j-1},a_j)\in\mathcal C,
\qquad
j=1,\ldots,m,
\]
and
\(
h_m=h(z_{k+1}).
\)
Thus every window encountered during the transition is feasible. After the transition, repeatedly playing \(z_{k+1}\) also preserves feasibility. The result follows by induction over the episodes.
\end{proof}

%%%%%%%%%%%%%%%%%%%%%%%%%%%%%%%%%%%%%%%%%%%%%%%%%%%%%%%%%%%%%%%%%%%%%%%%%%%%%%%

\begin{lemma}
\label{lem:high-prob-stationary-regret}
With probability at least \(1-\delta\),
\[
\begin{aligned}
\mathfrak R_T^{\mathcal Z}
:=
\sum_{t=1}^T
\langle\theta^\star,z^\star-x_t\rangle
\leq\;&
4\overline\beta_T(\delta)
\sqrt{
Td
\log\!\left(
1+\frac{TL^2}{\lambda d}
\right)
}
\\
&+
2LS(w-1)
\\
&+
2LS\tau
\left\lceil
d\log_2\!\left(
1+\frac{TL^2}{\lambda d}
\right)
\right\rceil.
\end{aligned}
\]
\end{lemma}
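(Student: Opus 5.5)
We work on the event of Lemma~\ref{lem:uniform-confidence}, where \(\theta^\star\in\mathcal E_t(\delta)\) for all \(t\leq T+1\); this event has probability at least \(1-\delta\). Split the rounds \(1,\ldots,T\) into three disjoint classes. The first class is the initialization block of \(w-1\) rounds, where \(z_1\) is repeated before any estimation-relevant switch can occur. The second class is the transition rounds, where the learner follows \(\Gamma(z_k,z_{k+1})\). The third class is the exploitation rounds, where \(x_t=z_k\) for the current episode \(k\) and \(I_t=1\).

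On any round, \(|\langle\theta^\star,z^\star-x_t\rangle|\leq 2LS\) by Cauchy--Schwarz. We therefore bound the first two classes crudely. For initialization, we charge the first \(w-1\) rounds at \(2LS\) each, giving \(2LS(w-1)\). Strictly, these rounds are exploitation rounds for \(z_1\), so charging them this way is only conservative. For transitions, each actual switch costs at most \(\tau\) rounds. By Lemma~\ref{lem:number-switches}, there are at most \(\lceil d\log_2(1+TL^2/(\lambda d))\rceil\) switches, which yields the third term.

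For the exploitation rounds, we use the standard OFUL argument with a frozen episode start. Fix such a round \(t\) in episode \(k\) and let \(\widetilde\theta\in\mathcal E_{t_k}\) be the optimistic parameter. Optimism of \(z_k\) over \(\mathcal Z\ni z^\star\) gives
\[
\langle\theta^\star,z^\star-z_k\rangle\leq 2\beta_{t_k}(\delta)\|z_k\|_{V_{t_k}^{-1}}.
\]
Within an episode, \(\det(V_t)\leq 2\det(V_{t_k})\) and \(V_{t_k}\preceq V_t\). The latter holds because transition rounds are excluded but otherwise \(V\) only grows. Lemma~\ref{lem:determinant-comparison} with \(c=2\) then gives \(\|z_k\|_{V_{t_k}^{-1}}\leq\sqrt2\|x_t\|_{V_t^{-1}}\). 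Bounding \(\beta_{t_k}\leq\overline\beta_T\), the per-round regret is at most \(2\sqrt2\,\overline\beta_T\|x_t\|_{V_t^{-1}}\).

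The retained rounds form exactly the sequence \(u_n\) of Lemma~\ref{lem:elliptical-potential}, since \(V_t\) updates only on retained rounds. That lemma with \(N\leq T\) gives
\[
\sum\|x_t\|_{V_t^{-1}}\leq\sqrt{2Td\log(1+TL^2/(\lambda d))}.
\]
Multiplying, \(2\sqrt2\cdot\sqrt2=4\), which is the first term.

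The delicate point is the boundary round at which the doubling condition triggers. There \(\det(V_t)\) may exceed \(2\det(V_{t_k})\) before the switch is executed. We handle this by checking the condition before playing, so every exploitation round played under \(z_k\) satisfies \(\det(V_t)\leq2\det(V_{t_k})\). Alternatively, an offending round can be absorbed into the crude term. A related check is that the initialization rounds, if retained for estimation, are consistent with the potential argument; since they are also charged crudely, we simply omit them from the elliptical sum. Summing the three contributions gives the stated bound.
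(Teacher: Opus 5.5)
Your proposal is correct and follows the paper's proof essentially step for step. It uses the same confidence event and the same split into stationary and transition rounds, bounds the stationary part by optimism, Lemma~\ref{lem:determinant-comparison} with $c=2$, and the elliptical potential, and bounds the transitions via the switch count of Lemma~\ref{lem:number-switches}. The only cosmetic difference is in the warm-up: the paper charges $2LS$ only to the rounds of episode~1 at which $\det(V_t)>2\det(W_1)$ (its set $\mathcal H$), whereas you charge all of the first $w-1$ rounds crudely and drop them from the elliptical sum, which gives the same $2LS(w-1)$ term.
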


\begin{proof}
Define the uniform confidence event
\[
\mathcal G_\delta
:=
\left\{
\theta^\star\in\mathcal E_t(\delta),
\quad
t=1,\ldots,T+1
\right\}.
\]
By Lemma~\ref{lem:uniform-confidence},
\(
\mathbb P(\mathcal G_\delta)\geq1-\delta.
\)
We condition on \(\mathcal G_\delta\). Partition the rounds into retained stationary rounds \(\mathcal O\) and transition rounds \(\mathcal Q\). Then
\[
\mathfrak R_T^{\mathcal Z}
=
\mathfrak R_T^{\mathrm{stat}}
+
\mathfrak R_T^{\mathrm{trans}}.
\]

\paragraph{Stationary rounds.}
Consider \(t\in\mathcal O\) belonging to episode \(k\). Then \(x_t=z_k\). Let
\(
W_k:=V_{t_k}.
\)
By optimism,
\[
z_k
\in
\arg\max_{z\in\mathcal Z}
\left\{
\langle\widehat\theta_{t_k},z\rangle
+
\beta_{t_k}(\delta)
\|z\|_{W_k^{-1}}
\right\}.
\]
Since \(z^\star\in\mathcal Z\) and \(\theta^\star\in\mathcal E_{t_k}(\delta)\),
\[
\langle\theta^\star,z^\star\rangle
\leq
\langle\widehat\theta_{t_k},z_k\rangle
+
\beta_{t_k}(\delta)
\|z_k\|_{W_k^{-1}}.
\]
Therefore,
\[
\begin{aligned}
\langle\theta^\star,z^\star-z_k\rangle
&\leq
\langle\widehat\theta_{t_k}-\theta^\star,z_k\rangle
+
\beta_{t_k}(\delta)
\|z_k\|_{W_k^{-1}}
\leq
2\beta_{t_k}(\delta)
\|z_k\|_{W_k^{-1}}
\leq
2\overline\beta_T(\delta)
\|z_k\|_{W_k^{-1}}.
\end{aligned}
\]
During the initial warm-up, the algorithm may continue playing \(z_1\) after the determinant-doubling threshold has already been crossed in order to establish the stationary history. Let
\[
\mathcal H
:=
\left\{
t\in\mathcal O:
t\text{ belongs to episode }1
\text{ and }
\det(V_t)>2\det(W_1)
\right\}.
\]
The forced warm-up lasts for at most \(w-1\) rounds, so
\(
|\mathcal H|\leq w-1.
\)
For every \(t\in\mathcal H\),
\[
\langle\theta^\star,z^\star-x_t\rangle
\leq
\|\theta^\star\|_2
\|z^\star-x_t\|_2
\leq
2LS.
\]
Hence
\[
\sum_{t\in\mathcal H}
\langle\theta^\star,z^\star-x_t\rangle
\leq
2LS(w-1).
\]
Now consider \(t\in\mathcal O\setminus\mathcal H\) in episode \(k\). Before the determinant-doubling threshold is crossed,
\[
W_k\preceq V_t,
\qquad
\det(V_t)\leq2\det(W_k).
\]
Lemma~\ref{lem:determinant-comparison} therefore gives
\(
W_k^{-1}\preceq2V_t^{-1},
\)
and consequently
\(
\|z_k\|_{W_k^{-1}}
\leq
\sqrt2\,\|z_k\|_{V_t^{-1}}.
\)
Thus,
\[
\langle\theta^\star,z^\star-z_k\rangle
\leq
2\sqrt2\,
\overline\beta_T(\delta)
\|z_k\|_{V_t^{-1}}.
\]
Let \(N:=|\mathcal O|\leq T\) and enumerate the stationary rounds as
\(
t_1<\cdots<t_N.
\)
Set
\[
u_n:=x_{t_n},
\qquad
\widetilde V_n
:=
\lambda I_d
+
\sum_{j=1}^{n-1}u_ju_j^\top.
\]
Since transition observations are excluded from the estimator,
\[
\widetilde V_n=V_{t_n}.
\]
Lemma~\ref{lem:elliptical-potential} gives
\[
\sum_{t\in\mathcal O}
\|x_t\|_{V_t^{-1}}
\leq
\sqrt{
2Td
\log\!\left(
1+\frac{TL^2}{\lambda d}
\right)
}.
\]
Therefore,
\[
\begin{aligned}
\mathfrak R_T^{\mathrm{stat}}
&\leq
2\sqrt2\,
\overline\beta_T(\delta)
\sum_{t\in\mathcal O}
\|x_t\|_{V_t^{-1}}
+
2LS(w-1)
\leq
4\overline\beta_T(\delta)
\sqrt{
Td
\log\!\left(
1+\frac{TL^2}{\lambda d}
\right)
}
+
2LS(w-1).
\end{aligned}
\]

\paragraph{Transition rounds.}
For every \(t\in\mathcal Q\),
\[
\langle\theta^\star,z^\star-x_t\rangle
\leq
2LS.
\]
Each actual switch requires at most \(\tau\) transition rounds, so
\(
|\mathcal Q|
\leq
\tau J_T.
\)
By Lemma~\ref{lem:number-switches},
\[
J_T
\leq
\left\lceil
d\log_2\!\left(
1+\frac{TL^2}{\lambda d}
\right)
\right\rceil.
\]
Hence
\[
\mathfrak R_T^{\mathrm{trans}}
\leq
2LS\tau
\left\lceil
d\log_2\!\left(
1+\frac{TL^2}{\lambda d}
\right)
\right\rceil.
\]
Combining the stationary and transition contributions proves the claim.
\end{proof}

%%%%%%%%%%%%%%%%%%%%%%%%%%%%%%%%%%%%%%%%%%%%%%%%%%%%%%%%%%%%%%%%%%%%%%%%%%%%%%%

\begin{proof}[Proof of Theorem~\ref{thm:expected-offline-regret}]
By Lemma~\ref{lem:algorithm-feasibility}, the rare-switching OFUL policy is feasible at every round and is therefore admissible. Choose
\(
\delta=\frac{1}{T}.
\)
By Lemma~\ref{lem:high-prob-stationary-regret}, on an event of probability at least \(1-1/T\),
\[
\mathfrak R_T^{\mathcal Z}
=
\widetilde{\mathcal O}
\left(
d\sqrt T+\tau d+w
\right).
\]
Indeed,
\[
\overline\beta_T(1/T)
=
\widetilde{\mathcal O}(\sqrt d),
\]
so the statistical term is \(\widetilde{\mathcal O}(d\sqrt T)\), the transition term is \(\widetilde{\mathcal O}(\tau d)\), and the warm-up term is \(\mathcal{O}(w)\). On the complement of the confidence event,
\[
\mathfrak R_T^{\mathcal Z}
=
\sum_{t=1}^T
\langle\theta^\star,z^\star-x_t\rangle
\leq
2LST.
\]
Since the complement has probability at most \(1/T\), its contribution to the expectation is at most \(2LS\). Therefore,
\[
\operatorname{Reg}^{\mathcal Z}_{\pi}(T;\theta^\star)
=
\mathbb E_\pi
\left[
\mathfrak R_T^{\mathcal Z}
\right]
=
\widetilde{\mathcal O}
\left(
d\sqrt T+\tau d+w
\right).
\]
It remains to compare the stationary benchmark with the offline-optimal feasible trajectory. Writing
\[
T=qw+r,
\qquad
0\leq r<w,
\]
Theorem~\ref{thm:stationary_near_optimality} applied to
\[
f_{\theta^\star}(x)
=
\langle\theta^\star,x\rangle
\]
gives
\[
0
\leq
\Delta_T
:=
V_T^{\mathrm{off}}(\theta^\star)
-
T\langle\theta^\star,z^\star\rangle
\leq
r
\left(
\max_{x\in\mathcal X}
\langle\theta^\star,x\rangle
-
\langle\theta^\star,z^\star\rangle
\right).
\]
Since
\[
\|x\|_2\leq L,
\qquad
\|z^\star\|_2\leq L,
\qquad
\|\theta^\star\|_2\leq S,
\]
we have
\[
\max_{x\in\mathcal X}
\langle\theta^\star,x\rangle
-
\langle\theta^\star,z^\star\rangle
\leq
2LS.
\]
Hence
\[
0\leq\Delta_T\leq2LSr\leq2LS(w-1).
\]
Finally,
\[
\operatorname{Reg}^{\mathrm{off}}_{\pi}(T;\theta^\star)
=
\operatorname{Reg}^{\mathcal Z}_{\pi}(T;\theta^\star)
+
\Delta_T.
\]
Combining the two bounds yields
\[
\operatorname{Reg}^{\mathrm{off}}_{\pi}(T;\theta^\star)
=
\widetilde{\mathcal O}
\left(
d\sqrt T+\tau d+w
\right).
\]
This completes the proof.
\end{proof}

\subsection{A Lower Bound for the Transition Diameter}
\label{app:tau-lower-bound}

We show that the transition diameter can have a direct impact on regret. Even in a one-dimensional setting, and even if the learner is told the true reward parameter immediately after its first action, the sliding-window constraint can still force an additional regret of order $\tau$.

\begin{proposition}
\label{prop:tau-lower-bound}
For every integer $\tau \ge 2$ and every horizon $T \ge \tau$, there exists a one-dimensional linear-bandit instance satisfying Assumptions~\ref{ass:cyclic_invariance}, \ref{ass:concave_reward}, and~\ref{ass:bounded-transitions}, with transition diameter exactly $\tau$, such that every admissible policy $\pi$ satisfies
\[
    \sup_{\theta^\star \in \{-1,+1\}}
    \operatorname{Reg}^{\mathrm{off}}_{\pi}(T;\theta^\star)
    \ge \frac{\tau}{4}.
\]
\end{proposition}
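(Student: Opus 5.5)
The plan is to build a one-dimensional, cyclically symmetric ramping instance in which the true sign of $\theta^\star$ is irrelevant to information: the only cost is that any admissible trajectory can move by at most a fixed step per round. The window length is $w=2$, so the offline benchmark is trivial, and the whole $\Omega(\tau)$ regret comes from the slow travel between the two extreme actions $\pm1$.

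\emph{Construction and assumptions.} Take $\mathcal X=[-1,1]$, $w=2$, and $\rho:=2/\tau$, and set
\[
\mathcal C=\{(x_1,x_2)\in[-1,1]^2:\ |x_2-x_1|\le\rho\}.
\]
\begin{itemize}
\item $\mathcal C$ is nonempty, closed and convex, and it is invariant under the swap $(x_1,x_2)\mapsto(x_2,x_1)$, so Assumption~\ref{ass:cyclic_invariance} holds.
\item Assumption~\ref{ass:concave_reward} holds for the linear reward $f_{\theta^\star}(x)=\theta^\star x$.
\item Every stationary window has zero variation, so $\mathcal Z=[-1,1]$, and the history is the single last action.
\end{itemize}
To compute the transition diameter, observe that moving from $h(z)$ to $h(z')$ takes exactly $\lceil |z'-z|/\rho\rceil$ feasible steps: equally spaced steps give the upper bound and the triangle inequality gives the lower bound. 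Taking the supremum over $z,z'\in\mathcal Z$ yields $\tau=\lceil 2/\rho\rceil=\tau$, attained at $z=-1$, $z'=1$. This verifies Assumption~\ref{ass:bounded-transitions} with diameter exactly $\tau$. We work in the noiseless model $\eta_t=0$, as in the proof of Proposition~\ref{prop:linear-regret-without-transitions}.

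\emph{Benchmark.} For $\theta^\star=\pm1$, the constant trajectory $x_t\equiv\theta^\star$ is feasible and pointwise optimal. Hence $V_T^{\mathrm{off}}(\theta^\star)=T$, and
\[
\operatorname{Reg}^{\mathrm{off}}_\pi(T;\theta^\star)=\mathbb E_\pi\Bigl[\sum_{t=1}^T(1-\theta^\star x_t)\Bigr].
\]

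\emph{Key step.} The first action $x_1=a$ is chosen before any feedback, so its law (allowing external randomization) does not depend on $\theta^\star$. This first action is the only place where ignorance matters. Feasibility then gives the ramp bounds
\[
x_t\le a+(t-1)\rho,\qquad x_t\ge a-(t-1)\rho,
\]
for every trajectory and every $\theta^\star$. Therefore, conditional on $x_1=a$:
\begin{itemize}
\item the per-round loss for $\theta^\star=1$ is at least $(1-a-(t-1)\rho)_+$;
\item the per-round loss for $\theta^\star=-1$ is at least $(1+a-(t-1)\rho)_+$.
\end{itemize}
By convexity of $u\mapsto u_+$, the sum of these two lower bounds is at least $2\bigl(1-(t-1)\rho\bigr)_+$, which no longer depends on $a$. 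Averaging the two regrets and taking expectation over $a$ gives
\[
\tfrac12\operatorname{Reg}(T;1)+\tfrac12\operatorname{Reg}(T;-1)\ \ge\ \sum_{t=1}^T\Bigl(1-\tfrac{2(t-1)}{\tau}\Bigr)_+ .
\]

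\emph{Final count.} Let $m=\lceil\tau/2\rceil$. Since $T\ge\tau\ge m$, the sum contains the $m$ terms $j=t-1=0,\dots,m-1$, and the right-hand side equals
\[
m-\frac{m(m-1)}{\tau}=m\Bigl(1-\frac{m-1}{\tau}\Bigr).
\]
Because $m-1\le\tau/2$, this is at least $m/2\ge\tau/4$. The supremum over $\theta^\star\in\{-1,1\}$ dominates the average, which proves the bound. The only delicate point is the convexity step that removes the dependence on $a$; everything else is direct bookkeeping.
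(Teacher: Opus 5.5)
Your proof is correct and follows the paper's argument step for step: the same ramping instance with $w=2$ and $\rho=2/\tau$, the same observation that $V_T^{\mathrm{off}}=T$, and the same idea that only the $\theta^\star$-independent first action carries uncertainty while feasibility limits movement to $\rho$ per round. The only difference is the final bookkeeping. The paper keeps $u=\theta^\star a$, lower-bounds each sum by the integral $(1-u)^2/(2\rho)$, and then averages using $(1-a)^2+(1+a)^2\ge 2$; you instead eliminate $a$ round by round via convexity of $(\cdot)_+$ and sum the arithmetic series exactly, which in fact gives the slightly sharper bound of at least $\tau/4+1/2$.
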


\begin{proof}
Fix an integer $\tau \ge 2$. Let
\[
    \mathcal{X}=[-1,1],
    \qquad
    w=2,
    \qquad
    \rho:=\frac{2}{\tau},
\]
and define the window-feasibility set
\[
    \mathcal{C}_{\rho}
    :=
    \left\{
        (x_1,x_2)\in[-1,1]^2:
        |x_2-x_1|\le \rho
    \right\}.
\]
Thus, every admissible trajectory satisfies
\[
    |x_t-x_{t-1}|\le \rho,
    \qquad
    t=2,\ldots,T.
\]

The action space $\mathcal{X}$ is nonempty, compact, and convex. The set $\mathcal{C}_{\rho}$ is nonempty, closed, and convex. It is also invariant under the cyclic shift $(x_1,x_2)\mapsto(x_2,x_1)$, since
\(
    |x_2-x_1|=|x_1-x_2|.
\)
Thus, the required convexity and cyclic-invariance assumptions are satisfied. The stationary feasible set is
\(
    Z
    :=
    \left\{
        z\in[-1,1]:(z,z)\in\mathcal{C}_{\rho}
    \right\}.
\)
Since $|z-z|=0$ for every $z\in[-1,1]$,
\(
    Z=[-1,1].
\)
Since $w=2$, a stationary history consists of a single action, so $h(z)=z$. For any $z,z'\in Z$, each feasible transition can change the action by at most $\rho$. Therefore, any transition from $z$ to $z'$ using $m$ actions must satisfy
\(
|z'-z|\le m\rho,
\)
which implies
\[
m\ge \left\lceil \frac{|z'-z|}{\rho}\right\rceil.
\]
This bound is also achievable by moving from $z$ to $z'$ in equally spaced steps of size at most $\rho$. Hence,
\[
m(z,z')=\left\lceil \frac{|z'-z|}{\rho}\right\rceil.
\]
Since $Z=[-1,1]$ has diameter $2$ and $\rho=2/\tau$,
\[
\max_{z,z'\in Z} m(z,z')
=
\left\lceil \frac{2}{\rho}\right\rceil
=
\left\lceil \frac{2}{2/\tau}\right\rceil
=
\tau.
\]
Thus Assumption~\ref{ass:bounded-transitions} holds, and the transition diameter is exactly $\tau$. Let $\theta^\star\in\{-1,+1\}$ and consider the noiseless linear reward model
\(
Y_t=\theta^\star x_t.
\)
This is a special case of the conditionally sub-Gaussian noise model. If $\theta^\star=+1$, the optimal stationary action is $z^\star=1$, while if $\theta^\star=-1$, it is $z^\star=-1$. Thus, in both cases,
\[
z^\star=\theta^\star
\qquad\text{and}\qquad
\theta^\star z^\star=1.
\]
The constant trajectory $x_t=\theta^\star$ is feasible for all $t$ and achieves total reward $T$. Since the per-round reward cannot exceed $1$, we have
\(
V_T(\theta^\star)=T.
\)
Therefore, the stationary and offline benchmarks coincide on this instance:
\[
\operatorname{Reg}^{\mathrm{off}}_{\pi}(T;\theta^\star)
=
\operatorname{Reg}^{Z}_{\pi}(T;\theta^\star).
\]
To isolate the transition cost, suppose the learner is told the true value of $\theta^\star$ immediately after choosing its first action $x_1$. Hence, from round $2$ onward, there is no statistical uncertainty. Fix $x_1=a\in[-1,1]$ and define $u:=\theta^\star a$. Since $\theta^\star\in\{-1,+1\}$ and $|x_{t+1}-x_t|\le\rho$,
\[
\theta^\star(x_{t+1}-x_t)\le |x_{t+1}-x_t|\le\rho.
\]
Therefore, after $k$ steps,
\[
\theta^\star x_{1+k}
=\theta^\star x_1+\sum_{j=1}^{k}\theta^\star(x_{j+1}-x_j)
\le u+k\rho.
\]
Since $\theta^\star x_t\le1$, we obtain
\[
\theta^\star x_{1+k}\le\min\{1,u+k\rho\}.
\]
Thus, the learner's reward can improve by at most $\rho$ per round until it reaches the optimal value $1$. Consequently,
\[
1-\theta^\star x_{1+k}
\ge [1-u-k\rho]_+,
\]
where $[y]_+:=\max\{y,0\}$. Summing over the horizon gives
\[
\begin{aligned}
R_T(\theta^\star)
&:=\sum_{t=1}^T\theta^\star(z^\star-x_t)=\sum_{t=1}^T(1-\theta^\star x_t)\ge\sum_{k=0}^{T-1}[1-u-k\rho]_+.
\end{aligned}
\]
Since $u\ge -1$ and $\rho=2/\tau$, we have
\[
\frac{1-u}{\rho}\le \frac{2}{\rho}=\tau.
\]
Because $T\ge\tau$, the regret bound gives
\[
R_T(\theta^\star)\ge \sum_{k=0}^{\tau-1}[1-u-k\rho]_+.
\]
The function $s\mapsto[1-u-\rho s]_+$ is nonnegative and nonincreasing, so the sum is lower bounded by its integral:
\[
\begin{aligned}
R_T(\theta^\star)
&\ge \int_0^\tau [1-u-\rho s]_+\,ds=\int_0^{(1-u)/\rho}(1-u-\rho s)\,ds=\frac{(1-u)^2}{2\rho}.
\end{aligned}
\]
Hence,
\[
R_T(\theta^\star)\ge \frac{(1-u)^2}{2\rho}.
\]
If $\theta^\star=+1$, then $u=a$, while if $\theta^\star=-1$, then $u=-a$.
\[
R_T(+1)\ge \frac{(1-a)^2}{2\rho},
\qquad
R_T(-1)\ge \frac{(1+a)^2}{2\rho}.
\]
Averaging the two bounds gives
\[
\begin{aligned}
\frac{R_T(+1)+R_T(-1)}{2}
&\ge
\frac{(1-a)^2+(1+a)^2}{4\rho}
=
\frac{1+a^2}{2\rho}
\ge
\frac{1}{2\rho}.
\end{aligned}
\]
Since $\rho=2/\tau$,
\[
\frac{1}{2\rho}=\frac{\tau}{4}.
\]
Therefore,
\[
\frac{R_T(+1)+R_T(-1)}{2}\ge \frac{\tau}{4},
\]
and hence at least one of the two reward parameters must satisfy
\[
\max_{\theta^\star\in\{-1,+1\}}R_T(\theta^\star)\ge \frac{\tau}{4}.
\]
Finally, suppose the policy randomizes its first action $X_1$. Since $X_1$ is chosen before any reward is observed, its distribution is independent of $\theta^\star$. Conditioning on any realization $X_1=a$ gives
\[
\frac{1}{2}\mathbb{E}\!\left[R_T(+1)\mid X_1=a\right]
+
\frac{1}{2}\mathbb{E}\!\left[R_T(-1)\mid X_1=a\right]
\ge \frac{\tau}{4}.
\]
Taking expectation over $X_1$,
\[
\frac{1}{2}\mathbb{E}[R_T(+1)]
+
\frac{1}{2}\mathbb{E}[R_T(-1)]
\ge \frac{\tau}{4}.
\]
Hence at least one of the two reward parameters satisfies
\[
\sup_{\theta^\star\in\{-1,+1\}}
\operatorname{Reg}^{Z}_{\pi}(T;\theta^\star)
\ge \frac{\tau}{4}.
\]
Since the stationary and offline benchmarks coincide on this instance,
\[
\sup_{\theta^\star\in\{-1,+1\}}
\operatorname{Reg}^{\mathrm{off}}_{\pi}(T;\theta^\star)
\ge \frac{\tau}{4}.
\]
This completes the proof.
\end{proof}

\begin{remark}[Statistical, transition, and finite-horizon costs]
\label{rem:statistical-transition-lower-bound}
Proposition~\ref{prop:tau-lower-bound} isolates the cost of feasible transitions from the statistical difficulty of learning: the $\Omega(\tau)$ lower bound holds even when the learner is told the true reward parameter immediately after its first action. Standard stochastic linear-bandit lower bounds give a statistical cost of order $d\sqrt{T}$; see \citet[Theorem~24.2]{lattimore2020bandit}, while Proposition~\ref{prop:tightness-offline-gap} shows that the finite-horizon gap can be $\Omega(w)$. Taken together, these results identify the lower-bound scales
\(
\Omega\!\left(d\sqrt{T}+\tau+w\right).
\)
Here, the $\Omega(w)$ term comes from a separate finite-horizon construction, so this expression summarizes the three individual lower bounds rather than a single joint minimax result. The remaining gap is in the transition term: our upper bound is $\widetilde{\mathcal{O}}(\tau d)$, whereas the lower bound is $\Omega(\tau)$. Whether the additional factor of $d$ is necessary remains open.
\end{remark}

\section{Proofs for Section~\ref{sec:online_learning_without_cyclic}: Planning}
\label{app:planning_proofs}

\subsection{Proof of Theorem~\ref{thm:bellman_optimality}}
\label{app:thm_bellman_optimality}

\begin{lemma}
\label{lem:finite_memory_equivalence}
Let
\(
s_t=(x_{t-w+1},\ldots,x_{t-1}),
\)
Then a sequence \(x_{1:T}\) is feasible under the original sliding-window
constraints if and only if
\[
x_t\in A(s_t),
\qquad
t=w,\ldots,T.
\]
\end{lemma}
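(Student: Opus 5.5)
This is a direct unwinding of the definitions of \(s_t\), \(A(\cdot)\), and feasibility. The plan is to prove both directions at once by showing that, for each fixed \(t\in\{w,\ldots,T\}\), the window condition at \(t\) and the condition \(x_t\in A(s_t)\) are the same statement.

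First, fix \(t\in\{w,\ldots,T\}\). Since \(t\geq w\), every index \(t-w+1,\ldots,t-1\) lies in \(\{1,\ldots,T\}\). Hence \(s_t=(x_{t-w+1},\ldots,x_{t-1})\) is a well-defined complete history in \(\mathcal X^{w-1}\). Its concatenation with \(x_t\) is exactly the realized window:
\[
(s_t,x_t)=(x_{t-w+1},\ldots,x_{t-1},x_t).
\]
By the definition \(A(s)=\{x\in\mathcal X:(s,x)\in\mathcal C\}\), and since \(x_t\in\mathcal X\) always, we get
\[
x_t\in A(s_t)\iff(x_{t-w+1},\ldots,x_t)\in\mathcal C.
\]

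Second, quantify over \(t=w,\ldots,T\). The sequence is feasible iff the right-hand side holds for all such \(t\), which is iff the left-hand side holds for all such \(t\). This proves the lemma.

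It is also worth recording a consistency point that later Bellman arguments will use. For \(t\geq w\), the histories satisfy \(s_{t+1}=F(s_t,x_t)\), because dropping \(x_{t-w+1}\) and appending \(x_t\) gives \((x_{t-w+2},\ldots,x_t)\). This identifies the constraint-checking histories with the state trajectory of the finite-memory system.

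I would also note the case \(w=1\) separately. There the history is empty, \(A(\emptyset)=\{x:(x)\in\mathcal C\}\), and the same argument goes through verbatim. The only potential obstacle is index bookkeeping, namely ensuring \(s_t\) is defined only for \(t\geq w\); there is no substantive difficulty.
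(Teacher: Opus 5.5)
Your proposal is correct and follows the paper's proof: fix \(t\in\{w,\ldots,T\}\), note that \((s_t,x_t)=(x_{t-w+1},\ldots,x_t)\) is exactly the realized window, so \(x_t\in A(s_t)\iff(x_{t-w+1},\ldots,x_t)\in\mathcal C\) by the definition of \(A\), and then quantify over all \(t\). Your extra remarks on \(s_{t+1}=F(s_t,x_t)\) and the \(w=1\) case are harmless additions that the paper does not need.
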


\begin{proof}
Fix \(t\in\{w,\ldots,T\}\). By definition of the state-dependent feasible
set,
\[
x_t\in A(s_t)
\quad\Longleftrightarrow\quad
(s_t,x_t)\in\mathcal C.
\]
Since
\(
s_t=(x_{t-w+1},\ldots,x_{t-1}),
\)
we have
\(
(s_t,x_t)
=
(x_{t-w+1},\ldots,x_t).
\)
Hence
\[
x_t\in A(s_t)
\quad\Longleftrightarrow\quad
(x_{t-w+1},\ldots,x_t)\in\mathcal C.
\]
Because this equivalence holds for every \(t=w,\ldots,T\), the state-dependent feasibility conditions are equivalent to the original sliding-window constraints.
\end{proof}

\begin{proof}[Proof of Theorem~\ref{thm:bellman_optimality}]
We prove the result by induction on the remaining horizon \(n\).
For \(n=0\), no further actions remain to be selected. Hence, the cumulative reward over the remaining horizon is zero, so \(J_0^\star(s)=0\).
Now assume the claim holds for horizon \(n-1\), and consider an \(n\)-step viable history \(s\in\mathcal S_n\). Any feasible \(n\)-step continuation from \(s\) must first choose an action \(x\) satisfying
\(
x\in A(s).
\)
After taking this action, the new history is
\(
s'=F(s,x),
\)
and feasibility of the remaining \(n-1\) steps requires
\(
s'\in\mathcal S_{n-1}.
\)
By the induction hypothesis, the maximum reward achievable from \(s'\) over the remaining \(n-1\) steps is
\(
J_{n-1}^\star(s').
\)
Therefore, any feasible \(n\)-step continuation from \(s\) has reward at most
\[
\langle\theta^\star,x\rangle+J_{n-1}^\star(F(s,x)).
\]
Taking the maximum over all feasible first actions gives
\[
\text{optimal reward from }s
\leq
\max_{\substack{x\in A(s)\\F(s,x)\in\mathcal S_{n-1}}}
\left\{
\langle\theta^\star,x\rangle+J_{n-1}^\star(F(s,x))
\right\}
=
J_n^\star(s).
\]
It remains to show that this upper bound is achievable. Let \(x^\star\) be an action attaining the maximum in the Bellman recursion. Since
\(
F(s,x^\star)\in\mathcal S_{n-1},
\)
the induction hypothesis guarantees a feasible optimal continuation from \(F(s,x^\star)\) with reward
\(
J_{n-1}^\star(F(s,x^\star)).
\)
Combining this continuation with the first action \(x^\star\) gives a feasible \(n\)-step trajectory from \(s\) with total reward
\[
\langle\theta^\star,x^\star\rangle+J_{n-1}^\star(F(s,x^\star))
=
J_n^\star(s).
\]
Hence \(J_n^\star(s)\) equals the optimal reward over all feasible \(n\)-step continuations from \(s\).
Finally, by Lemma~\ref{lem:finite_memory_equivalence}, finite-memory feasibility is equivalent to the original sliding-window constraints. Therefore, the optimal value of the original problem is
\[
V_T^{\mathrm{off}}(\theta^\star)
=
\max_{s=(x_1,\ldots,x_{w-1})\in\mathcal S_{T-w+1}}
\left\{
\sum_{i=1}^{w-1}\langle\theta^\star,x_i\rangle
+
J_{T-w+1}^\star(s)
\right\},
\]
which completes the proof.
\end{proof}

%%%%%%%%%%%%%%%%%%%%%%%%%%%%%%%%%%%%%%%%%%%%%%%%%%%%%%%%%%%%%%%%%%%%%%%%%%%%%%%%%%%%%%%%%%%%%%%%%%%%%%%%%%%%%%%%%%%%%%%%%%%%%%%%%%%%%%%%%%%%%%%%%%%%%%%%%%%%%%

\subsection{Proof of Lemma~\ref{lem:value_span_diameter}}
\label{app:lem_value_span_diameter}

\begin{proof}
Fix \(n\geq1\) and \(s,s'\in\mathcal S_n\), and let \( R:=\rng(g). \) Subtracting \(g_{\min}\) from \(g\) shifts every \(n\)-step value by the same amount \(ng_{\min}\), so it does not change differences between continuation values. Thus, without loss of generality, assume
\[
0\leq g(x)\leq R
\qquad\text{for all }x\in\mathcal X.
\]
By Assumption~\ref{assump:history_diameter}, there is a feasible transition from \(s\) to \(s'\) of length \(m\leq D\). Since \(s'\in\mathcal S_n\), this transition can be followed by a feasible \(n\)-step continuation from \(s'\). Hence \(s\) admits a feasible continuation of length \(n+m\). For any \(\varepsilon>0\), choose a feasible \(n\)-step continuation from \(s'\) whose reward is at least
\(
J_n^g(s')-\varepsilon.
\)
Prepending the transition from \(s\) to \(s'\) gives a feasible \((n+m)\)-step continuation from \(s\). Since all shifted rewards are nonnegative,
\[
J_{n+m}^g(s)\geq J_n^g(s')-\varepsilon.
\]
Now consider any feasible \((n+m)\)-step continuation from \(s\). Its first \(n\) actions form a feasible \(n\)-step continuation from \(s\), and therefore contribute at most \(J_n^g(s)\). The remaining \(m\) actions contribute at most \(mR\). Thus,
\[
J_{n+m}^g(s)\leq J_n^g(s)+mR.
\]
Combining the two bounds gives
\[
J_n^g(s')-\varepsilon
\leq
J_n^g(s)+mR.
\]
Letting \(\varepsilon\downarrow0\) and using \(m\leq D\),
\[
J_n^g(s')-J_n^g(s)\leq DR.
\]
Applying the same argument to a feasible transition from \(s'\) to \(s\), whose length is also at most \(D\), yields
\[
J_n^g(s)-J_n^g(s')\leq DR.
\]
Therefore,
\[
\left|J_n^g(s)-J_n^g(s')\right|
\leq
D\,R
=
D\,\rng(g).
\]
Since \(s,s'\in\mathcal S_n\) were arbitrary,
\[
\Span_{\mathcal S_n}(J_n^g)
\leq
D\,\rng(g),
\]
which proves the claim.
\end{proof}

%%%%%%%%%%%%%%%%%%%%%%%%%%%%%%%%%%%%%%%%%%%%%%%%%%%%%%%%%%%%%%%%%%%%%%%%%%%%%%%%%%%%%%%%%%%%%%%%%%%%%%%%%%%%%%%%%%%%%%%%%%%%%%%%%%%%%%%%%%%%%%%%%%%%%%%%%%%%%%%%%%%%%%%%%%%%%%%%%%%%%%%%%%%%%%%%%%%%%%%%%%%%%%%%%%%%%%%%%%%%%%%%%%%%%%%%%%%%%%%%%%%%%%%%%%%%%%%%%%%%%%%%%%%%%%%%%%%%%%%%%%%%%%%%%%%%%%%%%%%%

\subsection{Proof of Theorem~\ref{thm:noncyclic_regret}}
\label{app:proof_thm_noncyclic_regret}

For episode \(k\), let
\[
\mathcal I_k:=\{t_k,\ldots,t_{k+1}-1\},
\qquad
\ell_k:=|\mathcal I_k|,
\qquad
n_k:=T-t_k+1,
\]
with \(t_{K+1}:=T+1\). Then
\(
n_{k+1}=n_k-\ell_k.
\)

\begin{lemma}
\label{lem:episodewise_diameter_bound}
Under Assumption~\ref{assump:history_diameter} and the uniform confidence
event, every nonterminal episode \(k\) satisfies
\[
J_{n_k}^\star(s_{t_k})
\leq
\sum_{t\in\mathcal I_k}\overline U_k(x_t)
+
J_{n_{k+1}}^\star(s_{t_{k+1}})
+
4LSD.
\]
For the terminal episode,
\[
J_{n_K}^\star(s_{t_K})
\leq
\sum_{t\in\mathcal I_K}\overline U_K(x_t).
\]
\end{lemma}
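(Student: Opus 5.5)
The plan is to compare the true optimal continuation from \(s_{t_k}\) with the executed episode by splitting that continuation at the episode boundary. Both resulting pieces are then charged to the optimistic Bellman values \(J_{k,\cdot}\). The only slack comes from two applications of Lemma~\ref{lem:value_span_diameter}: one to the true reward and one to the clipped optimistic reward \(\overline U_k\).

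\textbf{Step 1 (optimism).} On the uniform confidence event, \(\theta^\star\) lies in the confidence ellipsoid used at \(t_k\), so \(\langle\widehat\theta_k,x\rangle+\beta_k\|x\|_{W_k^{-1}}\ge\langle\theta^\star,x\rangle\). Since \(|\langle\theta^\star,x\rangle|\le LS\), clipping to \([-LS,LS]\) preserves this inequality, so \(\overline U_k(x)\ge\langle\theta^\star,x\rangle\) for all \(x\in\mathcal X\). Moreover \(\rng(\overline U_k)\le 2LS\) and \(\rng(\langle\theta^\star,\cdot\rangle)\le 2LS\). Hence Lemma~\ref{lem:value_span_diameter} gives, for every \(n\ge1\),
\[
\Span_{\mathcal S_n}(J_{k,n})\le 2LSD,
\qquad
\Span_{\mathcal S_n}(J_n^\star)\le 2LSD .
\]

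\textbf{Step 2 (executed path is Bellman-consistent).} Throughout episode \(k\) the learner plays greedy actions with respect to the fixed value functions \(J_{k,\cdot}\), using remaining horizon \(T-t\). Unrolling the Bellman equation for \(J_{k,\cdot}\) over the \(\ell_k\) rounds of \(\mathcal I_k\) therefore gives
\[
J_{k,n_k}(s_{t_k})=\sum_{t\in\mathcal I_k}\overline U_k(x_t)+J_{k,n_{k+1}}(s_{t_{k+1}}).
\]
Here \(s_{t_{k+1}}\in\mathcal S_{n_{k+1}}\), because each greedy action enforces viability for the remaining horizon.

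\textbf{Step 3 (split the true optimum).} Take a (near-)optimal feasible \(n_k\)-step continuation from \(s_{t_k}\) under \(\theta^\star\), using Theorem~\ref{thm:bellman_optimality}. Let \(s''\) be its history after \(\ell_k\) steps, and let \(P\) be the true reward collected in those first \(\ell_k\) steps. Its tail is viable, so \(s''\in\mathcal S_{n_{k+1}}\), and
\[
J_{n_k}^\star(s_{t_k})\le P+J^\star_{n_{k+1}}(s'').
\]
By the span bound for \(J^\star_{n_{k+1}}\), the second term is at most \(J^\star_{n_{k+1}}(s_{t_{k+1}})+2LSD\).

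For the first term, let \(P_U\) be the \(\overline U_k\)-reward of the same \(\ell_k\) steps; optimism gives \(P\le P_U\). Because the prefix followed by any viable continuation from \(s''\) is feasible for the optimistic planning problem, \(J_{k,n_k}(s_{t_k})\ge P_U+J_{k,n_{k+1}}(s'')\). Combining this with Step 2 yields
\[
P_U\le\sum_{t\in\mathcal I_k}\overline U_k(x_t)+J_{k,n_{k+1}}(s_{t_{k+1}})-J_{k,n_{k+1}}(s'')\le\sum_{t\in\mathcal I_k}\overline U_k(x_t)+2LSD,
\]
where the last inequality is the span bound on \(J_{k,n_{k+1}}\). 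Adding the two bounds gives the claimed \(4LSD\) slack, after letting any \(\varepsilon\)-suboptimality of the chosen continuation go to zero.

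\textbf{Terminal episode.} Here \(n_{K+1}=0\) and \(J_0\equiv0\), so no span terms arise. Optimism together with Step 2 gives \(J^\star_{n_K}(s_{t_K})\le J_{K,n_K}(s_{t_K})=\sum_{t\in\mathcal I_K}\overline U_K(x_t)\).

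\textbf{Main obstacle.} The delicate point is avoiding a direct comparison of \(J_{k,n_{k+1}}\) with \(J^\star_{n_{k+1}}\) at the same state. That gap can grow linearly in the remaining horizon, so it cannot be absorbed into a \(D\)-dependent term. The splitting argument above avoids it: it only ever compares values of the same function at two viable states, and that is exactly what the span lemma controls.
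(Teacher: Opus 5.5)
Your proof is correct and follows essentially the same route as the paper. You split an optimal true continuation at the episode boundary, apply optimism and the Bellman optimality of the optimistic planner to the prefix-plus-tail, and absorb the two resulting value differences at $s''$ and $s_{t_{k+1}}$ using the span bounds on $J^\star_m$ and $J_{k,m}$ from Lemma~\ref{lem:value_span_diameter}. The differences are only cosmetic: you bound the two span terms separately rather than the span of $J^\star_m-J_{k,m}$, and you use an $\varepsilon$-optimal continuation where the paper assumes the maximum is attained.
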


\begin{proof}
\label{app:lem_episodewise_diameter_bound}
Fix a nonterminal episode \(k\) and write
\(
m:=n_{k+1}=n_k-\ell_k.
\)
Let \(s_k^\star\in\mathcal S_m\) be the history reached after \(\ell_k\) steps along an optimal \(n_k\)-step continuation from \(s_{t_k}\). On the uniform confidence event,
\(
\overline U_k(x)\geq \langle\theta^\star,x\rangle,
\)
and therefore
\[
J_{n_k}^\star(s_{t_k})
\leq
\sum_{j=0}^{\ell_k-1}
\overline U_k(x_{t_k+j}^\star)
+
J_m^\star(s_k^\star).
\]
Since the learner follows an exact Bellman-optimal policy for
\(\overline U_k\),
\[
J_{k,n_k}(s_{t_k})
=
\sum_{t\in\mathcal I_k}\overline U_k(x_t)
+
J_{k,m}(s_{t_{k+1}}),
\]
and Bellman optimality also gives
\[
\sum_{j=0}^{\ell_k-1}
\overline U_k(x_{t_k+j}^\star)
+
J_{k,m}(s_k^\star)
\leq
J_{k,n_k}(s_{t_k}).
\]
Combining these inequalities yields
\[
\begin{aligned}
J_{n_k}^\star(s_{t_k})
\leq\;&
\sum_{t\in\mathcal I_k}\overline U_k(x_t)
+
J_m^\star(s_{t_{k+1}})
+
\bigl(J_m^\star-J_{k,m}\bigr)(s_k^\star)
-
\bigl(J_m^\star-J_{k,m}\bigr)(s_{t_{k+1}}).
\end{aligned}
\]
Both histories belong to \(\mathcal S_m\). By
Lemma~\ref{lem:value_span_diameter},
\(
\Span_{\mathcal S_m}(J_m^\star)\leq 2LSD.
\)
Since \(-LS\leq\overline U_k(x)\leq LS\), the same lemma gives
\(
\Span_{\mathcal S_m}(J_{k,m})\leq 2LSD.
\)
Hence
\[
\Span_{\mathcal S_m}
\bigl(J_m^\star-J_{k,m}\bigr)
\leq 4LSD,
\]
which proves
\[
J_{n_k}^\star(s_{t_k})
\leq
\sum_{t\in\mathcal I_k}\overline U_k(x_t)
+
J_{n_{k+1}}^\star(s_{t_{k+1}})
+
4LSD.
\]
For the terminal episode, \(n_{K+1}=0\). Since the optimistic policy is
followed to the end of the horizon,
\[
J_{K,n_K}(s_{t_K})
=
\sum_{t\in\mathcal I_K}\overline U_K(x_t).
\]
Optimism gives
\(
J_{n_K}^\star(s_{t_K})
\leq
J_{K,n_K}(s_{t_K}),
\)
which proves the terminal bound.
\end{proof}

\begin{proof}[Proof of Theorem~\ref{thm:noncyclic_regret}]
Let
\(
r^\star(x):=\langle\theta^\star,x\rangle
\)
and define the pathwise pseudo-regret
\[
\mathfrak R_T^{\mathrm{off}}
:=
V_T^{\mathrm{off}}(\theta^\star)
-
\sum_{t=1}^T r^\star(x_t).
\]
For any \(\delta\in(0,1)\), define the uniform confidence event
\[
\mathcal G_\delta
:=
\left\{
\|\widehat\theta_t-\theta^\star\|_{V_t}
\leq
\beta_t(\delta)
\ \text{for all } t=1,\ldots,T+1
\right\}.
\]
By the confidence-ellipsoid result~\citep[Theorem~2]{abbasi2011improved}, which is based on a
self-normalized concentration inequality for vector-valued martingales,
\[
\mathbb P(\mathcal G_\delta)\geq 1-\delta.
\]
We first work on \(\mathcal G_\delta\). For \(w>1\), let
\(
n_1=T-w+1.
\)
If \(s_w^\star\) is the complete history obtained from the first \(w-1\) actions of an offline-optimal trajectory, then
\[
V_T^{\mathrm{off}}(\theta^\star)
=
\sum_{t=1}^{w-1}r^\star(x_t^\star)
+
J_{n_1}^\star(s_w^\star).
\]
Since \(|r^\star(x)|\leq LS\),
\[
\sum_{t=1}^{w-1}
\bigl(r^\star(x_t^\star)-r^\star(x_t)\bigr)
\leq
2LS(w-1).
\]
Moreover, \(s_w^\star,s_w\in\mathcal S_{n_1}\), so
Lemma~\ref{lem:value_span_diameter} gives
\[
J_{n_1}^\star(s_w^\star)-J_{n_1}^\star(s_w)
\leq2LSD.
\]
Therefore,
\[
\mathfrak R_T^{\mathrm{off}}
\leq
2LS(w-1)+2LSD
+
J_{n_1}^\star(s_w)
-
\sum_{t=w}^T r^\star(x_t).
\]
For \(w=1\), the same inequality holds without the initialization terms. Applying Lemma~\ref{lem:episodewise_diameter_bound} over all episodes
and telescoping the continuation values gives
\[
J_{n_1}^\star(s_{t_1})
\leq
\sum_{k=1}^K
\sum_{t\in\mathcal I_k}
\overline U_k(x_t)
+
4LSD(K-1).
\]
Hence
\[
\mathfrak R_T^{\mathrm{off}}
\leq
2LS(w-1)
+
4LSDK
+
\sum_{k=1}^K
\sum_{t\in\mathcal I_k}
\bigl(
\overline U_k(x_t)-r^\star(x_t)
\bigr).
\]
On \(\mathcal G_\delta\),
\(
\left|
\langle\widehat\theta_k-\theta^\star,x_t\rangle
\right|
\leq
\beta_k\|x_t\|_{W_k^{-1}},
\)
so clipping preserves optimism and
\[
0
\leq
\overline U_k(x_t)-r^\star(x_t)
\leq
2\beta_k\|x_t\|_{W_k^{-1}}.
\]
Moreover, throughout episode \(k\),
\[
W_k\preceq V_t,
\qquad
\det(V_t)\leq 2\det(W_k).
\]
Applying Lemma~\ref{lem:determinant-comparison} with
\(A=W_k\), \(B=V_t\), and \(c=2\), we obtain
\[
W_k^{-1}\preceq 2V_t^{-1}.
\]
Consequently,
\[
\|x_t\|_{W_k^{-1}}
\leq
\sqrt{2}\,\|x_t\|_{V_t^{-1}}.
\]
Using \(\beta_k\leq\beta_T(\delta)\),
\[
\sum_{k=1}^K
\sum_{t\in\mathcal I_k}
\bigl(
\overline U_k(x_t)-r^\star(x_t)
\bigr)
\leq
2\sqrt2\,\beta_T(\delta)
\sum_{t=1}^T
\|x_t\|_{V_t^{-1}}.
\]
Since \(\lambda\geq L^2\), applying
Lemma~\ref{lem:elliptical-potential} with \(u_t=x_t\) and \(N=T\) gives
\[
\sum_{t=1}^T
\|x_t\|_{V_t^{-1}}
\leq
\sqrt{
2Td
\log\!\left(
1+\frac{TL^2}{\lambda d}
\right)
}.
\]
Thus,
\[
\sum_{k=1}^K
\sum_{t\in\mathcal I_k}
\bigl(
\overline U_k(x_t)-r^\star(x_t)
\bigr)
\leq
4\beta_T(\delta)
\sqrt{
Td
\log\!\left(
1+\frac{TL^2}{\lambda d}
\right)
}.
\]
It remains to bound the number of episodes. Since \(K\) is the number of
determinant-doubling episodes, Lemma~\ref{lem:number-switches} gives
\[
K
\leq
1+
\left\lceil
d\log_2\!\left(
1+\frac{TL^2}{\lambda d}
\right)
\right\rceil.
\]
Combining the preceding bounds, on \(\mathcal G_\delta\),
\[
\begin{aligned}
\mathfrak R_T^{\mathrm{off}}
\leq\;&
4\beta_T(\delta)
\sqrt{
Td
\log\!\left(
1+\frac{TL^2}{\lambda d}
\right)
}
+
2LS(w-1)
+
4LSD
\left[
1+
\left\lceil
d\log_2\!\left(
1+\frac{TL^2}{\lambda d}
\right)
\right\rceil
\right].
\end{aligned}
\]
Thus,
\[
\mathfrak R_T^{\mathrm{off}}
=
\widetilde{\mathcal O}
\left(
d\sqrt T+dD+w
\right)
\]
with probability at least \(1-\delta\). Finally, choose \(\delta=1/T\). Since both the offline and realized cumulative rewards lie in \([-LST,LST]\),
\(
\mathfrak R_T^{\mathrm{off}}\leq2LST
\)
on every sample path. Therefore,
\[
\begin{aligned}
\operatorname{Reg}^{\mathrm{off}}_{\pi}(T;\theta^\star)
&=
\mathbb E\!\left[\mathfrak R_T^{\mathrm{off}}\right] \leq
\widetilde{\mathcal O}
\left(
d\sqrt T+dD+w
\right)
+
2LST\,\mathbb P(\mathcal G_{1/T}^c)
=
\widetilde{\mathcal O}
\left(
d\sqrt T+dD+w
\right),
\end{aligned}
\]
which proves the theorem.
\end{proof}

\subsection{Extension to Approximate Remaining-Horizon Planning}
\label{app:approximate_planning}

The analysis in Section~\ref{sec:online_learning_without_cyclic} assumes access to an exact Bellman-optimal remaining-horizon planner. We show that this assumption can be relaxed to additive approximate planning, at the cost of an explicit cumulative planning-error term.

\begin{assumption}
\label{assump:approximate_optimistic_planner}
At each planning time \(t_k\), the learner can compute a feasible remaining-horizon policy \(\pi_k\) satisfying
\[
J_{k,n_k}^{\pi_k}(s_{t_k})
\geq
J_{k,n_k}(s_{t_k})-\alpha_k,
\qquad
0\leq\alpha_k\leq\alpha_{\max}.
\]
When \(w>1\), the learner also has access to a feasible initialization.
\end{assumption}

Since $J_{k,n_k}(s_{t_k})$ is Bellman optimal, setting $\alpha_k=0$ recovers exact planning.

\begin{lemma}
\label{lem:approx_episodewise_diameter_bound}
Under Assumption~\ref{assump:history_diameter}, the uniform confidence event, and Assumption~\ref{assump:approximate_optimistic_planner}, every nonterminal episode \(k\) satisfies
\[
J_{n_k}^\star(s_{t_k})
\leq
\sum_{t\in\mathcal I_k}\overline U_k(x_t)
+
J_{n_{k+1}}^\star(s_{t_{k+1}})
+
4LSD
+
\alpha_k.
\]
For the terminal episode,
\[
J_{n_K}^\star(s_{t_K})
\leq
\sum_{t\in\mathcal I_K}\overline U_K(x_t)
+
\alpha_K.
\]
\end{lemma}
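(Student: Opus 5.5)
The plan is to repeat the proof of Lemma~\ref{lem:episodewise_diameter_bound}. The only change is that one Bellman identity becomes an inequality with slack \(\alpha_k\), and I will track where that slack enters.

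Fix a nonterminal episode \(k\) and set \(m:=n_{k+1}=n_k-\ell_k\). As before, let \(s_k^\star\in\mathcal S_m\) be the history reached after \(\ell_k\) steps along an optimal \(n_k\)-step continuation from \(s_{t_k}\) under the true reward. On the uniform confidence event, optimism \(\overline U_k\geq r^\star\) gives
\[
J_{n_k}^\star(s_{t_k})\leq\sum_{j=0}^{\ell_k-1}\overline U_k(x^\star_{t_k+j})+J_m^\star(s_k^\star).
\]
Bellman optimality of \(J_{k,\cdot}\) then gives
\[
\sum_{j=0}^{\ell_k-1}\overline U_k(x^\star_{t_k+j})+J_{k,m}(s_k^\star)\leq J_{k,n_k}(s_{t_k}).
\]
Neither step involves the learner's policy, so both carry over unchanged.

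The step that changes is the decomposition of the learner's value. The policy \(\pi_k\) is only \(\alpha_k\)-optimal, so I would use Assumption~\ref{assump:approximate_optimistic_planner} to write
\[
J_{k,n_k}(s_{t_k})-\alpha_k\leq J^{\pi_k}_{k,n_k}(s_{t_k})=\sum_{t\in\mathcal I_k}\overline U_k(x_t)+J^{\pi_k}_{k,m}(s_{t_{k+1}}).
\]
The equality holds because \(\pi_k\) is deterministic on the known dynamics. Since \(\pi_k\)'s continuation is feasible, its value is at most the optimal one, so \(J^{\pi_k}_{k,m}(s_{t_{k+1}})\leq J_{k,m}(s_{t_{k+1}})\). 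Combining these gives
\[
J_{k,n_k}(s_{t_k})\leq\sum_{t\in\mathcal I_k}\overline U_k(x_t)+J_{k,m}(s_{t_{k+1}})+\alpha_k.
\]
This is the exact-planning identity plus \(\alpha_k\).

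Chaining the three displays reproduces the earlier inequality with an extra \(\alpha_k\). The term \((J_m^\star-J_{k,m})(s_k^\star)-(J_m^\star-J_{k,m})(s_{t_{k+1}})\) is controlled exactly as before. Both \(s_k^\star\) and \(s_{t_{k+1}}\) lie in \(\mathcal S_m\); the feasibility of \(\pi_k\) ensures \(s_{t_{k+1}}\in\mathcal S_m\). Lemma~\ref{lem:value_span_diameter} then bounds each span by \(2LSD\), using \(\rng(r^\star)\leq 2LS\) and \(\rng(\overline U_k)\leq 2LS\), which totals \(4LSD\).

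For the terminal episode, \(\pi_K\) is followed to the end, so \(J^{\pi_K}_{K,n_K}(s_{t_K})=\sum_{t\in\mathcal I_K}\overline U_K(x_t)\). Then optimism and Assumption~\ref{assump:approximate_optimistic_planner} give
\[
J^\star_{n_K}(s_{t_K})\leq J_{K,n_K}(s_{t_K})\leq\sum_{t\in\mathcal I_K}\overline U_K(x_t)+\alpha_K.
\]
The main obstacle is the middle step, namely replacing the exact Bellman identity at the episode boundary by an inequality. This relies on two facts: the approximate guarantee applies to the whole remaining horizon from \(t_k\), and the tail value of \(\pi_k\) from \(s_{t_{k+1}}\) is dominated by \(J_{k,m}(s_{t_{k+1}})\). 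The second fact holds simply because \(J_{k,m}\) is a maximum over all feasible continuations.
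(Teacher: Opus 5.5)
Your proposal is correct and follows the paper's proof essentially step for step: you replace the exact Bellman identity by the \(\alpha_k\)-approximate guarantee combined with \(J^{\pi_k}_{k,m}(s_{t_{k+1}})\le J_{k,m}(s_{t_{k+1}})\), reuse the unchanged optimism and Bellman comparison together with the \(4LSD\) span bound from Lemma~\ref{lem:value_span_diameter}, and handle the terminal episode via \(J^\star_{n_K}(s_{t_K})\le J_{K,n_K}(s_{t_K})\le J^{\pi_K}_{K,n_K}(s_{t_K})+\alpha_K\). Your explicit remark that feasibility of \(\pi_k\) places \(s_{t_{k+1}}\) in \(\mathcal S_m\) makes precise a point the paper only asserts ``as before.''
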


\begin{proof}
The proof follows that of Lemma~\ref{lem:episodewise_diameter_bound}, with the exact Bellman identity replaced by the additive approximation guarantee. Fix a nonterminal episode \(k\) and let
\(
m:=n_{k+1}=n_k-\ell_k.
\)
By Assumption~\ref{assump:approximate_optimistic_planner},
\[
J_{k,n_k}(s_{t_k})
\leq
J_{k,n_k}^{\pi_k}(s_{t_k})+\alpha_k.
\]
Since the learner follows \(\pi_k\) during episode \(k\),
\[
J_{k,n_k}^{\pi_k}(s_{t_k})
=
\sum_{t\in\mathcal I_k}\overline U_k(x_t)
+
J_{k,m}^{\pi_k}(s_{t_{k+1}}),
\]
and optimality of \(J_{k,m}\) gives
\[
J_{k,m}^{\pi_k}(s_{t_{k+1}})
\leq
J_{k,m}(s_{t_{k+1}}).
\]
Hence
\[
J_{k,n_k}(s_{t_k})
\leq
\sum_{t\in\mathcal I_k}\overline U_k(x_t)
+
J_{k,m}(s_{t_{k+1}})
+
\alpha_k.
\]
Substituting this inequality into the same optimism and Bellman comparison used in the proof of Lemma~\ref{lem:episodewise_diameter_bound} yields
\[
\begin{aligned}
J_{n_k}^\star(s_{t_k})
\leq\;&
\sum_{t\in\mathcal I_k}\overline U_k(x_t)
+
J_m^\star(s_{t_{k+1}})
+
\bigl(J_m^\star-J_{k,m}\bigr)(s_k^\star)
-
\bigl(J_m^\star-J_{k,m}\bigr)(s_{t_{k+1}})
+
\alpha_k.
\end{aligned}
\]
As before, \(s_k^\star,s_{t_{k+1}}\in\mathcal S_m\), and Lemma~\ref{lem:value_span_diameter} gives
\[
\Span_{\mathcal S_m}(J_m^\star)\leq2LSD,
\qquad
\Span_{\mathcal S_m}(J_{k,m})\leq2LSD.
\]
Therefore,
\[
\Span_{\mathcal S_m}
\bigl(J_m^\star-J_{k,m}\bigr)
\leq4LSD,
\]
and thus
\[
J_{n_k}^\star(s_{t_k})
\leq
\sum_{t\in\mathcal I_k}\overline U_k(x_t)
+
J_{n_{k+1}}^\star(s_{t_{k+1}})
+
4LSD+\alpha_k.
\]
For the terminal episode,
\[
J_{n_K}^\star(s_{t_K})
\leq
J_{K,n_K}(s_{t_K})
\leq
J_{K,n_K}^{\pi_K}(s_{t_K})+\alpha_K
=
\sum_{t\in\mathcal I_K}\overline U_K(x_t)+\alpha_K,
\]
which proves the claim.
\end{proof}

\begin{theorem}
\label{thm:noncyclic_regret_approximate}
Suppose the online linear-bandit assumptions of Section~\ref{sec:problem_formulation} hold, with \(\|x\|_2\leq L\), \(\|\theta^\star\|_2\leq S\), and \(\lambda\geq L^2\). Under Assumptions~\ref{assump:history_diameter} and~\ref{assump:approximate_optimistic_planner},
the rare-update optimistic planning policy satisfies
\[
\operatorname{Reg}^{\mathrm{off}}_{\pi}(T;\theta^\star)
=
\widetilde{\mathcal O}\!\left(
d\sqrt{T}+dD+w+d\alpha_{\max}
\right).
\]
\end{theorem}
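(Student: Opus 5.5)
The argument will repeat the proof of Theorem~\ref{thm:noncyclic_regret} line for line, substituting Lemma~\ref{lem:approx_episodewise_diameter_bound} for Lemma~\ref{lem:episodewise_diameter_bound}. First fix $\delta\in(0,1)$ and work on the uniform confidence event $\mathcal G_\delta$. Its probability is at least $1-\delta$ by \citet[Theorem~2]{abbasi2011improved}. The argument is unchanged because the estimator uses every observation, and the planner's choices are $\mathcal F_{t-1}$-measurable, so approximate planning does not affect the martingale structure. Feasibility also still holds: the policies $\pi_k$ are feasible remaining-horizon policies started from viable histories, and the initialization is feasible. On $\mathcal G_\delta$ the initialization step carries over verbatim. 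Comparing the first $w-1$ actions with those of an offline-optimal trajectory, and applying Lemma~\ref{lem:value_span_diameter} to $s_w^\star,s_w\in\mathcal S_{n_1}$, gives
\[
\mathfrak R_T^{\mathrm{off}}\leq 2LS(w-1)+2LSD+J_{n_1}^\star(s_w)-\sum_{t=w}^T r^\star(x_t).
\]
Here one should check that $s_w\in\mathcal S_{n_1}$, which requires the initialization oracle to return a prefix of a valid full trajectory, as assumed.

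Next, chain Lemma~\ref{lem:approx_episodewise_diameter_bound} over episodes $k=1,\ldots,K$ and telescope the terms $J^\star_{n_{k+1}}(s_{t_{k+1}})$. This gives
\[
J_{n_1}^\star(s_{t_1})\leq\sum_{k=1}^K\sum_{t\in\mathcal I_k}\overline U_k(x_t)+4LSD(K-1)+\sum_{k=1}^K\alpha_k .
\]
The planning-error total is at most $K\alpha_{\max}$. The optimism gap $\sum_k\sum_{t\in\mathcal I_k}(\overline U_k(x_t)-r^\star(x_t))$ is bounded exactly as before. Clipping preserves optimism, and we have $0\leq \overline U_k-r^\star\leq 2\beta_k\|x_t\|_{W_k^{-1}}$. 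Lemma~\ref{lem:determinant-comparison} with $c=2$ converts $W_k^{-1}$ to $2V_t^{-1}$, and Lemma~\ref{lem:elliptical-potential} then yields $4\beta_T(\delta)\sqrt{Td\log(1+TL^2/(\lambda d))}$. The point to record here is that this chain used only the realized actions $x_t$, never their optimality, so approximate planning does not affect it.

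Finally, Lemma~\ref{lem:number-switches} bounds $K\leq 1+\lceil d\log_2(1+TL^2/(\lambda d))\rceil=\widetilde{\mathcal O}(d)$. Hence both $4LSDK$ and $K\alpha_{\max}$ are $\widetilde{\mathcal O}(dD)$ and $\widetilde{\mathcal O}(d\alpha_{\max})$, respectively. Set $\delta=1/T$ and use the pathwise bound $\mathfrak R_T^{\mathrm{off}}\leq 2LST$ on $\mathcal G_\delta^c$. This adds only $2LS$, which gives the stated $\widetilde{\mathcal O}(d\sqrt T+dD+w+d\alpha_{\max})$.

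The only step needing care, and the one I expect to be the main obstacle, is the terminal episode combined with the definition of $J^{\pi_k}_{k,n_k}$. The approximate guarantee holds only at the planning time $t_k$ of each episode, and an episode ends at a data-dependent determinant-doubling time. One must therefore check that the decomposition $J^{\pi_k}_{k,n_k}(s_{t_k})=\sum_{t\in\mathcal I_k}\overline U_k(x_t)+J^{\pi_k}_{k,m}(s_{t_{k+1}})$ holds along the realized path. It does, since $\pi_k$ is deterministic given $\mathcal F_{t_k-1}$ and the dynamics are deterministic. Lemma~\ref{lem:approx_episodewise_diameter_bound} has already packaged this, so the rest is bookkeeping.
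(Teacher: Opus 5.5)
Your proposal is correct and follows the paper's proof essentially step for step: you telescope Lemma~\ref{lem:approx_episodewise_diameter_bound} over the episodes, combine it with the initialization bound from Theorem~\ref{thm:noncyclic_regret}, control the optimism gap via Lemmas~\ref{lem:determinant-comparison} and~\ref{lem:elliptical-potential}, use $K=\widetilde{\mathcal O}(d)$ to get $4LSDK=\widetilde{\mathcal O}(dD)$ and $\sum_k\alpha_k\le K\alpha_{\max}=\widetilde{\mathcal O}(d\alpha_{\max})$, and finish with $\delta=1/T$. The extra points you check (viability of $s_w$ and $s_{t_{k+1}}$, and the along-path decomposition of $J^{\pi_k}_{k,n_k}$) are the same facts the paper relies on in its initialization step and in the proof of Lemma~\ref{lem:approx_episodewise_diameter_bound}.
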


\begin{proof}
Applying Lemma~\ref{lem:approx_episodewise_diameter_bound} over all episodes and telescoping the continuation values yields
\[
J_{n_1}^\star(s_{t_1})
\leq
\sum_{k=1}^{K}\sum_{t\in\mathcal I_k}\overline U_k(x_t)
+
4LSD(K-1)
+
\sum_{k=1}^{K}\alpha_k.
\]
Combining this with the initialization bound from the proof of Theorem~\ref{thm:noncyclic_regret}, on the uniform confidence event,
\[
\mathfrak R_T^{\mathrm{off}}
\leq
2LS(w-1)
+
4LSDK
+
\sum_{k=1}^{K}\sum_{t\in\mathcal I_k}
\bigl(\overline U_k(x_t)-r^\star(x_t)\bigr)
+
\sum_{k=1}^{K}\alpha_k.
\]
The confidence-width and elliptical-potential bounds give
\[
\sum_{k=1}^{K}\sum_{t\in\mathcal I_k}
\bigl(\overline U_k(x_t)-r^\star(x_t)\bigr)
=
\widetilde{\mathcal O}(d\sqrt T).
\]
Moreover, Lemma~\ref{lem:number-switches} gives \(K=\widetilde{\mathcal O}(d)\). Hence
\[
4LSDK=\widetilde{\mathcal O}(dD),
\qquad
\sum_{k=1}^{K}\alpha_k
\leq
K\alpha_{\max}
=
\widetilde{\mathcal O}(d\alpha_{\max}).
\]
Therefore,
\[
\mathfrak R_T^{\mathrm{off}}
=
\widetilde{\mathcal O}
\left(
d\sqrt T+dD+w+d\alpha_{\max}
\right)
\]
with probability at least \(1-\delta\). Choosing \(\delta=1/T\) and proceeding as in the final step of Theorem~\ref{thm:noncyclic_regret} gives
\[
\operatorname{Reg}^{\mathrm{off}}_{\pi}(T;\theta^\star)
=
\widetilde{\mathcal O}
\left(
d\sqrt T+dD+w+d\alpha_{\max}
\right).
\]
\end{proof}

\begin{remark}
The effect of approximate planning enters only through the cumulative planning error. Since \(K=\widetilde{\mathcal O}(d)\),
\[
\sum_{k=1}^{K}\alpha_k
\leq
K\alpha_{\max}
=
\widetilde{\mathcal O}(d\alpha_{\max}).
\]
Thus, bounded per-episode planning error degrades the regret gracefully. In particular, if \(\alpha_{\max}=\mathcal O(1)\), the term \(d\alpha_{\max}\) is absorbed by \(d\sqrt T\), so the approximate planner achieves the same asymptotic regret rate as the exact planner. Setting \(\alpha_{\max}=0\) recovers exact planning.
\end{remark}

\section{Synthetic Experiments}
\label{app:synthetic_experiments}

In this section, we conduct three experiments to evaluate our theoretical results. Specifically, we ask: \textbf{(Q1)} Does the stationary reduction recover the exact offline optimum when the horizon aligns with the window length, and is the resulting \(\mathcal O(w)\) gap tight when the horizon is misaligned? \textbf{(Q2)} Does rare switching reduce the feasibility cost of moving between stationary actions when direct switches are not possible? \textbf{(Q3)} Beyond cyclic symmetry, can finite-memory optimistic planning effectively exploit recent history while preserving exact sliding-window feasibility? Additional ablation and sensitivity studies are provided in Appendix~\ref{app:synthetic_ablation_studies}.

\subsection{Offline Geometry and Horizon Misalignment}
\label{app:exp-offline-geometry}

\begin{figure}[t!]
    \centering
    \includegraphics[width=0.90\linewidth]{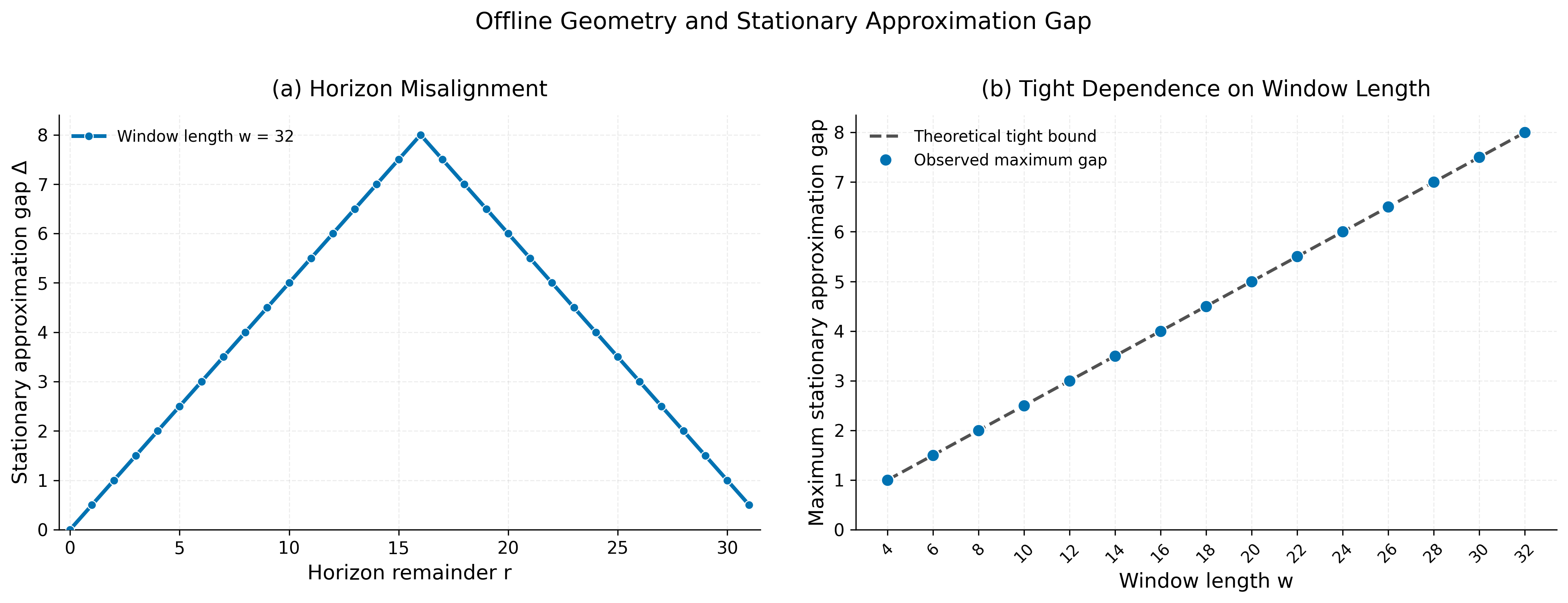}
    \caption{Offline geometry and horizon misalignment. \textbf{Left:} stationary approximation gap versus horizon remainder for $w=32$. \textbf{Right:} maximum gap versus window length; markers show the LP solutions and the dashed line shows the tight value $w/4$.}
    \label{fig:offline-geometry}
\end{figure}

We address Q1 by testing the stationary-reduction result of Theorem~\ref{thm:stationary_near_optimality}. We consider
\[
\mathcal X=[0,1],\qquad
\mathcal C_w=
\left\{
(x_1,\ldots,x_w)\in[0,1]^w:
\sum_{i=1}^w x_i\le \frac{w}{2}
\right\},
\]
with linear reward $f(x)=x$. The stationary feasible set is $\mathcal Z=[0,1/2]$, so the optimal stationary action is $z^\star=1/2$. We use even window lengths $w\in\{4,6,8,\ldots,32\}$ and horizons
\[
T=20w+r,\qquad r\in\{0,\ldots,w-1\}.
\]
For each $(w,r)$, we compute the exact offline optimum $V_T$ by linear programming and measure the stationary approximation gap
\(
\Delta_T=V_T-Tf(z^\star).
\)
Figure~\ref{fig:offline-geometry} summarizes the results. In the left panel, we fix $w=32$ and vary the remainder $r=T\bmod w$. The gap is zero when $r=0$, increases as the horizon becomes misaligned with the window, reaches its maximum at $r=w/2$, and then decreases symmetrically as $r$ approaches $w$. The right panel shows the maximum gap over $r$ for different window lengths. For every tested even $w$, the LP solution gives
\(
\max_r\Delta_T=\frac{w}{4},
\)
with the maximum attained at $r=w/2$. This exactly matches the tightness construction in Appendix~\ref{app:tightness-offline-gap}, where
\(
V_T-Tf(z^\star)=\frac{w}{4}
\)
for horizons of the form $T=qw+w/2$.

\subsection{Rare Switching under Cyclic Ramping Constraints}
\label{app:exp-rare-switching}

\begin{figure}[t!]
    \centering
    \includegraphics[width=0.85\linewidth]{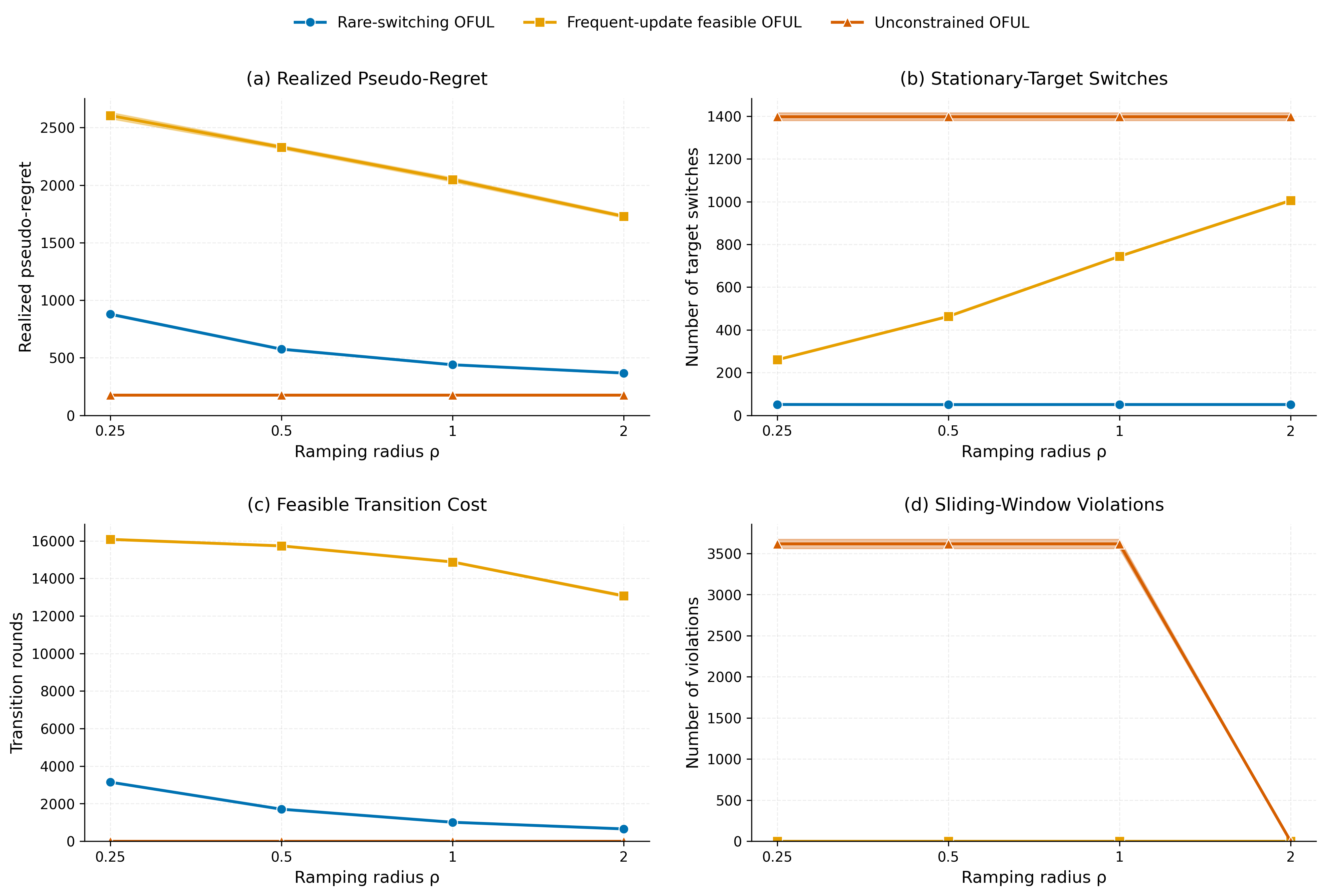}
    \caption{Rare switching under cyclic ramping constraints. \textbf{(a)} Pseudo-regret, \textbf{(b)} stationary-target switches, \textbf{(c)} transition rounds, and \textbf{(d)} constraint violations for Rare-switching OFUL, Frequent-update feasible OFUL, and Unconstrained OFUL across $\rho$. Curves show means over 30 runs with 95\% confidence intervals.}
    \label{fig:rare-switching}
\end{figure}

We address Q2 by examining the cost of maintaining feasibility when transitions between stationary actions cannot be made instantaneously. We consider the simplex action space $\mathcal X=\Delta_d$ with $d=8$ and cyclic ramping constraints
\[
\mathcal C_\rho=
\left\{
(x_1,\ldots,x_w)\in\mathcal X^w:
\max_{1\le i\le w}\|x_{i+1}-x_i\|_1\le\rho,\;
x_{w+1}=x_1
\right\},
\]
with window length \(w=8\). Every stationary action is feasible, so $\mathcal Z=\Delta_d$. We use
\(
\theta^\star=(0.55,0.60,0.65,0.70,0.75,0.80,0.90,1.00)^\top
\)
and observations
\[
Y_t=\langle\theta^\star,x_t\rangle+\eta_t,\qquad
\eta_t\sim\mathcal N(0,0.1^2).
\]
We fix $T=16384$ and vary the ramping radius over
\(
\rho\in\{0.25,0.5,1,2\}.
\)
Since $w\mid T$, the stationary and offline benchmarks coincide. We compare three policies. \emph{Rare-switching OFUL} follows Algorithm~\ref{alg:rare-switching-oful} and updates its optimistic stationary target only when the determinant-based episode condition is triggered. \emph{Frequent-update feasible OFUL} uses the same confidence sets, optimistic-action oracle, and transition mechanism, but recomputes its target after every retained stationary observation. \emph{Unconstrained OFUL}~\citep{abbasi2011improved} ignores the ramping constraint and switches directly to each newly selected optimistic action. For each policy, we report the realized pseudo-regret
\[
\widehat R_T=
\sum_{t=1}^{T}
\langle\theta^\star,z^\star-x_t\rangle,
\]
the number of stationary-target switches $J_T$, the total number of transition rounds, and the number of sliding-window violations
\[
N_{\rm viol}=
\sum_{t=w}^{T}
\mathbf 1\left\{
(x_{t-w+1},\ldots,x_t)\notin\mathcal C_\rho
\right\}.
\]

Figure~\ref{fig:rare-switching}(a) shows that Rare-switching OFUL achieves substantially lower pseudo-regret than Frequent-update feasible OFUL across all tested values of $\rho$. The reason is reflected in Figure~\ref{fig:rare-switching}(b): Rare-switching OFUL performs far fewer stationary-target switches, which in turn leads to many fewer transition rounds, as shown in Figure~\ref{fig:rare-switching}(c). This benefit is especially pronounced for small $\rho$, where moving between targets requires longer feasible transition paths. Figure~\ref{fig:rare-switching}(d) shows that both feasible methods maintain zero sliding-window violations for all tested values of $\rho$, whereas Unconstrained OFUL violates the constraint whenever direct switching is infeasible. At $\rho=2$, Unconstrained OFUL also satisfies the constraint because the $\ell_1$ diameter of the simplex is at most $2$.

%%%%%%%%%%%%%%%%%%%%%%%%%%%%%%%%%%%%%%%%%%%%%%%%%%%%%%%%%%%%%%%%%%%%%%%%%%%%%%%%%%%%%%%%%%%%%%%%%%%% 
%%%%%%%%%%%%%%%%%%%%%%%%%%%%%%%%%%%%%%%%%%%%%%%%%%%%%%%%%%%%%%%%%

\subsection{Planning Beyond Cyclic Symmetry}
\label{app:exp-planning}

\begin{figure}[t!]
    \centering
    \includegraphics[width=0.85\linewidth]{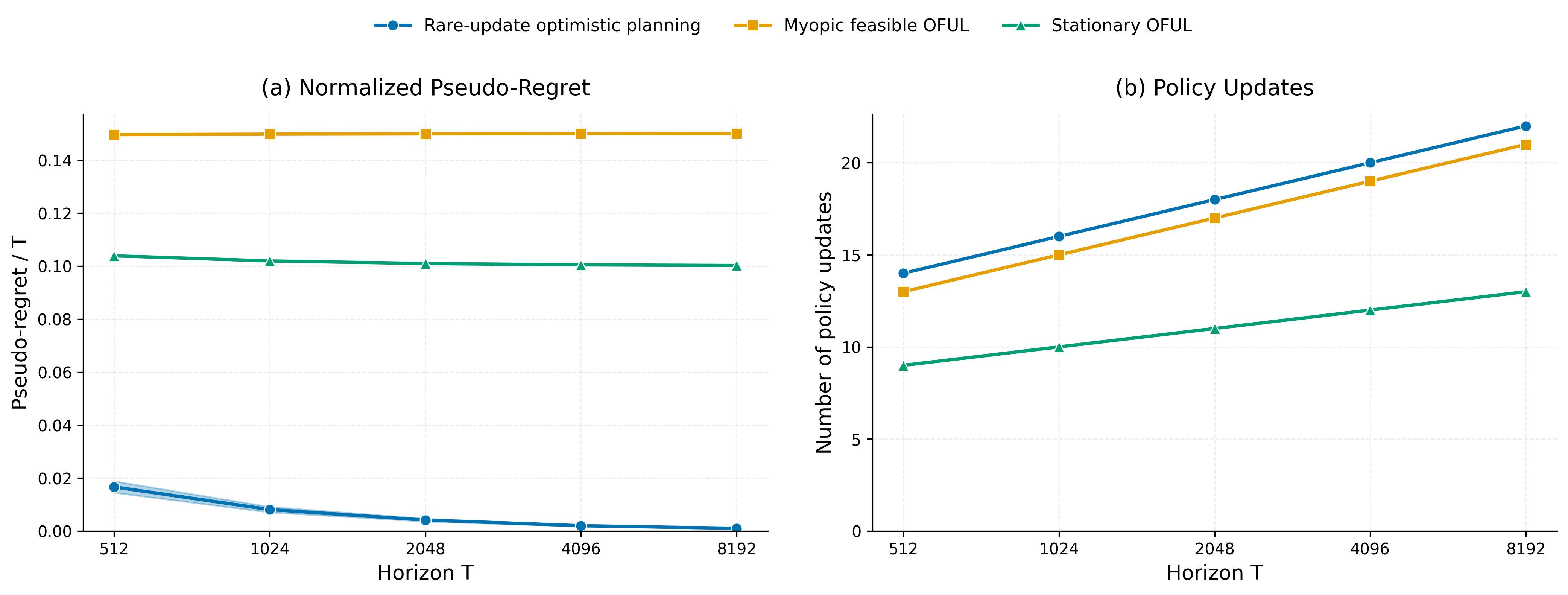}
    \caption{Planning beyond cyclic symmetry. \textbf{Left:} normalized pseudo-regret relative to the exact offline optimum. \textbf{Right:} number of policy updates. Curves show means over 30 runs with 95\% confidence intervals.}
    \label{fig:noncyclic-planning}
\end{figure}

We address Q3 with a non-cyclic constraint for which current actions affect future feasible choices and stationary behavior is suboptimal. Let
\[
\mathcal X=\{0,e_1,e_2\}\subset\mathbb R^2,
\qquad
c=(1,3)^\top,
\qquad
w=4,
\]
and define
\[
\mathcal C
=
\left\{
(x_1,x_2,x_3,x_4)\in\mathcal X^4:
4c^\top x_1+2c^\top x_2+c^\top x_3+c^\top x_4\leq14
\right\}.
\]
Because the coefficients depend on position within the window, \(\mathcal C\) is not invariant under cyclic shifts. Hence feasibility at round \(t\) depends on the history state \(s_t=(x_{t-3},x_{t-2},x_{t-1})\). We use
\[
\theta^\star=(0.6,1.6)^\top,
\qquad
Y_t=\langle\theta^\star,x_t\rangle+\eta_t,
\qquad
\eta_t\sim\mathcal N(0,0.1^2).
\]
The stationary feasible set is \(\mathcal Z=\{0,e_1\}\), so the best stationary reward is \(0.6\) per round. In contrast, the periodic trajectory
\( (e_1,e_2,0,e_1,e_1,e_2,0,e_1,\ldots) \)
is feasible and has average reward
\[
\frac{0.6+1.6+0+0.6}{4}=0.7.
\]
Thus, exploiting non-stationary feasible behavior requires planning over the history state. We compare three feasible policies. \emph{Rare-Update Optimistic Planning} follows Algorithm~\ref{alg:noncyclic-rare-switching} and optimizes the frozen optimistic reward over the full remaining horizon. \emph{Myopic Feasible OFUL} uses the same estimator and confidence sets, but selects the current feasible action without accounting for future rewards. \emph{Stationary OFUL} restricts its actions to the stationary feasible set \(\mathcal Z\). Figure~\ref{fig:noncyclic-planning} shows a clear benefit from remaining-horizon planning. The normalized pseudo-regret of rare-update optimistic planning decreases rapidly with \(T\) and approaches zero, indicating that the learner approaches the offline-optimal non-stationary behavior. In contrast, Stationary OFUL approaches a normalized regret of approximately \(0.1\), matching the gap between the optimal long-run reward \(0.7\) and the best stationary reward \(0.6\). Myopic feasible OFUL performs even worse, with a persistent normalized-regret gap of approximately \(0.15\), showing that immediate feasibility and optimism alone do not account for the effect of current actions on future feasible rewards. The right panel shows that these gains are achieved with few policy updates: the number of replanning events grows slowly with the horizon, consistent with the determinant-doubling update rule. Further details of this experiment are provided in Appendix~\ref{app:implementation_details}.
\section{Ablation and Sensitivity Studies}
\label{app:ablation}

We conduct ablation and sensitivity studies on both synthetic and real-world tasks to analyze the robustness of our method and understand the impact of key design choices.

\subsection{Real-World Ablation Studies}
\label{app:real-world-ablation}

\paragraph{Sensitivity to Budget and Horizon.}
We evaluate the routing policies across sliding-window budgets
\(B \in \{0.15, 0.25, 0.35\}\)
and horizons
\(T \in \{500, 2500, 5000\}\). Table~\ref{tab:llm_budget_sensitivity} shows that Rare-Update Optimistic Planning maintains competitive reward with zero violations across all settings while requiring substantially fewer policy updates. As the budget becomes tighter, constraint violations increase for Unconstrained OFUL and Primal-Dual OFUL.

\begin{table*}[t]
\centering
\caption{LLM-routing performance across different sliding-window budgets. Average reward and normalized pseudo-regret are reported as mean $\pm$ 95\% confidence interval over 10 independent simulations for each configuration; violations and updates are reported as means.}
\label{tab:llm_budget_sensitivity}
\small
\setlength{\tabcolsep}{5pt}
\renewcommand{\arraystretch}{1.1}
\begin{tabular}{cclcccc}
\toprule
Budget & $T$ & Method & Avg. Reward & Norm. Pseudo-Regret & Violations & Updates \\
\midrule

$0.15$ & $500$ & Rare-Update Optimistic Planning
& $0.7618 \pm 0.0116$
& $0.1566 \pm 0.0132$
& 0.0
& 249.8 \\

& & Myopic Feasible OFUL
& $0.7583 \pm 0.0120$
& $0.1601 \pm 0.0108$
& 0.0
& 500.0 \\

& & Unconstrained OFUL
& $0.7713 \pm 0.0167$
& $0.1471 \pm 0.0156$
& 104.5
& 500.0 \\

& & Primal-Dual OFUL
& $0.7686 \pm 0.0161$
& $0.1498 \pm 0.0136$
& 112.7
& 500.0 \\

\cmidrule(lr){2-7}

& $2500$ & Rare-Update Optimistic Planning
& $0.7699 \pm 0.0039$
& $0.1509 \pm 0.0041$
& 0.0
& 789.1 \\

& & Myopic Feasible OFUL
& $0.7682 \pm 0.0048$
& $0.1525 \pm 0.0049$
& 0.0
& 2500.0 \\

& & Unconstrained OFUL
& $0.7780 \pm 0.0059$
& $0.1427 \pm 0.0064$
& 527.2
& 2500.0 \\

& & Primal-Dual OFUL
& $0.7760 \pm 0.0065$
& $0.1447 \pm 0.0062$
& 529.1
& 2500.0 \\

\cmidrule(lr){2-7}

& $5000$ & Rare-Update Optimistic Planning
& $0.7691 \pm 0.0035$
& $0.1518 \pm 0.0034$
& 0.0
& 1105.3 \\

& & Myopic Feasible OFUL
& $0.7706 \pm 0.0030$
& $0.1503 \pm 0.0030$
& 0.0
& 5000.0 \\

& & Unconstrained OFUL
& $0.7769 \pm 0.0036$
& $0.1441 \pm 0.0038$
& 1088.6
& 5000.0 \\

& & Primal-Dual OFUL
& $0.7786 \pm 0.0040$
& $0.1424 \pm 0.0042$
& 1115.7
& 5000.0 \\

\midrule

$0.25$ & $500$ & Rare-Update Optimistic Planning
& $0.7670 \pm 0.0122$
& $0.1595 \pm 0.0103$
& 0.0
& 250.1 \\

& & Myopic Feasible OFUL
& $0.7722 \pm 0.0150$
& $0.1543 \pm 0.0117$
& 0.0
& 500.0 \\

& & Unconstrained OFUL
& $0.7713 \pm 0.0167$
& $0.1552 \pm 0.0153$
& 34.3
& 500.0 \\

& & Primal-Dual OFUL
& $0.7724 \pm 0.0165$
& $0.1541 \pm 0.0141$
& 35.0
& 500.0 \\

\cmidrule(lr){2-7}

& $2500$ & Rare-Update Optimistic Planning
& $0.7741 \pm 0.0063$
& $0.1546 \pm 0.0054$
& 0.0
& 787.5 \\

& & Myopic Feasible OFUL
& $0.7730 \pm 0.0037$
& $0.1558 \pm 0.0041$
& 0.0
& 2500.0 \\

& & Unconstrained OFUL
& $0.7780 \pm 0.0059$
& $0.1507 \pm 0.0065$
& 189.5
& 2500.0 \\

& & Primal-Dual OFUL
& $0.7785 \pm 0.0058$
& $0.1503 \pm 0.0065$
& 189.9
& 2500.0 \\

\cmidrule(lr){2-7}

& $5000$ & Rare-Update Optimistic Planning
& $0.7766 \pm 0.0022$
& $0.1526 \pm 0.0022$
& 0.0
& 1104.4 \\

& & Myopic Feasible OFUL
& $0.7768 \pm 0.0023$
& $0.1524 \pm 0.0023$
& 0.0
& 5000.0 \\

& & Unconstrained OFUL
& $0.7769 \pm 0.0036$
& $0.1523 \pm 0.0036$
& 410.8
& 5000.0 \\

& & Primal-Dual OFUL
& $0.7781 \pm 0.0038$
& $0.1511 \pm 0.0038$
& 424.4
& 5000.0 \\

\midrule

$0.35$ & $500$ & Rare-Update Optimistic Planning
& $0.7626 \pm 0.0114$
& $0.1670 \pm 0.0116$
& 0.0
& 250.0 \\

& & Myopic Feasible OFUL
& $0.7724 \pm 0.0138$
& $0.1572 \pm 0.0119$
& 0.0
& 500.0 \\

& & Unconstrained OFUL
& $0.7713 \pm 0.0167$
& $0.1583 \pm 0.0160$
& 9.7
& 500.0 \\

& & Primal-Dual OFUL
& $0.7713 \pm 0.0167$
& $0.1583 \pm 0.0160$
& 9.7
& 500.0 \\

\cmidrule(lr){2-7}

& $2500$ & Rare-Update Optimistic Planning
& $0.7786 \pm 0.0051$
& $0.1531 \pm 0.0047$
& 0.0
& 789.8 \\

& & Myopic Feasible OFUL
& $0.7767 \pm 0.0052$
& $0.1550 \pm 0.0046$
& 0.0
& 2500.0 \\

& & Unconstrained OFUL
& $0.7780 \pm 0.0059$
& $0.1537 \pm 0.0061$
& 64.2
& 2500.0 \\

& & Primal-Dual OFUL
& $0.7783 \pm 0.0061$
& $0.1534 \pm 0.0062$
& 63.6
& 2500.0 \\

\cmidrule(lr){2-7}

& $5000$ & Rare-Update Optimistic Planning
& $0.7784 \pm 0.0024$
& $0.1539 \pm 0.0024$
& 0.0
& 1103.8 \\

& & Myopic Feasible OFUL
& $0.7789 \pm 0.0037$
& $0.1533 \pm 0.0037$
& 0.0
& 5000.0 \\

& & Unconstrained OFUL
& $0.7769 \pm 0.0036$
& $0.1554 \pm 0.0036$
& 124.8
& 5000.0 \\

& & Primal-Dual OFUL
& $0.7769 \pm 0.0036$
& $0.1554 \pm 0.0036$
& 124.8
& 5000.0 \\

\bottomrule
\end{tabular}
\end{table*}

\begin{figure}[t!]
    \centering
    \includegraphics[width=0.85\linewidth]{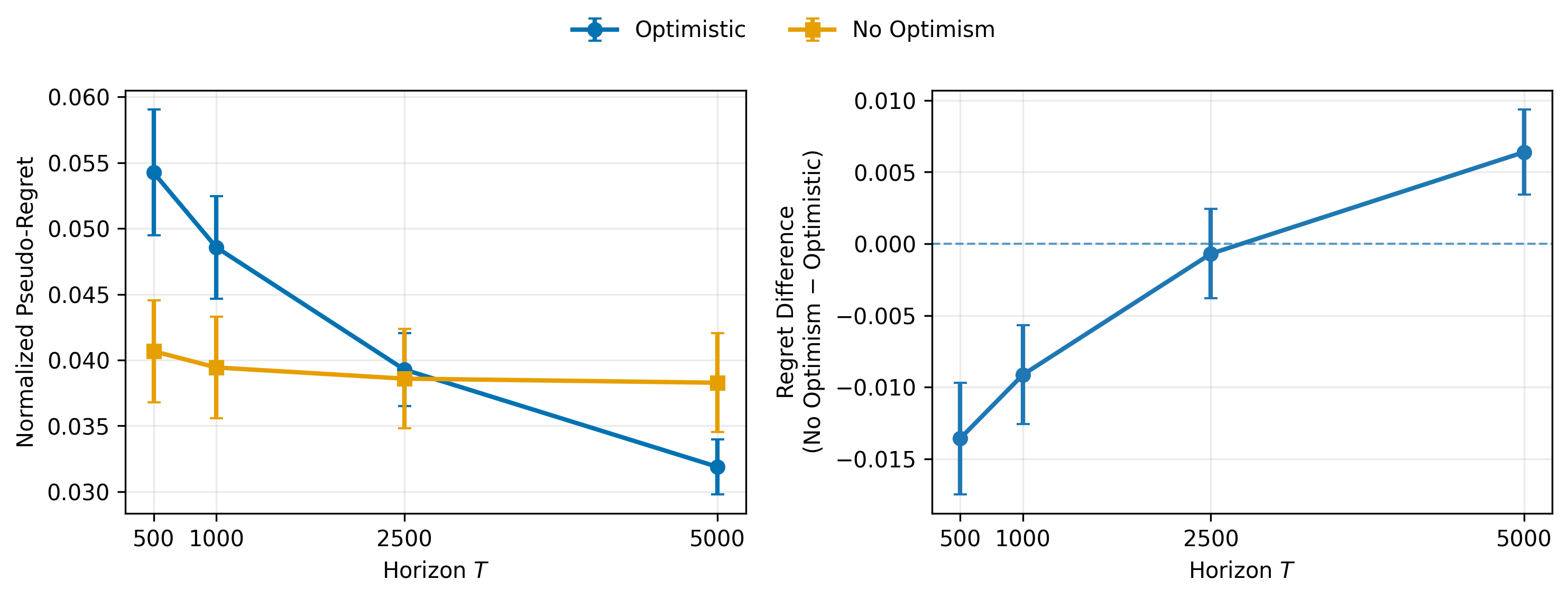}
    \caption{Ablation on optimistic exploration. \textbf{Left}: normalized pseudo-regret for Rare-Update Optimistic Planning and its non-optimistic variant. \textbf{Right:} paired regret difference, defined as \(\mathrm{Regret}_{\mathrm{NoOpt}}-\mathrm{Regret}_{\mathrm{Opt}}\), so positive values favor optimism. Error bars denote 95\% confidence intervals.}
    \label{fig:optimism_ablation}
\end{figure}

%%%%%%%%%%%%%%%%%%%%%%%%%%%%%%%%%%%%%%%%%%%%%%%%%%%%%%%%%%%%%%%%%%%%%%%%%%%%%%%%%%%%%%%%%%%%%%%%%%%%%%%%%%%%%%%%%%%%%%%%%%%%%%%%%%%%%%%%%%%%%%%%%%%%%%%%%%%%%%%%%%%%%%%%%%%%%%%%%%%%%%%%%%%%%%%%%%%%%%%%%%%%%%%%%%%%%%%%%%%%%%%%%%%%%%%%%%%%%%%%%%%%%%%%

\paragraph{Ablation on Optimistic Exploration.}
We isolate the effect of optimism by comparing Rare-Update Optimistic Planning with a non-optimistic variant that uses the same planning horizon, feasibility mechanism, and determinant-based update schedule, but removes the confidence bonus. The two variants use
\[
U_t(a)
=
\langle \widehat{\theta}_t,e_a\rangle
+
\beta_t\|e_a\|_{V_t^{-1}}
\qquad\text{and}\qquad
\widehat{r}_t(a)
=
\langle \widehat{\theta}_t,e_a\rangle,
\]
respectively. We evaluate \(T\in\{500,1000,2500,5000\}\) with window size \(w=20\) and planning horizon \(H=5\), using \(50\) users and \(10\) independent runs per user.

Figure~\ref{fig:optimism_ablation} shows that optimism incurs a short-horizon exploration cost but becomes beneficial as the horizon grows. The non-optimistic variant achieves lower regret at \(T=500\) and \(T=1000\), while the difference is small at \(T=2500\). At \(T=5000\), the optimistic variant achieves lower regret, and the paired difference is positive.

%%%%%%%%%%%%%%%%%%%%%%%%%%%%%%%%%%%%%%%%%%%%%%%%%%%%%%%%%%%%%%%%%%%%%%%%%%%%%%%%%%%%%%%%%%%%%%%%%%%%%%%%%%%%%%%%%%%%%%%%%%%%%%%%%%%%%%%%%%%%%%%%%%%%%%%%%%%%%%%%%%%%%%%%%%%%%%%%%%%%%%%%%%%%%%%%%%%%%%%%%%%%%%%%%%%%%%%%%%%%%%%%%%%%%%%%%%%%%%%%%%%%%%%%

\subsection{Synthetic Ablation Studies}
\label{app:synthetic_ablation_studies}

We conduct additional experiments to examine how the main design choices in feasible rare-switching OFUL affect empirical performance. We use the same setup as in Section~\ref{app:synthetic_experiments}, with window length $w=8$, horizon $T=16384$, Gaussian observation noise with standard deviation $0.1$, and the same reward parameter $\theta^\star$.

%%%%%%%%%%%%%%%%%%%%%%%%%%%%%%%%%%%%%%%%%%%%%%%%%%%%%%%%%%%%%%%%%%%%%%%%%%%%%%%%%%%%%%%%%%%%%%%%%%%%%%%%%%%%%%%%%%%%%%%%%%%%%%%%%%%%%%%%%%%%%%%%%%%%%%%%%%%%%%%%%%%%%%%%%%%%%%%%%%%%%%%%%%%%%%%%%%%%%%%%%%%%%%%%%%%%%%%%%%%%%%%%%%%%%%%%%%%%%%%%%%%%%%%%

\paragraph{Switching Threshold}
\label{app:ablation:threshold}

We first study the determinant threshold that controls the duration of each stationary episode. In the main algorithm, the current optimistic target is kept fixed until
\(
\det(V_t)>2\det(V_{t_k}),
\)
where $s_k$ denotes the start of episode $k$. To examine the sensitivity to this choice, we replace the factor $2$ with a general threshold $\alpha>1$ and consider
\[
\det(V_t)>\alpha\det(V_{s_k}),
\qquad
\alpha\in\{1.25,1.5,2,4,8\}.
\]

All other parameters are kept fixed, including the ramping radius $\rho=1$. Smaller values of $\alpha$ lead to more frequent target updates, while larger values produce longer episodes and fewer switches. Figure~\ref{fig:ablation-threshold} shows that both the number of target switches and the transition cost decrease steadily as $\alpha$ increases, dropping from about $135$ switches and $2700$ transition rounds at $\alpha=1.25$ to roughly $20$ switches and $400$ rounds at $\alpha=8$. The effect on pseudo-regret is non-monotone. Very small thresholds incur high transition cost, while very large thresholds update the target less often and adapt more slowly. The lowest regret is achieved at the default choice $\alpha=2$, which provides a good balance between statistical adaptation and transition cost.

\begin{figure}[t!]
    \centering
    \includegraphics[width=0.9\linewidth]{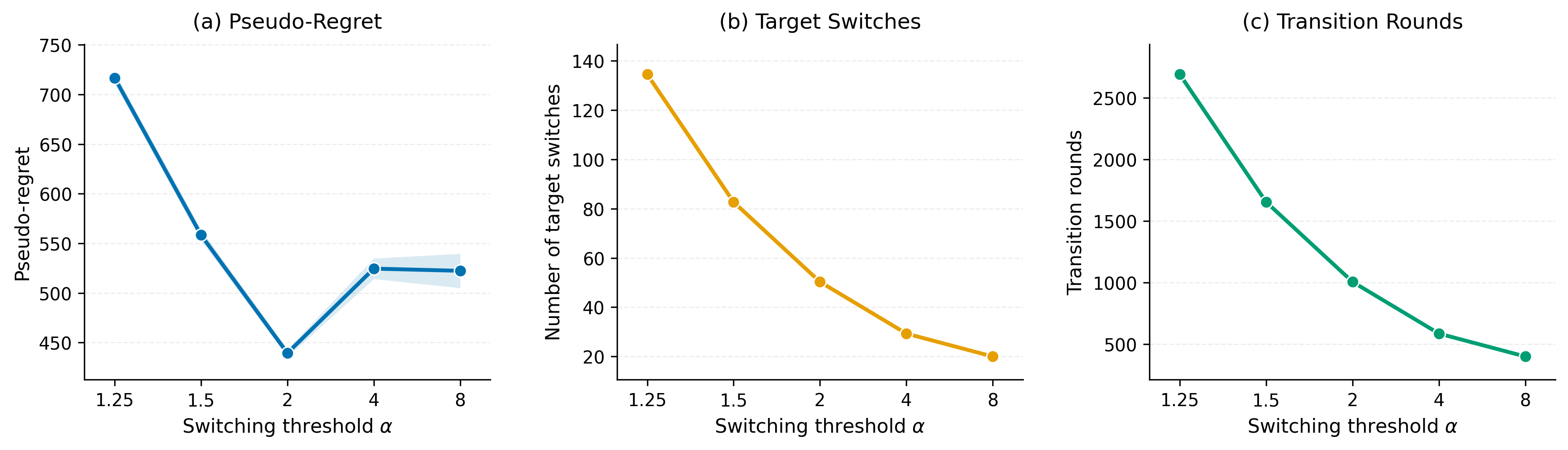}
    \caption{Sensitivity to the determinant switching threshold $\alpha$. \textbf{(a)} Realized pseudo-regret, \textbf{(b)} number of stationary-target switches, and \textbf{(c)} total number of transition rounds. The default choice $\alpha=2$ achieves the lowest pseudo-regret in this experiment. Curves show means over 30 runs with 95\% confidence intervals.}
    \label{fig:ablation-threshold}
\end{figure}

%%%%%%%%%%%%%%%%%%%%%%%%%%%%%%%%%%%%%%%%%%%%%%%%%%%%%%%%%%%%%%%%%%%%%%%%%%%%%%%%%%%%%%%%%%%%%%%%%%%%%%%%%%%%%%%%%%%%%%%%%%%%%%%%%%%%%%%%%%%%%%%%%%%%%%%%%%%%%%%%%%%%%%%%%%%%%%%%%%%%%%%%%%%%%%%%%%%%%%%%%%%%%%%%%%%%%%%%%%%%%%%%%%%%%%%%%%%%%%%%%%%%%%%%

\paragraph{Scaling with the Action Dimension}
\label{app:ablation:dimension}

We next examine how rare-switching OFUL behaves as the dimension of the simplex action space increases. We vary \( d\in\{4,8,16,32\}, \) while keeping $w=8$, $T=16384$, $\rho=1$, and the determinant threshold $\alpha=2$ fixed. To isolate the effect of the action dimension, we use
\[
\theta_d^\star=(0.8,\ldots,0.8,1.0)^\top\in\mathbb R^d.
\]
Thus, the last simplex vertex is optimal and has reward gap $0.2$ relative to every other vertex, independent of $d$. This keeps the statistical separation between the optimal and suboptimal actions fixed as the dimension grows. Within each run, we use the same noise realization for all values of $d$.

Figure~\ref{fig:ablation-dimension} reports pseudo-regret, the number of stationary-target switches, and the total number of transition rounds. As the dimension increases, the learner must explore a larger number of candidate actions while the reward gap remains fixed. This leads to higher learning cost and, in turn, can produce more target changes and feasibility-preserving transitions.

\begin{figure}[t!]
    \centering
    \includegraphics[width=0.9\linewidth]{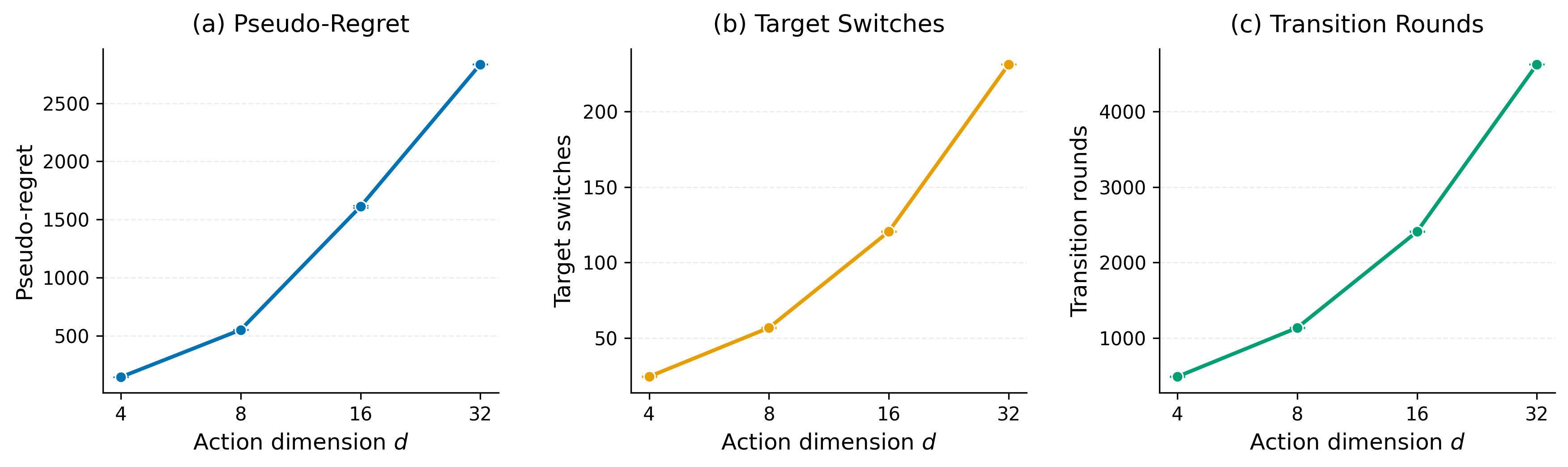}
    \caption{Scaling with the action dimension for $d\in\{4,8,16,32\}$ with a fixed reward gap of $0.2$. \textbf{(a)} Realized pseudo-regret, \textbf{(b)} number of stationary-target switches, and \textbf{(c)} total number of transition rounds. Curves show means over 30 runs with 95\% confidence intervals.}
    \label{fig:ablation-dimension}
\end{figure}

%%%%%%%%%%%%%%%%%%%%%%%%%%%%%%%%%%%%%%%%%%%%%%%%%%%%%%%%%%%%%%%%%%%%%%%%%%%%%%%%%%%%%%%%%%%%%%%%%%%%%%%%%%%%%%%%%%%%%%%%%%%%%%%%%%%%%%%%%%%%%%%%%%%%%%%%%%%%%%%%%%%%%%%%%%%%%%%%%%%%%%%%%%%%%%%%%%%%%%%%%%%%%%%%%%%%%%%%%%%%%%%%%%%%%%%%%%%%%%%%%%%%%%%%

\paragraph{Scaling with the Window Length}
\label{app:ablation:window}

Finally, we examine how the window length affects the cost of maintaining feasibility. We vary \( w\in\{2,4,8,16,32\}, \) and set $T=2048w$, so that $T/w=2048$ is fixed across settings. All other parameters are unchanged. Figure~\ref{fig:ablation-window} shows that the number of stationary-target switches changes very little with $w$, remaining between roughly $48$ and $52$. In contrast, the number of transition rounds grows sharply, from about $100$ at $w=2$ to nearly $4800$ at $w=32$. Figure~\ref{fig:ablation-window}~(d) makes the source of this increase explicit: the number of transition rounds per switch rises from $2$ to about $92$ over the same range. In our simplex setting, distinct stationary targets are simplex vertices and therefore satisfy $\|z'-z\|_1=2$. With $\rho=1$, the transition construction consequently requires
\(
3w-4
\)
rounds per switch, which gives $2,8,20,44,$ and $92$ rounds for the tested window lengths. Thus, the increase in transition cost is driven almost entirely by longer feasible transitions rather than by more frequent switching. Pseudo-regret also increases with $w$, although the horizon grows proportionally with $w$, so this trend should not be interpreted as an isolated effect of the window length.

\begin{figure}[t!]
    \centering
    \includegraphics[width=\linewidth]{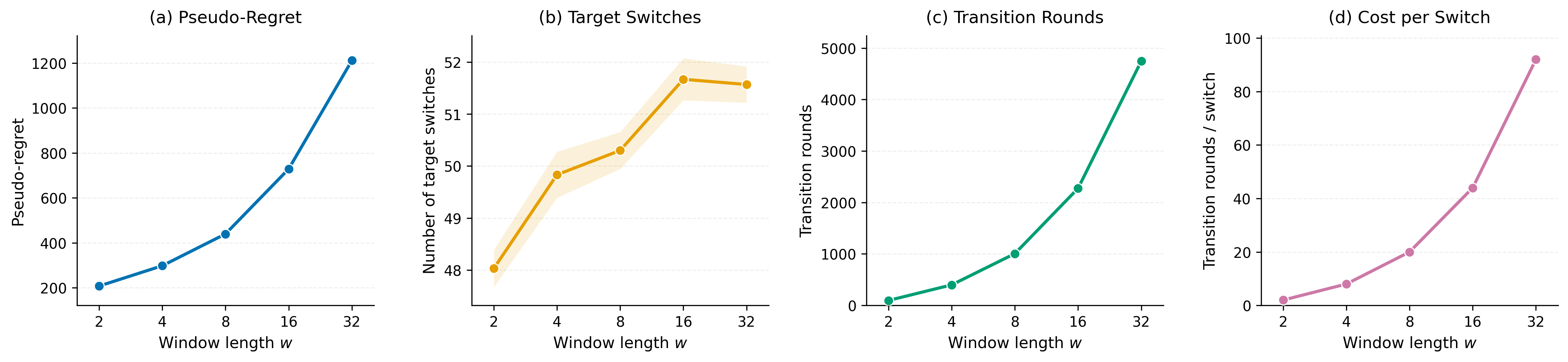}
    \caption{Scaling with the window length for $w\in\{2,4,8,16,32\}$ with $T/w=2048$ fixed. \textbf{(a)} Realized pseudo-regret, \textbf{(b)} number of stationary-target switches, \textbf{(c)} total number of transition rounds, and \textbf{(d)} transition rounds per switch. The switch count remains nearly constant, while the cost of each feasible switch grows with $w$. Curves show means over 30 runs with 95\% confidence intervals.}
    \label{fig:ablation-window}
\end{figure}

%%%%%%%%%%%%%%%%%%%%%%%%%%%%%%%%%%%%%%%%%%%%%%%%%%%%%%%%%%%%%%%%%%%%%%%%%%%%%%%%%%%%%%%%%%%%%%%%%%%%%%%%%%%%%%%%%%%%%%%%%%%%%%%%%%%%%%%%%%%%%%%%%%%%%%%%%%%%%%%%%%%%%%%%%%%%%%%%%%%%%%%%%%%%%%%%%%%%%%%%%%%%%%%%%%%%%%%%%%%%%%%%%%%%%%%%%%%%%%%%%%%%%%%%

\paragraph{Planning Horizon}
\label{app:ablation:planning-horizon}

We examine how much lookahead is needed to benefit from history-state planning. Using the non-cyclic instance from Section~\ref{app:exp-planning}, we replace exact remaining-horizon planning by an \(H\)-step optimistic planner with
\( H\in\{1,2,4,8,16,32,64,128,256,512\}, \) and compare these choices with the full remaining-horizon planner. At round \(t\), the truncated planner selects
\[
x_t
\in
\arg\max_{\substack{x\in A(s_t)\\
F(s_t,x)\in\mathcal S_{T-t}}}
\left\{
\overline U(x)
+
J_{\min\{H-1,T-t\}}^{\overline U}
\bigl(F(s_t,x)\bigr)
\right\}.
\]
The viability condition is imposed over the full remaining horizon, so every policy remains exactly feasible; only the reward lookahead is truncated. Thus, \(H=1\) corresponds to the myopic feasible policy, while ``Full'' denotes exact remaining-horizon planning as in Algorithm~\ref{alg:noncyclic-rare-switching}. We fix \(T=1024\) and keep all statistical parameters and the determinant-doubling update rule unchanged. Figure~\ref{fig:ablation-planning-horizon} shows that longer lookahead generally improves performance, with the largest gains appearing at longer horizons. The normalized pseudo-regret is about \(0.15\) for the myopic policy and drops to roughly \(0.10\) at \(H=2\). It changes only modestly over several short and moderate horizons, before falling more clearly for larger \(H\): to approximately \(0.08\) at \(H=128\), \(0.05\) at \(H=512\), and below \(0.01\) under full remaining-horizon planning. This supports the role of continuation information in evaluating the future consequences of current actions. The number of policy updates remains small across all planning horizons, ranging from roughly \(10\) to \(16\), and increases only gradually with \(H\).

\begin{figure}[t!]
    \centering
    \includegraphics[width=0.85\linewidth]{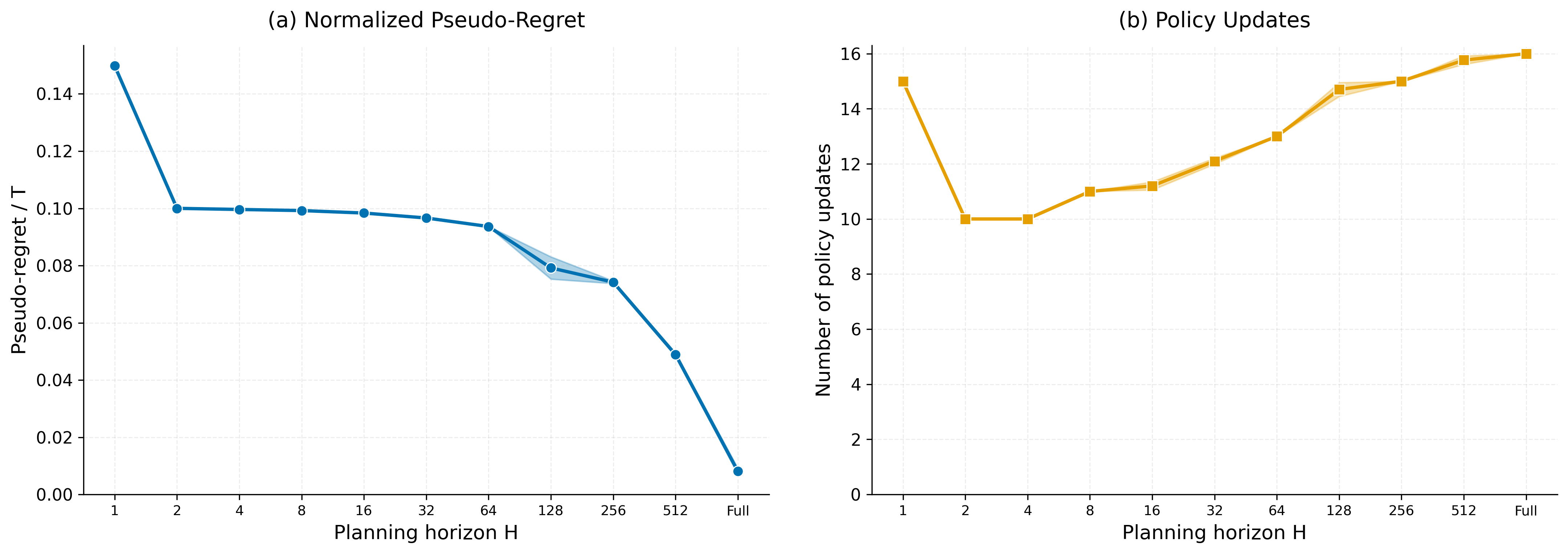}
    \caption{Sensitivity to the planning horizon \(H\) for \(T=1024\). \textbf{Left:} normalized pseudo-regret against the offline-optimal feasible trajectory. \textbf{Right:} number of policy updates. The case \(H=1\) is myopic and ``Full'' denotes exact remaining-horizon planning. Curves show means over 30 runs with 95\% confidence intervals.}
    \label{fig:ablation-planning-horizon}
\end{figure}

%%%%%%%%%%%%%%%%%%%%%%%%%%%%%%%%%%%%%%%%%%%%%%%%%%%%%%%%%%%%%%%%%%%%%%%%%%%%%%%%%%%%%%%%%%%%%%%%%%%%%%%%%%%%%%%%%%%%%%%%%%%%%%%%%%%%%%%%%%%%%%%%%%%%%%%%%%%%%%%%%%%%%%%%%%%%%%%%%%%%%%%%%%%%%%%%%%%%%%%%%%%%%%%%%%%%%%%%%%%%%%%%%%%%%%%%%%%%%%%%%%%%%%%%
\section{Implementation Details}
\label{app:implementation_details}

Complete code, preprocessing scripts, and experimental configurations are publicly available at \url{https://anonymous.4open.science/r/sliding-window-constrained-bandits}.

\subsection{Real-World Experiments}
\label{app:real_world_experiments}

\paragraph{Recommendation Diversity Experiment Setup}
We use the \(K=8\) most frequent first-level content categories in the processed KuaiRand-1K dataset~\citep{gao2022kuairand}. Each category \(j\in\mathcal{A}=\{1,\ldots,K\}\) is treated as an action and represented by the one-hot vector \(e_j\in\mathbb{R}^K\). For each user \(u\), we construct the preference vector
\(
\theta_u^\star
=
\bigl(
\theta_{u,1}^\star,\ldots,\theta_{u,K}^\star
\bigr),
\)
where \(\theta_{u,j}^\star\) is the smoothed empirical click probability for category \(j\) obtained during preprocessing. We retain users with observations from all \(K\) selected categories and at least \(50\) historical interactions in every category, and randomly select \(100\) eligible users. At round \(t\), after selecting category \(a_t\), the environment generates
\[
Y_t
\sim
\operatorname{Bernoulli}
\bigl(
\theta_{u,a_t}^\star
\bigr),
\qquad
\mathbb{E}[Y_t\mid a_t=j]
=
\langle
\theta_u^\star,e_j
\rangle.
\]
The learner observes only the realized binary reward \(Y_t\). The vector \(\theta_u^\star\) remains hidden from the learning algorithms and is used only by the environment and for evaluation. We impose the exact five-in-twenty exposure constraint
\[
\sum_{q=t-19}^{t}
\mathbf{1}\{a_q=j\}
\leq 5,
\qquad
j\in\mathcal{A},
\quad
t=20,\ldots,T.
\]
Hence, with history \(s_t=(a_{t-19},\ldots,a_{t-1})\), the feasible action set is
\[
\mathcal{A}(s_t)
=
\left\{
j\in\mathcal{A}:
\sum_{q=t-19}^{t-1}
\mathbf{1}\{a_q=j\}<5
\right\}.
\]
We evaluate
\(
T\in
\{
500,2500,5000
\}.
\)
For each user and horizon, we perform \(10\) independent simulations. We first average repeated runs within each user and then average the resulting quantities across users. For Rare-Update Optimistic Planning, we use a planning horizon of \(H=5\). For all the OFUL-based methods, we use regularization \(\lambda=1\), confidence level \(\delta=0.05\), sub-Gaussian parameter \(R=1/2\), and the bound \(S=\sqrt{K}\).

\paragraph{LLM Routing Experiment Setup}
We use the LLMRouterBench dataset~\citep{li2026llmrouterbench}, which provides query-level evaluations of multiple large language models, including response quality scores and inference cost information. We construct routing streams from four benchmark tasks: MMLU-Pro~\citep{wang2024mmlu} for general knowledge reasoning, GPQA~\citep{rein2024gpqa} for graduate-level scientific reasoning, LiveCodeBench~\citep{jain2025livecodebench} for coding problems, and ArenaHard~\citep{li2025crowdsourced} for instruction-following and conversational evaluation. These benchmarks represent diverse query distributions with different difficulty levels, where different language models exhibit different quality--cost trade-offs. We select a pool of \(K=6\) language models: GPT-5~\citep{openai2025gpt5}, Claude-Sonnet-4~\citep{anthropic2025claude4}, Gemini-2.5-Pro~\citep{google2025gemini25pro}, DeepSeek-V3~\citep{deepseekai2024deepseek}, Qwen3-235B~\citep{yang2025qwen3}, and Gemini-2.5-Flash~\citep{google2025gemini25flash}. Each model \(j\in\mathcal A=\{1,\ldots,K\}\) is considered an available routing action. For every query \(q_t\), we first extract an \(\ell_2\)-normalized semantic representation using the pretrained Sentence-BERT encoder \texttt{all-MiniLM-L6-v2}~\citep{reimers2019sentence}:
\[
u_t\in\mathbb R^{384},
\qquad
\|u_t\|_2=1.
\]
To reduce the feature dimension, we project the query embeddings onto their leading \(r=64\) principal components. We then \(\ell_2\)-normalize the projected representation, obtaining
\[
v_t\in\mathbb R^{64},
\qquad
\|v_t\|_2=1.
\]
The PCA projection is fitted once on the set of query embeddings before the simulations, without using reward or cost information. For each model \(j\in\mathcal A\), we construct the query--model interaction feature
\[
\phi(q_t,j)=e_j\otimes v_t\in\mathbb R^{64K},
\]
where \(e_j\in\mathbb R^K\) is the one-hot encoding of model \(j\). Since \(K=6\), the resulting feature dimension is \( d=64K=384. \)
After selecting model \(a_t\), the bandit feature vector is
\( x_t:=\phi(q_t,a_t)\in\mathbb R^d. \)
Let \(\mu(q_t,j)\) denote the quality score provided by LLMRouterBench for query \(q_t\) and model \(j\). When a model \(a_t\in\mathcal A\) is selected, the environment returns
\( Y_t=\mu(q_t,a_t). \)
The online algorithms use a linear predictor over \(\phi(q_t,j)\) to estimate model quality. Thus, in this real-world experiment, we do not require the benchmark reward table to be exactly linear in the chosen features. The inference costs \(c(q_t,j)\) are obtained from LLMRouterBench and are used without additional rescaling. The target sliding-window cost constraint is
\[
\sum_{i=t-w+1}^{t}c(q_i,a_i)\le B,
\qquad
t=w,\ldots,T,
\]
with window length \(w=5\). To guarantee pathwise feasibility without knowledge of future queries, the online feasible policies additionally use the zero-padded prefix convention
\[
\sum_{i=\max\{1,t-w+1\}}^{t}c(q_i,a_i)\le B,
\qquad
t=1,\ldots,T.
\]
This condition coincides with the standard length-\(w\) sliding-window constraint for \(t\geq w\) and is stronger only during the first \(w-1\) rounds. We evaluate \(T\in\{500,2500,5000\}\) and \(B\in\{0.15,0.25,0.35,0.40\}\). For each configuration, we perform \(10\) independent simulations using query streams sampled uniformly without replacement. Within each simulation, all methods are evaluated on the same sampled query stream. Since future queries are unavailable at decision time, we use planning horizon \(H=1\). Thus, in this application, Rare-Update Optimistic Planning reduces to myopic feasible optimistic selection while retaining the determinant-based rare-update mechanism.

For all OFUL-based methods, we use ridge regularization parameter \(\lambda_{\mathrm{reg}}=1\), confidence parameter \(\delta=0.05\), and exploration scale \(R=0.5\). The determinant-doubling factor is \(2\). In the LLM-routing experiments, the optimistic score uses the fixed confidence multiplier
\(
\beta_{\mathrm{exp}}
=
R\sqrt{d\log(1/\delta)}.
\)
This experimental confidence multiplier is used uniformly by the OFUL-based methods and differs from the theoretical confidence radius used in the regret analysis. Future queries are not available to the learner before their arrival. Therefore, the LLM-routing experiment does not apply the full remaining-horizon viability recursion of Section~\ref{sec:online_learning_without_cyclic}. After observing the current query \(q_t\), the learner instead restricts its choice to the currently feasible LLM actions
\[
\mathcal A_t(s_t,q_t)
=
\left\{
j\in\mathcal A:
\sum_{i=\max\{1,t-w+1\}}^{t-1}
c(q_i,a_i)
+
c(q_t,j)
\le B
\right\}.
\]
If \(\mathcal A_t(s_t,q_t)=\varnothing\), the learner executes a no-op fallback \(\bot\) satisfying
\[
c(q_t,\bot)=0,
\qquad
\mu(q_t,\bot)=0.
\]
The no-op is used only when no real LLM action is feasible. Because the preceding prefix is maintained within budget, the zero-cost fallback preserves feasibility. Consequently, the feasible policies maintain the sliding-window constraint pathwise without requiring knowledge of future queries. Because future query-dependent costs are unknown, this contextual LLM-routing experiment is not a direct instantiation of Theorem~\ref{thm:noncyclic_regret}; the theorem's regret guarantee assumes known deterministic finite-memory transitions.

\paragraph{Baselines.}
We compare against three baselines. \emph{Myopic Feasible OFUL} selects
\[
a_t
\in
\arg\max_{a\in\mathcal A_t(s_t,q_t)}
U_t(a),
\]
where \(U_t(a)\) denotes the optimistic reward estimate. If \(\mathcal A_t(s_t,q_t)=\varnothing\), it uses the same zero-cost, zero-reward no-op fallback. This baseline enforces feasibility at every round and recomputes its optimistic action at every round, providing a fully updated feasible baseline against which to evaluate the rare-update mechanism.

\paragraph{Primal-Dual OFUL.}
For the recommendation experiment, the sliding-window constraint consists of one exposure constraint for each category. Accordingly, Primal-Dual OFUL maintains a vector of nonnegative dual variables
\[
\lambda_t
=
(\lambda_{t,1},\ldots,\lambda_{t,K})
\in\mathbb R_+^K,
\]
with one multiplier for each category. Let \(U_t(a)\) denote the OFUL optimistic reward estimate for category \(a\). At round \(t\), the policy selects
\[
a_t
\in
\arg\max_{a\in\mathcal A}
\left\{
U_t(a)-\lambda_{t,a}
\right\}.
\]

After observing the reward and updating the OFUL estimator, the dual variables are updated whenever a complete length-\(w\) window is available. For \(t\geq w\), let
\( \ell_t:=t-w+1 \) denote the dual-update index. For each category \(j\in\mathcal A\),
\[
\lambda_{t+1,j}
=
\left[
\lambda_{t,j}
+
\eta_{\ell_t}
\left(
\sum_{i=t-w+1}^{t}\mathbf 1\{a_i=j\}
-
c
\right)
\right]_+,
\qquad
\eta_{\ell_t}
=
\frac{\eta}{\sqrt{\ell_t}},
\]
where \([x]_+:=\max\{x,0\}\). In the recommendation experiments, \(K=8\), \(w=20\), and \(c=5\), so each category has its own dual multiplier corresponding to its five-in-twenty exposure constraint.

For the LLM-routing experiment, there is a single sliding-window cost constraint, so Primal-Dual OFUL uses a scalar nonnegative multiplier \(\lambda_t\). At round \(t\), it selects
\[
a_t
\in
\arg\max_{a\in\mathcal A}
\left\{
U_t(a)-\lambda_t c(q_t,a)
\right\},
\]
and updates the multiplier according to
\[
\lambda_{t+1}
=
\left[
\lambda_t
+
\eta_t
\left(
\sum_{i=\max\{1,t-w+1\}}^{t}
c(q_i,a_i)-B
\right)
\right]_+,
\qquad
\eta_t=\frac{\eta}{\sqrt{t}}.
\]

\emph{Unconstrained OFUL} selects
\[
a_t
\in
\arg\max_{a\in\mathcal A}
U_t(a)
\]
without enforcing the sliding-window constraint. It provides a reference for the reward attainable when exact feasibility is ignored. Because this policy may violate the constraint, its pseudo-regret relative to the constrained offline optimum can be negative and should be interpreted together with its number of constraint violations.

\paragraph{Offline benchmark.}
For each real-world application, we define an offline oracle that has access to the complete reward information and computes the maximum achievable reward under the corresponding application-specific constraint. This oracle is used only as an evaluation reference and is not available to the online learning algorithms. For the recommendation diversity experiment, the offline oracle has access to the user preference vector \(\theta_u^\star\) and solves
\[
V_T^{\mathrm{off}}(\theta_u^\star)
=
\max_{a_1,\ldots,a_T}
\sum_{t=1}^{T}
\theta_{u,a_t}^{\star},
\]
subject to the five-in-twenty exposure constraint
\[
\sum_{q=t-19}^{t}
\mathbf{1}\{a_q=j\}
\leq5,
\qquad
j\in\mathcal A,
\quad
t=20,\ldots,T.
\]
Since all evaluated horizons are divisible by \(20\), the constrained optimum is obtained by selecting the four categories with the largest preference values and repeating a \(20\)-round periodic schedule containing exactly five recommendations from each of these four categories. Every length-\(20\) sliding window therefore contains exactly five occurrences of each selected category. For the LLM-routing experiment, the offline oracle has access to the complete query--model reward and cost tables. Let \(\mu(q_t,j)\) denote the benchmark quality score for query \(q_t\) and model \(j\). The oracle computes
\[
V_T^{\mathrm{off}}
=
\max_{a_1,\ldots,a_T}
\sum_{t=1}^{T}
\mu(q_t,a_t),
\]
subject to the same zero-padded sliding-window constraint
\[
\sum_{i=\max\{1,t-w+1\}}^{t}
c(q_i,a_i)
\le B,
\qquad
t=1,\ldots,T.
\]
The no-op fallback \(\bot\), with
\(c(q_t,\bot)=0\) and \(\mu(q_t,\bot)=0\)
is available to the oracle only when no real LLM action is feasible under the current history. Thus, the online feasible policies and the offline oracle are evaluated under the same action and feasibility conventions.

\paragraph{Evaluation metrics.}
We evaluate all methods using four metrics: average reward, normalized pseudo-regret, constraint violations, and the number of policy updates. For the recommendation diversity experiment, the average expected reward is
\[
\frac{1}{T}
\sum_{t=1}^{T}
\mathbb E[Y_t]
=
\frac{1}{T}
\sum_{t=1}^{T}
\theta_{u,a_t}^{\star}.
\]
For the LLM-routing experiment, the reward is the benchmark quality score, so the average reward is
\[
\frac{1}{T}
\sum_{t=1}^{T}
\mu(q_t,a_t),
\]
where \(\mu(q_t,\bot)=0\) when the fallback action is used. For the recommendation diversity experiment, normalized pseudo-regret is
\[
\frac{R_T}{T},
\qquad
R_T
=
V_T^{\mathrm{off}}(\theta_u^\star)
-
\sum_{t=1}^{T}
\theta_{u,a_t}^{\star}.
\]
For the LLM-routing experiment, normalized pseudo-regret is
\[
\frac{R_T}{T},
\qquad
R_T
=
V_T^{\mathrm{off}}
-
\sum_{t=1}^{T}
\mu(q_t,a_t).
\]
The offline benchmarks have access to the complete reward information and are used only as evaluation references. For the recommendation diversity experiment, the number of constraint violations is
\[
N_{\mathrm{viol}}
=
\sum_{t=20}^{T}
\mathbf{1}
\left\{
\exists j\in\mathcal A:
\sum_{q=t-19}^{t}
\mathbf{1}\{a_q=j\}>5
\right\}.
\]
For the LLM-routing experiment, we report violations of the target length-\(w\) sliding-window inference-cost constraint:
\[
N_{\mathrm{viol}}
=
\sum_{t=w}^{T}
\mathbf{1}
\left\{
\sum_{i=t-w+1}^{t}
c(q_i,a_i)>B
\right\}.
\]
Finally, we report the number of policy updates performed by each algorithm. For Rare-Update Optimistic Planning, an update corresponds to refreshing the frozen optimistic model when the determinant-doubling condition is triggered. Fewer updates therefore indicate fewer policy recomputations.

%%%%%%%%%%%%%%%%%%%%%%%%%%%%%%%%%%%%%%%%%%%%%%%%%%%%%%%%%%%%%%%%%%%%%%%%%%%%%%%%%%%%%%%%%%%%%%%%%%%%%%%%%%%%%%%%%%%%%%%%%%%%%%%%%%%%%%%%%%%%%%%%%%%%%%%%%%%%%%%%%%%%%%%%%%%%%%%%%%%%%%%%%%%%%%%%%%%%%%%%%%%%%%%%%%%%%%%%%%%%%%%%%%%%%%%%%%%%%%%%%%%%%%%%%%%%%%%%%%%%%%%%%%%%%%%%%%%%%%%%%%%%%%%%%%%%%%%%%%%%%%%%%%%%%%%%%%%%%%%%%%%%%%%%%%%%%%%%%%%%%%%%%%%%%%%%%%

\subsection{Synthetic Experiments}

\paragraph{Offline geometry and horizon misalignment.}
For the experiment in Section~\ref{app:exp-offline-geometry}, we solve the finite-horizon offline problem exactly by linear programming for every pair \((w,r)\). We use even window lengths
\(
w\in\{4,6,8,\ldots,32\},
\)
and horizons
\[
T=20w+r,
\qquad
r\in\{0,\ldots,w-1\}.
\]
For each instance, we compute the exact optimum \(V_T\) and report
\(
\Delta_T
=
V_T-Tf(z^\star).
\)
This experiment is deterministic and therefore does not involve repeated stochastic runs or confidence parameters.

\paragraph{Rare switching under cyclic ramping constraints.}
For the experiment in Section~\ref{app:exp-rare-switching}, we use
\[
d=8,
\qquad
w=8,
\qquad
T=16384,
\]
with
\(
\theta^\star
=
(0.55,0.60,0.65,0.70,0.75,0.80,0.90,1.00)^\top
\)
and observations
\[
Y_t
=
\langle\theta^\star,x_t\rangle+\eta_t,
\qquad
\eta_t\sim\mathcal{N}(0,0.1^2).
\]
We vary
\(
\rho\in\{0.25,0.5,1,2\}.
\)
For all OFUL-based policies in this experiment, we use
\[
\lambda=1,
\qquad
\delta=0.05,
\qquad
R=0.1,
\qquad
S=\|\theta^\star\|_2.
\]
We initialize \(V_1=\lambda I_d\) and, for the rare-switching feasible algorithm, update the estimator using retained stationary observations as specified in Algorithm~\ref{alg:rare-switching-oful}. Since \(\mathcal Z=\Delta_d\), the optimistic stationary-action oracle is implemented exactly by evaluating
\[
\widehat{\theta}_{t,i}
+
\beta_t(\delta)
\sqrt{(V_t^{-1})_{ii}},
\qquad
i=1,\ldots,d,
\]
at the vertices \(e_i\) and selecting a maximizer. To preserve feasibility when changing stationary targets from \(z\) to \(z'\), we use the transition
\[
N
=
\left\lceil
\frac{(w-1)\|z'-z\|_1}{\rho}
\right\rceil,
\qquad
y_j
=
z+\frac{j}{N}(z'-z),
\quad
j=1,\ldots,N,
\]
followed by \(z'\) for \(w-2\) additional rounds. All transition rounds are included in the pseudo-regret. This is the transition construction used in the rare-switching experiment.  We compare \emph{Rare-Switching OFUL}, \emph{Frequent-Update Feasible OFUL}, and \emph{Unconstrained OFUL}. The rare-switching method changes its optimistic stationary target only at determinant-based episode boundaries, whereas the frequent-update feasible baseline recomputes the target after every retained stationary observation. The unconstrained baseline ignores the ramping constraint. 

\paragraph{Planning beyond cyclic symmetry.}
For the experiment in Section~\ref{app:exp-planning}, we use
\[
\mathcal X=\{0,e_1,e_2\},
\qquad
w=4,
\qquad
\theta^\star=(0.6,1.6)^\top,
\]
with observations
\[
Y_t
=
\langle\theta^\star,x_t\rangle+\eta_t,
\qquad
\eta_t\sim\mathcal{N}(0,0.1^2).
\]
We evaluate
\(
T\in
\{512,1024,2048,4096,8192\},
\)
using
\[
\lambda=1,
\qquad
\delta=0.05,
\qquad
R=0.1,
\qquad
S=\|\theta^\star\|_2.
\]

Because \(\mathcal X\) is finite and \(w=4\), we compute viable history states explicitly and implement exact remaining-horizon planning by backward dynamic programming. The offline value
\(
V_T^{\mathrm{off}}(\theta^\star)
\)
is computed by the same exact dynamic-programming formulation. We compare three feasible policies. \emph{Rare-Update Optimistic Planning} follows Algorithm~\ref{alg:noncyclic-rare-switching} and optimizes the frozen optimistic reward over the full remaining horizon. \emph{Myopic Feasible OFUL} uses the same statistical estimator and confidence construction but selects
\[
x_t
\in
\arg\max_{\substack{x\in\mathcal A(s_t)\\
F(s_t,x)\in\mathcal S_{T-t}}}
U_k(x),
\]
without including a continuation value. \emph{Stationary OFUL} restricts its actions to the stationary feasible set \(\mathcal Z\). This comparison separates the gain from explicit history-state planning from both statistical optimism and the restriction to stationary behavior. For this experiment, we report
\[
\frac{\widehat R_T}{T},
\qquad
\widehat R_T
=
V_T^{\mathrm{off}}(\theta^\star)
-
\sum_{t=1}^{T}
\langle\theta^\star,x_t\rangle,
\]
together with the number of policy updates. All three policies satisfy the sliding-window constraint exactly.

\paragraph{Computational Infrastructure and Software Environment}
All reported experiments were conducted on a machine equipped with a 10-thread CPU. The experiments were implemented in Python 3.10.20 using standard scientific computing libraries. No GPU acceleration was required.

\paragraph{Dataset and License Information}
We use publicly available datasets for evaluation, including the KuaiRand-1K recommendation dataset and the LLMRouterBench benchmark. The KuaiRand dataset is released under the CC-BY-SA-4.0 license and is available from the official repository:
\url{https://github.com/chongminggao/KuaiRand}.
The LLMRouterBench benchmark and associated resources are available from the official repository:
\url{https://github.com/ynulihao/LLMRouterBench}.
LLMRouterBench aggregates multiple underlying datasets and models; therefore, the licenses of individual components follow their respective original terms of use. We cite the original dataset and benchmark sources and describe the preprocessing procedures used to construct our experimental instances.
\clearpage
\section{Algorithms}
\label{appendix:algorithms}

This section provides the complete pseudocode for the two algorithms developed in the paper. Algorithm~\ref{alg:rare-switching-oful} presents \emph{Feasible Rare-Switching OFUL} for the cyclic setting. Algorithm~\ref{alg:noncyclic-rare-switching} extends the rare-update principle to general sliding-window constraints.

\subsection{Rare-Switching OFUL}
\label{alg:rare_switching_oful}

\begin{algorithm}[H]
\caption{Feasible Rare-Switching OFUL}
\label{alg:rare-switching-oful}
\begin{algorithmic}[1]

\Require Horizon \(T\), regularization \(\lambda\), confidence level \(\delta\),
stationary feasible set \(\mathcal Z\), optimistic-action oracle
\(\mathsf{Opt}\), transition oracle \(\Gamma\)
\Ensure Feasible action sequence \(x_1,\ldots,x_T\)

\State Initialize \(V\gets\lambda I_d,\quad b\gets0_d,\quad t\gets1,\quad k\gets1\)

\While{\(t\leq T\)}
    \State Set \(W_k\gets V,\qquad \widehat\theta_k\gets W_k^{-1}b\)

    \State \(\beta_k(\delta)
    \gets
    R\sqrt{
    2\log\!\left(
    \frac{\det(W_k)^{1/2}}
    {\det(\lambda I_d)^{1/2}\delta}
    \right)}
    +
    \sqrt{\lambda}S\)

    \State Select
    \(z_k
    \gets
    \mathsf{Opt}
    \left(
    \widehat\theta_k,
    W_k,
    \beta_k(\delta)
    \right)\)

    \If{\(k>1\) and \(z_k\neq z_{k-1}\)}
        \State Obtain a feasible transition
        \((a_1,\ldots,a_m)
        \gets
        \Gamma(z_{k-1},z_k)\),
        where \(m\leq\tau\)

        \For{\(j\gets1\) to \(\min\{m,T-t+1\}\)}
            \State Set \(x_t\gets a_j\), execute \(x_t\), and observe \(Y_t\)
            \State \(t\gets t+1\)
            \State \textit{Exclude transition observations from \(V\) and \(b\)}
        \EndFor
    \EndIf

    \If{\(t>T\)}
        \State \textbf{break}
    \EndIf

    \State \(n_k\gets0\)

    \While{\(t\leq T\) and
    \(\left(
    \det(V)\leq2\det(W_k)
    \text{ or }
    [k=1\text{ and }n_k<w-1]
    \right)\)}
        \State Set \(x_t\gets z_k\), execute \(x_t\), and observe \(Y_t\)

        \State Update
        \(V\gets V+x_tx_t^\top,\qquad
        b\gets b+x_tY_t\)

        \State \(t\gets t+1\)
        \State \(n_k\gets n_k+1\)
    \EndWhile

    \State \(k\gets k+1\)
\EndWhile

\end{algorithmic}
\end{algorithm}

\subsection{Rare-Update Optimistic Planning}
\label{alg:rare_planning}

At each planning time \(t_k\), the learner freezes the current confidence set, defines the optimistic reward \(U_k\), and computes an exact
Bellman-optimal remaining-horizon plan from the current history state \(s_{t_k}\). The resulting plan is cached and followed until the
determinant-doubling condition triggers the next planning episode. For \(w=1\), we use the unique empty history \(s_t=()\).

\begin{algorithm}[t!]
\caption{Rare-Update Optimistic Planning}
\label{alg:noncyclic-rare-switching}
\begin{algorithmic}[1]

\Require \(T,w,\lambda,\delta,\mathsf{Init},\mathcal P\)
\Ensure Feasible sequence \(x_1,\ldots,x_T\)

\State \(V\gets\lambda I_d,\quad b\gets0_d\)

\If{\(w>1\)}
    \State \((x_1,\ldots,x_{w-1})\gets\mathsf{Init}(T)\),
    \(s_w\in\mathcal S_{T-w+1}\)
    \For{\(q=1,\ldots,w-1\)}
        \State Execute \(x_q\), observe \(Y_q\)
        \State \(V\gets V+x_qx_q^\top,\quad b\gets b+x_qY_q\)
    \EndFor
    \State \(t\gets w\)
\Else
    \State \(s_t\equiv()\), \(t\gets1\)
\EndIf

\State \(k\gets1\)

\While{\(t\le T\)}
    \State \(t_k\gets t,\quad W_k\gets V,\quad
    \widehat\theta_k\gets W_k^{-1}b\)

    \State
    \[
    \beta_k\gets
    R\sqrt{2\log\!\left(
    \frac{\det(W_k)^{1/2}}
    {\det(\lambda I_d)^{1/2}\delta}
    \right)}
    +\sqrt{\lambda}S
    \]

    \State Define
    \[
    \overline U_k(x)\gets
    \operatorname{clip}_{[-LS,LS]}
    \left(
    \langle\widehat\theta_k,x\rangle+
    \beta_k\|x\|_{W_k^{-1}}
    \right)
    \]

    \State \(n_k\gets T-t_k+1\)

    \State Cache
    \(
    (\widetilde x^{(k)}_1,\ldots,\widetilde x^{(k)}_{n_k})
    \gets
    \mathcal P(s_{t_k},n_k;\overline U_k)
    \)

    \While{\(t\le T\) and \(\det(V)\le2\det(W_k)\)}
        \State \(x_t\gets\widetilde x^{(k)}_{t-t_k+1}\)
        \State Execute \(x_t\), observe \(Y_t\)
        \State \(V\gets V+x_tx_t^\top,\quad b\gets b+x_tY_t\)
        \If{\(w>1\)}
            \State \(s_{t+1}\gets F(s_t,x_t)\)
        \EndIf
        \State \(t\gets t+1\)
    \EndWhile

    \State \(k\gets k+1\)
\EndWhile

\end{algorithmic}
\end{algorithm}

\end{document}